\useunder{\uline}{\ul}{}
\newtheorem{definition}{Definition}
\newtheorem{lemma}{Lemma}
\title{EDEN: A Plug-in Equivariant Distance Encoding to Beyond the 1-WL Test}
\author {
    Chang Liu, Yuwen Yang, Yue Ding, Hongtao Lu$^*$
}
\begin{document}

\maketitle

\begin{abstract}
The message-passing scheme is the core of graph representation learning. While most existing message-passing graph neural networks (MPNNs) are permutation-invariant in graph-level representation learning and permutation-equivariant in node- and edge-level representation learning, their expressive power is commonly limited by the 1-Weisfeiler-Lehman (1-WL) graph isomorphism test. Recently proposed expressive graph neural networks (GNNs) with specially designed complex message-passing mechanisms are not practical. To bridge the gap, we propose a plug-in Equivariant Distance ENcoding (EDEN) for MPNNs. EDEN is derived from a series of interpretable transformations on the graph's distance matrix. We theoretically prove that EDEN is permutation-equivariant for all level graph representation learning, and we empirically illustrate that EDEN's expressive power can reach up to the 3-WL test. Extensive experiments on real-world datasets show that combining EDEN with conventional GNNs surpasses recent advanced GNNs. 
\end{abstract}

\section{Introduction}
The pervasiveness of graph-structured data in modern society, such as social networks \cite{tang2009social}, recommendation systems \cite{fan2019graph}, and bio-medicine \cite{zitnik2018biosnap}, has led to the rapid development of Graph Neural Networks (GNNs) \cite{GNN-review-JZ}. 
When representing a graph, perturbing the IDs of nodes in the graph should ensure that the representation of the graph does not change. 
GNNs are preferably either permutation-invariant for graph-level representation learning or permutation-equivariant for node- and edge-level representation learning\cite{azizian2020expressive}.
Message-passing Neural Networks (MPNNs) \cite{MPNN, Sage, GCN, GAT, GIN} with the property of permutation-invariance/equivariance have become the \textit{de facto} standard for graph representation learning tasks. 
However, the expressive power of MPNNs is commonly limited to the 1-Weisfeiler-Lehman (1-WL) graph isomorphism test \cite{WL}, which means MPNNs fail to distinguish nodes or graphs in many special cases (\textit{e.g.,} $d-$regular graphs). 
This main issue hinders the performance of MPNNs and poses a big challenge for the deep graph learning community.

Recent works have attempted to tackle the problem from three different lines of thought. 
\begin{itemize}[leftmargin=*]
    \item Learning differentiated global information by randomly labeling nodes \cite{ DEGNN, PGNN}.
    But such randomness \textbf{loses the property of equivariance and the ability to handle graph-level tasks}. 
    \item Building GNNs with higher-order graph isomorphism test capability (\textit{e.g.,} k-WL test and Folklore test \cite{WL}), but higher-order GNNs lead to \textbf{high computational complexity} \cite{GNN21ICLR, breaking, FGNN, KGNN}. 
    \item Designing the feature aggregator from new perspectives, such as ego graphs \cite{IDGNN} and overlapping subgraphs \cite{GraphSNN}. The potential problem is that it deviates from the message-passing mechanism, making it \textbf{difficult to apply} to downstream tasks.
\end{itemize}
Despite the significant progress in the study of the expressiveness of GNNs, there is still a lack of simple but effective approaches that make GNNs powerfully expressive and, at the same time, applicable to downstream tasks. This paper aims to show that it is easy to break through the limitation of the 1-WL test in MPNNs by applying a simple plug-in encoding. To be specific, we propose \textbf{E}quivariant \textbf{D}istance \textbf{EN}coding (abbreviate for \textbf{EDEN}), which is inspired by recent beyond-1-WL GNNs and the position encoding in Transformer \cite{transformer}. 
As shown in Fig.\ref{Fig.eden}, EDEN is derived from a series of interpretable transformations on the distance matrix consisting of the length of the shortest path between nodes.
EDEN is actually a deterministic position encoding with global information because the distance matrix is innate for a certain graph. The unique advantage of EDEN is the property of permutation-equivariance for all-level graph representation learning, providing powerful expressiveness in graph isomorphism tests and downstream tasks.
\begin{figure*}[t]
    \centering  
    \includegraphics[width=0.9\textwidth]{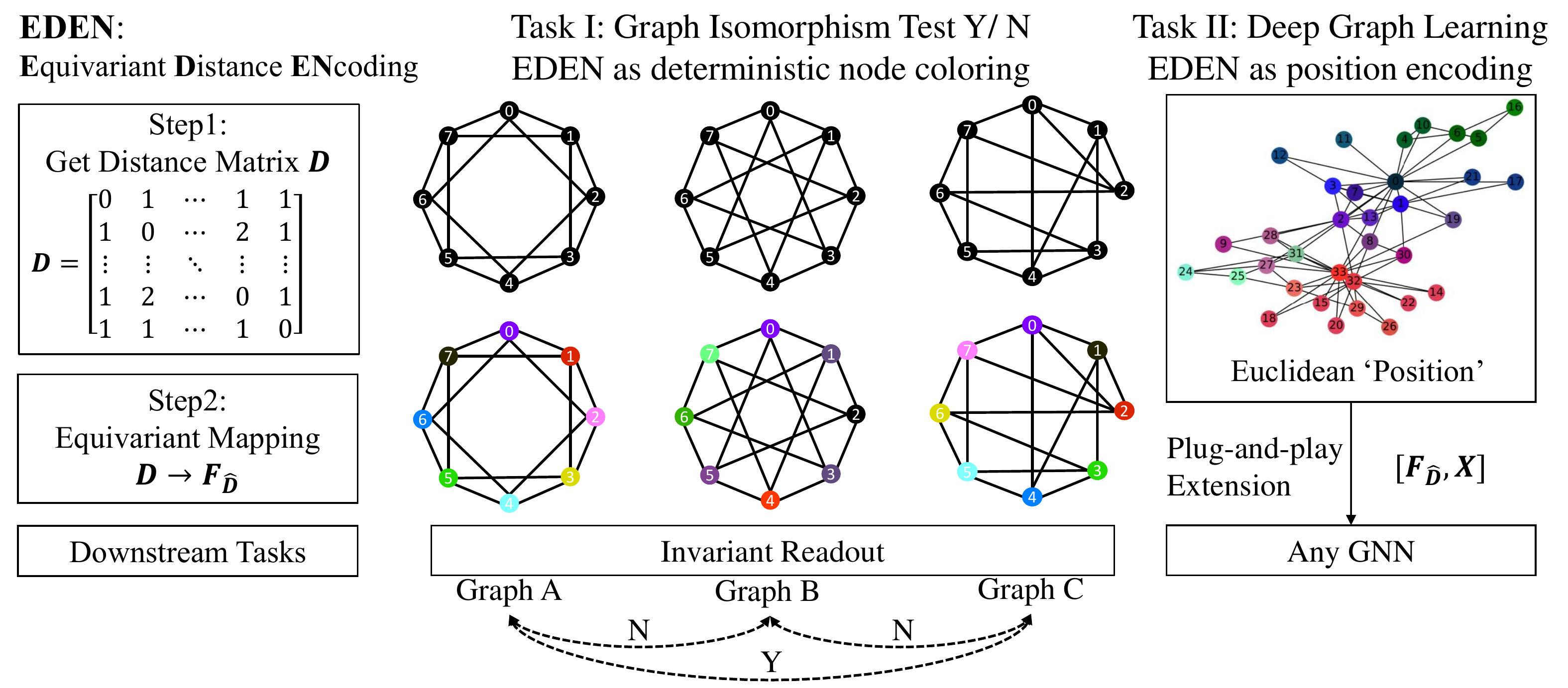}
    \caption{\textbf{E}quivariant \textbf{D}istance \textbf{EN}coding, namely \textbf{EDEN}, is obtained from the distance matrix of the graph through equivariant mapping, where the distance matrix records the shortest path length between two nodes.
    In the graph isomorphism test,  
    EDEN is used as a deterministic node coloring and enables classical GNNs to distinguish regular graphs (even 4-regular graphs). EDEN is also a plug-in embedding representing nodes' global position to improve MPNNs' performance on downstream tasks.}
    \label{Fig.eden}
\end{figure*}
Our major contributions include:
\begin{itemize}[leftmargin=*]
    \item We propose a simple but effective distance encoding called EDEN, and we theoretically prove that EDEN satisfies the property of permutation-equivariance in all-level graph representation learning, which is unavailable for existing plug-in encoding methods for GNN. 
    \item We generalize the cosine positional encoding to non-Euclidean spaces via phase propagation. 
    We empirically show that the expressive power of EDEN reaches up to the 3-WL test on graph isomorphism test datasets. 
    \item As a plug-in, EDEN can greatly improve the performance on downstream tasks when combined with conventional MPNNs. Moreover, experiments show that it is even superior to state-of-the-art methods.
\end{itemize}


\section{Preliminaries}
 Let $G = (V, E)$ denote a graph, where $V$ is the set of nodes and $E$ is the set of edges. $|V| = n$ and $|E|$ represent the number of nodes and edges, respectively. 
 A graph $G$ can also be represented as a composition of the adjacency matrix $\boldsymbol{A} \in \mathbb{R}^{n\times n}$ and $\boldsymbol{X} \in \mathbb{R}^{n\times l}$ formed by the feature embeddings of nodes, where $l$ is the dimension of the node embedding. 
 The row order of $\boldsymbol{X}$ is the same as $\boldsymbol{A}$.

\paragraph{Permutation-Equivariance, Permutation-Invariance and Graph Isomorphism}
We denote $\sigma \star$ as a permutation operation on graph $G$, \textit{i.e.,} re-arranging nodes in $V$ with given order $\sigma$. Permuting node order is equivalent to swapping the corresponding rows and columns in $\boldsymbol{A}$, and rows in $\boldsymbol{X}$.  $f,g$ are two functions defined on the graph $G=(\boldsymbol{A}, \boldsymbol{X})$.

\begin{definition}
Function $g$ is \textbf{permutation-equivariant} if $g(\sigma \star G)=\sigma \star g(G)$, function $f$ is \textbf{permutation-invariant} if $f(\sigma \star G)=f(G)$, and Graph $G$ and $G'$ are \textbf{isomorphisc} if $\exists \sigma \text{, \textit{s.t.}}\quad G =\sigma\star G'$.
\end{definition}


\begin{lemma}
\label{lemma:fg}
The composition of $f$ and $g$ is permutation-invariant if $f$ and $g$ satisfy the property of permutation-invariant and permutation-equivariant, respectively.
$$Proof.\quad f( g(\sigma \star G))=f(\sigma \star g(G))=f(g(G)).$$
\end{lemma}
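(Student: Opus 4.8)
The plan is to unwind the two hypotheses in sequence and chain the resulting equalities. Fix an arbitrary permutation $\sigma$ and an arbitrary graph $G = (\boldsymbol{A}, \boldsymbol{X})$. The goal is to show $f(g(\sigma \star G)) = f(g(G))$, i.e. that the composite $f \circ g$ returns the same value on $G$ and on every relabeling $\sigma \star G$ of it, which is precisely what permutation-invariance of $f \circ g$ means.

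First I would apply the assumption that $g$ is permutation-equivariant: by definition this gives $g(\sigma \star G) = \sigma \star g(G)$, so the inner term differs from $g(G)$ only by the induced relabeling by the \emph{same} $\sigma$. Substituting this into $f(g(\sigma \star G))$ yields $f(\sigma \star g(G))$. Next I would apply the assumption that $f$ is permutation-invariant, now evaluated at the object $g(G)$ in place of a generic input: this gives $f(\sigma \star g(G)) = f(g(G))$. Concatenating the two steps produces the chain
$$f(g(\sigma \star G)) = f(\sigma \star g(G)) = f(g(G)),$$
and since $\sigma$ and $G$ were arbitrary, the claim follows.

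The only point requiring care — and the single step I would flag as the main obstacle, though it is really bookkeeping rather than genuine difficulty — is the compatibility of the domains and of the permutation action across the composition. For $f(\sigma \star g(G))$ to be meaningful, the codomain of $g$ must be a space carrying a node-permutation action $\sigma \star (\cdot)$ (e.g. node-indexed embeddings, or a relabeled graph), and the invariance hypothesis on $f$ must be assumed to hold on that codomain, with the \emph{same} index set and the \emph{same} $\sigma$ acting on the input. Once one records that $g$ is an equivariant map between two such permutation-action spaces and that $f$ is invariant on the second, the argument goes through verbatim; no case analysis, induction, or estimate is needed. I would also remark that this lemma is the abstract justification for pipelines of the form ``equivariant encoder, then invariant readout'' — exactly how EDEN is later composed with an MPNN for graph-level tasks — being well-defined on isomorphism classes of graphs.
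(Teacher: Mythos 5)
Your proof is correct and is essentially identical to the paper's: first apply the equivariance of $g$ to rewrite $g(\sigma\star G)$ as $\sigma\star g(G)$, then apply the invariance of $f$ to drop the $\sigma$, giving $f(g(\sigma\star G))=f(\sigma\star g(G))=f(g(G))$. Your additional remark about the codomain of $g$ carrying the permutation action is a sensible bit of bookkeeping the paper leaves implicit, but it does not change the argument.
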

   
In other words, equivariance makes the output order correspond to the input order, while invariance makes the output independent of the input order. 
Since nodes in the graph are naturally order-independent, changing the node's ID should keep its embedding unchanged. 
For node-level graph representation learning, nodes' feature embeddings will be permuted if we change their order in the graph, which satisfies permutation-equivariance. 
Regarding graph-level representation learning, permuting nodes' ID does not affect the output of the graph embedding (e.g., graph average pooling), which satisfies permutation-invariance. 
\paragraph{Message-passing based GNN}
We define MPNNs that satisfy the form defined in GIN~\cite{GIN}:
\begin{equation}
\label{eq:mpnn}
\begin{aligned}
&\mathbf{m}_{u}^{(k)}=\operatorname{MSG}^{(k)}\left(\mathbf{h}_{u}^{(k-1)}\right) \\
&\mathbf{h}_{v}^{(k)}=\operatorname{AGG}^{(k)}\left(\left\{\mathbf{m}_{u}^{(k)}, u \in \mathcal{N}(v)\right\}, \mathbf{h}_{v}^{(k-1)}\right),
\end{aligned}
\end{equation}
where $\mathbf{h}_{v}^{(k)}$ is node $v$'s feature embedding in the $k$-th layers.
$\mathbf{h}_{v}^{(0)}=\mathbf{x}_{v}$, and $x_v$ is node $v$'s feature embedding.
$\mathbf{m}_{v}^{(k)}$ is the message embedding, and $\mathcal{N}(v)$ is the set of (1-hop) neighborhoods of $v$.
Different MPNNs differ in MSG and AGG. 

\section{EDEN:An Equivariant Distance Encoding}
\label{sec:EDEN}
In this section, we describe EDEN in detail. 
We propose our crucial insight that \textbf{the position of a node in the graph is an inherently universal feature that can be represented by measuring the distance to other nodes}.
Inspired by the sequential positional encoding method in Transformer \cite{transformer} and the distance-based GNNs \cite{DEGNN, PGNN}, we generalize the cosine positional encoding to non-Euclidean spaces and utilize mathematical transformations to make it permutation-equivariant. 

Let $\boldsymbol{D} \in \mathbb{R}^{n\times n}$ denote the distance matrix, where ``distance'' means the shortest path length between two nodes in $G$ obtained by the \emph{Dijkstra's algorithm} \cite{Dijkstra}. The diagonal elements of $\boldsymbol{D}$ are all 0. We define the ``diameter'' $d_i$ of node $v_i$ as follows:

\begin{definition}
 \textbf{Diameter} is the longest distance among reachable nodes of a given node, 
\begin{equation}
	\label{eq:diameter}
	d_i = \max_{j} \boldsymbol{D}_{i,j}, \quad \text{where } \boldsymbol{D}_{i,j}\neq \infty.
\end{equation}
\end{definition}

We map the $i$-th row of the distance matrix $\boldsymbol{D}$ \textit{w.r.t.} its diameter $d_i$ by the cosine function to obtain a normalized matrix $\hat{\boldsymbol{D}}$. We call this operation \textbf{phase propagation}:
\begin{equation}
	\label{eq:phase}
	\hat{\boldsymbol{D}}_{i,j}=\left\{
	\begin{aligned}
	    & \cos(\frac{\pi}{d_i} \times \boldsymbol{D}_{i,j})  ,&\boldsymbol{D}_{i,j}\neq \infty\\
	    & -1.5  ,&\boldsymbol{D}_{i,j} = \infty.
	\end{aligned}
	\right.
\end{equation}
where $\frac{\pi}{d_i} \times \boldsymbol{D}_{i,j}$ means node $v_i$'s all reachable nodes (including $v_i$ itself) are mapped to phase $[0, \pi]$. 
Due to the nature of the cosine function, the distance between node $v_i$ and its closest neighbor (it is actually itself) is mapped to 1, and the distance between $v_i$ and its farthest reachable node is mapped to -1. We use -1.5 (as long as it is less than -1) to represent the distance between unreachable nodes. 


The processed matrix $\hat{\boldsymbol{D}}$ has the following properties:
\begin{itemize}[leftmargin=*]
	\item \textbf{Normalization}: The computed phase $\frac{\pi}{dia_i} \times \boldsymbol{D}_{i,j}$ in Eq.  (\ref{eq:phase}) ranges in $[0, \pi]$, so $\hat{\boldsymbol{D}}_{i,j} \in [-1,1]$.
	\item \textbf{Distinctiveness}: Each row in $\hat{\boldsymbol{D}}$ is distinguished from others because only $\hat{\boldsymbol{D}}_{i,i} = 1$ and only $\boldsymbol{D}_{i,i} = 0$.
	\item \textbf{Interpretability}: The element $\hat{\boldsymbol{D}}_{i,j}$ is an interpretable transformation of the distance between node $i$ and $j$.
\end{itemize}
 Matrix $\boldsymbol{D}$ and $\hat{\boldsymbol{D}}$ do not have permutation equivariance if we consider each row as the feature embedding of the corresponding node.
To develop an intuition for this phenomenon, consider that we swap node $v_0$ and $v_1$, and the corresponding rows $\boldsymbol{D}_{0,:}$ and $\boldsymbol{D}_{1,:}$ should also be swapped if $\boldsymbol{D}$ is permutation-equivariant. Nevertheless, this contradicts the fact that the diagonal elements $\boldsymbol{D}_{i,i}=0$ (\textit{i.e.}, \textbf{Distinctiveness}). In addition, $\hat{\boldsymbol{D}}$  as the embedding matrix with the dimension of $n$ is undoubted too large for downstream tasks. So we employ the Principal Component Analysis (PCA) \cite{PCA} to reduce the dimension of $\hat{\boldsymbol{D}}$ to the size of $n \times m$, where $m$ is a manually set hyperparameter: 
\begin{equation}
\label{eq:pca}
    \boldsymbol{F_{\hat{\boldsymbol{D}}}}= \text{PCA} (\hat{\boldsymbol{D}}).
\end{equation}
$\boldsymbol{F_{\hat{\boldsymbol{D}}}}$ is our proposed \textbf{Equivariant Distance ENcoding} (EDEN). 

\subsection{A theoretical proof of EDEN's equivalence}
\label{sec:proof}
\begin{definition}
 \textbf{Permutation matrix} $\boldsymbol{P}$ is a square binary matrix that has exactly one entry of 1 in each row and each column and 0s elsewhere \cite{AlgebraicGraph}. 
\end{definition}
\begin{lemma}
\label{the:orthogonal}
As permutation matrices are orthogonal matrices (\textit{i.e.}, $\boldsymbol{P}_{\sigma} \boldsymbol{P}_{\sigma}^{\top}=\boldsymbol{I}$ ), the inverse matrix exists and can be written as
$ \boldsymbol{P}^{-1}=\boldsymbol{P}^{\top}$.
\end{lemma}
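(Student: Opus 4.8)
The plan is to verify directly that $\boldsymbol{P}_{\sigma}\boldsymbol{P}_{\sigma}^{\top} = \boldsymbol{I}$ from the combinatorial definition of a permutation matrix given above, and then read off the formula for the inverse. First I would fix notation: since each row of $\boldsymbol{P}_{\sigma}$ contains exactly one entry equal to $1$, there is a well-defined index map $\sigma$ with $(\boldsymbol{P}_{\sigma})_{i,k} = 1$ precisely when $k = \sigma(i)$, and $(\boldsymbol{P}_{\sigma})_{i,k} = 0$ otherwise. The complementary condition that each column also contains exactly one $1$ is then exactly the statement that $\sigma$ is a bijection of $\{1,\dots,n\}$, a fact I would want to state explicitly because the argument hinges on it.

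Next I would compute the $(i,j)$ entry of the product. By the definitions of matrix multiplication and transpose, $(\boldsymbol{P}_{\sigma}\boldsymbol{P}_{\sigma}^{\top})_{i,j} = \sum_{k=1}^{n} (\boldsymbol{P}_{\sigma})_{i,k}\,(\boldsymbol{P}_{\sigma})_{j,k}$. Each summand equals $1$ only when $k = \sigma(i)$ and $k = \sigma(j)$ at the same time, and is $0$ otherwise, so the sum equals $1$ if $\sigma(i) = \sigma(j)$ and $0$ if not. Invoking injectivity of $\sigma$, the condition $\sigma(i) = \sigma(j)$ is equivalent to $i = j$, hence $(\boldsymbol{P}_{\sigma}\boldsymbol{P}_{\sigma}^{\top})_{i,j} = \delta_{ij}$, i.e., $\boldsymbol{P}_{\sigma}\boldsymbol{P}_{\sigma}^{\top} = \boldsymbol{I}$. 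The identity $\boldsymbol{P}_{\sigma}^{\top}\boldsymbol{P}_{\sigma} = \boldsymbol{I}$ follows either by the same computation with the roles of rows and columns exchanged, or simply from the standard fact that a one-sided inverse of a square matrix is two-sided. From $\boldsymbol{P}_{\sigma}\boldsymbol{P}_{\sigma}^{\top} = \boldsymbol{P}_{\sigma}^{\top}\boldsymbol{P}_{\sigma} = \boldsymbol{I}$ and uniqueness of inverses, $\boldsymbol{P}_{\sigma}$ is invertible with $\boldsymbol{P}_{\sigma}^{-1} = \boldsymbol{P}_{\sigma}^{\top}$, which is the claim. (An equivalent, more geometric route is to observe that $\boldsymbol{P}_{\sigma}$ sends each standard basis vector to another standard basis vector, so its columns form an orthonormal set, which is the definition of an orthogonal matrix.)

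Since every step is a one-line verification, there is no genuine obstacle here; the only point deserving a moment's care is justifying that the entrywise ``one $1$ per row and per column'' definition forces the associated index map $\sigma$ to be a true bijection — this is precisely what legitimizes the step $\sigma(i) = \sigma(j) \Rightarrow i = j$. Everything else is bookkeeping with Kronecker deltas, and Lemma~\ref{the:orthogonal} will be used downstream only to conjugate matrices by $\boldsymbol{P}_{\sigma}$ when formalizing the permutation action $\sigma\star$.
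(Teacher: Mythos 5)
Your proof is correct: the entrywise computation $(\boldsymbol{P}_{\sigma}\boldsymbol{P}_{\sigma}^{\top})_{i,j}=\sum_k (\boldsymbol{P}_{\sigma})_{i,k}(\boldsymbol{P}_{\sigma})_{j,k}=\delta_{ij}$, justified by the bijectivity of $\sigma$ encoded in the ``one $1$ per row and per column'' condition, is the standard argument, and your remark that this injectivity step is the only point requiring care is well placed. The paper itself states this lemma without proof, treating it as a known fact from \cite{AlgebraicGraph}, so your write-up simply supplies the routine verification the authors omitted; nothing further is needed.
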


\begin{definition}
\label{def:sim}
Matrices $\boldsymbol{A}, \boldsymbol{B} \in \mathbb{R}^{n\times n}$ are called \textbf{similar} (\textit{i.e.}, $\boldsymbol{A}\sim \boldsymbol{B} $) if there exists an invertible matrix $\boldsymbol{T}\in \mathbb{R}^{n\times n}$ such that $\boldsymbol{B}=\boldsymbol{T}^{-1} \boldsymbol{A} \boldsymbol{T}$.
 And then  $\boldsymbol{A}$ and $\boldsymbol{B}$ share same eigenvalues \cite{linear}.
\end{definition}


Considering a distance matrix $\boldsymbol{D}$ and permutation matrix $\boldsymbol{P}$, the permuted distance matrix $\boldsymbol{D}'$ is:
\begin{equation}
    \label{eq:permute}
   \boldsymbol{D'} = \boldsymbol{P^{\top} D P}.
\end{equation}



\begin{lemma}
 \textbf{Node-level Permutation-equivariant} Considering a distance matrix $\boldsymbol{D}$ and permuted distance matrix $\boldsymbol{D'} = \boldsymbol{P^{\top} D P}$, let $ \boldsymbol{F_D}= \text{PCA} (\boldsymbol{D}) , \boldsymbol{F_{D'}}= \text{PCA} (\boldsymbol{D'}) $
, we have 
\begin{equation}
    \boldsymbol{F_{D'}} =  \boldsymbol{P^{\top} F_D}.
\end{equation}

\end{lemma}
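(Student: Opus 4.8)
The plan is to unfold PCA into its constituent steps and track how each one transforms under $\boldsymbol{D}\mapsto\boldsymbol{D'}=\boldsymbol{P}^{\top}\boldsymbol{D}\boldsymbol{P}$. Concretely, $\text{PCA}(\boldsymbol{D})$ means: (i) center the rows of the $n\times n$ matrix by subtracting the column-mean vector from every row to get $\tilde{\boldsymbol{D}}$; (ii) form the sample covariance $\boldsymbol{C}=\tfrac1n\tilde{\boldsymbol{D}}^{\top}\tilde{\boldsymbol{D}}$; (iii) collect the top-$m$ eigenvectors of $\boldsymbol{C}$ as the columns of $\boldsymbol{V}\in\mathbb{R}^{n\times m}$; (iv) return $\boldsymbol{F_D}=\tilde{\boldsymbol{D}}\boldsymbol{V}$. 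I will show each step intertwines with the permutation.

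First I would handle centering. Writing $\boldsymbol{D'}_{ij}=\boldsymbol{D}_{\pi(i),\pi(j)}$ for the permutation $\pi$ associated to $\boldsymbol{P}$, the $j$-th column mean of $\boldsymbol{D'}$ equals the $\pi(j)$-th column mean of $\boldsymbol{D}$, so the mean vector of $\boldsymbol{D'}$ is $\boldsymbol{P}^{\top}$ applied to the mean vector of $\boldsymbol{D}$; hence $\tilde{\boldsymbol{D}'}=\boldsymbol{P}^{\top}\tilde{\boldsymbol{D}}\boldsymbol{P}$. Next, using $\boldsymbol{P}\boldsymbol{P}^{\top}=\boldsymbol{I}$ from Lemma~\ref{the:orthogonal}, the covariance transforms as $\boldsymbol{C'}=\tfrac1n\tilde{\boldsymbol{D}'}^{\top}\tilde{\boldsymbol{D}'}=\boldsymbol{P}^{\top}\boldsymbol{C}\boldsymbol{P}$, so $\boldsymbol{C'}\sim\boldsymbol{C}$ in the sense of Definition~\ref{def:sim} and the two share eigenvalues. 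For the eigenvectors: if $\boldsymbol{C}\boldsymbol{v}=\lambda\boldsymbol{v}$ then $\boldsymbol{C'}(\boldsymbol{P}^{\top}\boldsymbol{v})=\boldsymbol{P}^{\top}\boldsymbol{C}\boldsymbol{P}\boldsymbol{P}^{\top}\boldsymbol{v}=\lambda(\boldsymbol{P}^{\top}\boldsymbol{v})$, so $\boldsymbol{P}^{\top}$ maps eigenvectors of $\boldsymbol{C}$ bijectively to eigenvectors of $\boldsymbol{C'}$ preserving eigenvalues; thus one may take $\boldsymbol{V'}=\boldsymbol{P}^{\top}\boldsymbol{V}$. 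Assembling the four steps, $\boldsymbol{F_{D'}}=\tilde{\boldsymbol{D}'}\boldsymbol{V'}=\boldsymbol{P}^{\top}\tilde{\boldsymbol{D}}\boldsymbol{P}\boldsymbol{P}^{\top}\boldsymbol{V}=\boldsymbol{P}^{\top}\tilde{\boldsymbol{D}}\boldsymbol{V}=\boldsymbol{P}^{\top}\boldsymbol{F_D}$, which is the claim.

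The main obstacle is that PCA is not literally a single-valued function: eigenvectors are determined only up to sign, and up to an arbitrary orthogonal change of basis inside any eigenspace of a repeated eigenvalue, and the top-$m$ truncation is ambiguous when $\lambda_m=\lambda_{m+1}$. So the identity $\boldsymbol{V'}=\boldsymbol{P}^{\top}\boldsymbol{V}$ must be read as ``the eigenvectors can be chosen so that'', which is precisely what equivariance requires, provided the chosen sign/ordering convention is itself permutation-stable (for instance, fixing the sign of each principal component so that its largest-magnitude entry is positive is permutation-stable, whereas fixing the sign of a designated coordinate is not, since coordinates are permuted by $\boldsymbol{P}$). I would therefore state the lemma under the mild genericity assumption that the leading $m$ eigenvalues of $\boldsymbol{C}$ are simple (so $\boldsymbol{V}$ is unique up to sign) together with such a permutation-stable sign rule, in which case the equality holds on the nose; in the degenerate case only the usual within-eigenspace rotation remains, which is harmless for any downstream permutation-invariant readout built on top of EDEN.
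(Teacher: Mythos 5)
Your proof is correct and follows essentially the same route as the paper's: conjugate the Gram/covariance matrix by $\boldsymbol{P}$, observe that its eigenvectors can be chosen as $\boldsymbol{P}^{\top}\boldsymbol{V}$ so that truncation to the top $m$ columns commutes with the row permutation, and assemble $\boldsymbol{F_{D'}}=\boldsymbol{P}^{\top}\boldsymbol{F_D}$. The only differences are refinements: you explicitly track the mean-centering step (which the paper's version of PCA omits) and you fold the eigenvector sign/degeneracy caveat into the proof itself, whereas the paper defers that issue to a separate lemma and a footnote on the deterministic sign convention.
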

\begin{proof}
Since 
  \begin{equation}
  \label{eq:D'D}
    \boldsymbol{D'^{\top} D'} =  (\boldsymbol{P^{\top} D P})^{\top} (\boldsymbol{P^{\top} D P}) = \boldsymbol{P^{-1} D^{\top} D P},
  \end{equation}
with Definition \ref{def:sim}, we know that $\boldsymbol{D'^{\top} D'}$ and $\boldsymbol{D^{\top} D}$ share same eigenvalues because $ \boldsymbol{D'^{\top} D'} \sim \boldsymbol{D^{\top} D}.$
  We perform eigen-decomposition on them to employ PCA,
    \begin{equation}
  \label{eq:eigen decomposition}
        \boldsymbol{D^{\top} D} = \boldsymbol{V \Sigma_{D} V^{\top}},
        \boldsymbol{D'^{\top} D'} = \boldsymbol{W \Sigma_{D'} W^{\top}}, \\
  \end{equation}
 the i-th columns of $\boldsymbol{V}$ and $\boldsymbol{W}$ are eigenvectors corresponding to the i-th largest eigenvalues of $\boldsymbol {D^{\top} D}$ and $\boldsymbol {D'^{\top} D'}$.
  $\boldsymbol{\Sigma_{D}}$ and  $\boldsymbol{\Sigma_{D'}}$ are diagonal matrices consisting of ranked eigenvalues. Notice that $\boldsymbol{ \Sigma_{D} = \Sigma_{D'} = \Sigma }$
  because $\boldsymbol{D'^{\top} D'}$ and $\boldsymbol{D^{\top} D}$ share same eigenvalues.
  Select the first $m$ columns of $\boldsymbol{V}$ and $\boldsymbol{W}$ , and then we have $\boldsymbol{\widetilde{V}}$ and $\boldsymbol{\widetilde{W}}$ which can be used to project the data from $n$ down to $m$ dimensions according to the definition of PCA \cite{PCA}: 
  \begin{equation}
  \begin{aligned}
   \boldsymbol{F_D} &= \text{PCA}(\boldsymbol{D}) = \boldsymbol{D \widetilde{V}}\\
      \boldsymbol{F_{D'}} &= \text{PCA}(\boldsymbol{D'}) = \boldsymbol{D' \widetilde{W}},\\
  \end{aligned}
  \end{equation}{}
  since \eqref{eq:D'D} and \eqref{eq:eigen decomposition},
\begin{equation}
  \boldsymbol{D'^{\top} D'} = \boldsymbol{P^{\top} D^{\top} D P} = \boldsymbol{P^{\top} V \Sigma V^{\top} P},
  \end{equation}
we have $\boldsymbol{W = P^{\top} V}$.
The left multiplication matrix $\boldsymbol{P^{\top}}$ doesn't affect the operation of taking the first $\boldsymbol{m}$ columns of the matrix by row transformation, so
\begin{equation}
    \boldsymbol{ \widetilde{W} = \widetilde{P^{\top} V} = P^{\top} \widetilde{V}}.
\end{equation}{}
In summary, 

\begin{equation*}
\boldsymbol{F_{D'}} = \boldsymbol{D' \widetilde{W}} = \boldsymbol{D' P^{\top} \widetilde{V}}= \boldsymbol{P^{\top} D P P^{\top} \widetilde{V}} = \boldsymbol{P^{\top} F_D}.
 \end{equation*}
 
It is clear that swapping the order of phase propagation in Eq.~\eqref{eq:phase} and matrices permutation in Eq.~\eqref{eq:permute} does not change the result:
\begin{equation}
   \boldsymbol{\hat {D'} } = \boldsymbol{\widehat {P^{\top} D P}} = \boldsymbol{P^{\top} \hat {D} P},
\end{equation}
we can draw a similar conclusion that $\boldsymbol{F_{\hat{D'}} = P^{\top} F_{\hat{D}} }$ is node-level permutation-equivariant.
\end{proof}
\begin{lemma}
\textbf{The eigen-decomposition is not unique.}
\label{ED no unique}

Proof.
We assume that the eigenvectors are standard vectors. Considering the following two cases: (1) When the eigenvalue is unique (i.e., the eigenvalue is not equal to other eigenvalues), its corresponding eigenvector has a sign difference: if $X_i$ is the eigenvector corresponding to the i-th eigenvalue, then $-X_i$ is also the eigenvector. (2) When the eigenvalues are not unique (i.e., there are $p$ eigenvalues with the same value), the dimension $k$ of the eigenvector space corresponding to that eigenvalue will be less than or equal to $p$ because the geometric multiplicity is less than or equal to the algebraic multiplicity. 
The basis of the eigenvector space is the eigenvector. When $k>1$, there are infinite possibilities of bases, and different bases can be interconverted by orthogonal matrices.
\end{lemma}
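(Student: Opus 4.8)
The plan is to follow the case split suggested in the statement, while making explicit the ambient setting: the matrices PCA is applied to here, $\boldsymbol{D}^{\top}\boldsymbol{D}$ and $\hat{\boldsymbol{D}}^{\top}\hat{\boldsymbol{D}}$, are real symmetric positive semi-definite, so by the spectral theorem they admit an orthonormal eigenbasis and all eigenvalues are real and non-negative. Fix an eigenvalue $\lambda$ and let $E_{\lambda}=\ker(\boldsymbol{M}-\lambda\boldsymbol{I})$ be its eigenspace, with $k=\dim E_{\lambda}$ the geometric multiplicity. In general $1\le k\le p$, where $p$ is the algebraic multiplicity (geometric multiplicity never exceeds algebraic multiplicity), and $k=p$ in the symmetric case; I would state this once and then argue uniformly in $k$.

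First I would handle the simple case $k=1$: any unit eigenvector spans the line $E_{\lambda}$, and the only unit vectors in $\mathrm{span}\{X_i\}$ are $X_i$ and $-X_i$, so the normalized eigenvector is determined only up to a global sign. This already exhibits two distinct eigen-decompositions. Then for $k>1$: if $\{X_1,\dots,X_k\}$ is an orthonormal basis of $E_{\lambda}$ and $\boldsymbol{Q}\in\mathbb{R}^{k\times k}$ is any orthogonal matrix, the columns of $[\,X_1\;\cdots\;X_k\,]\boldsymbol{Q}$ form another orthonormal basis of $E_{\lambda}$, and conversely every orthonormal basis arises this way. Since the orthogonal group $O(k)$ is infinite for every $k\ge1$ (and has positive dimension for $k\ge2$), there are infinitely many admissible bases. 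Assembling these per-eigenvalue choices into the full factor $\boldsymbol{V}$ shows the decomposition $\boldsymbol{M}=\boldsymbol{V}\boldsymbol{\Sigma}\boldsymbol{V}^{\top}$ (with $\boldsymbol{\Sigma}$ the fixed diagonal of ordered eigenvalues) is not unique; the admissible $\boldsymbol{V}$'s differ by right multiplication by a block-diagonal orthogonal matrix, one block $O(k_j)$ per distinct eigenvalue.

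The linear algebra itself is routine, so the real work — and the reason this lemma is worth stating — is reconciling it with the node-level equivariance claim: the earlier proof wrote $\boldsymbol{W}=\boldsymbol{P}^{\top}\boldsymbol{V}$, but non-uniqueness means $\boldsymbol{W}$ is only determined up to such a block-orthogonal factor, so $\boldsymbol{F}_{\boldsymbol{D}'}=\boldsymbol{P}^{\top}\boldsymbol{F}_{\boldsymbol{D}}$ holds only under a consistent choice of eigenbasis on each eigenspace. I expect the main obstacle to be exactly this point — pinning down a sign/basis convention that is itself permutation-consistent — rather than the proof of non-uniqueness, which I would keep short and restrict to the symmetric PSD case since that is all the PCA step in EDEN requires.
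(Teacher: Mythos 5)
Your argument is correct and follows essentially the same two-case structure as the paper's proof: a sign ambiguity for simple eigenvalues and an orthogonal change of basis on each eigenspace for repeated ones. Your version is slightly sharper in that you restrict to the symmetric PSD matrices actually used by PCA (where geometric multiplicity equals algebraic multiplicity, rather than merely bounding it) and you make explicit the $O(k)$ block structure and its consequence for the equivariance lemma, but the underlying idea is identical to the paper's.
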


It is important to note that the sign flipping problem mentioned in the first case of lemma \ref{ED no unique} can be avoided in practice.
The deterministic outputs of PCA or the eigen-decomposition operation used in Eq.~\eqref{eq:eigen decomposition} can be ensured by a given rule\footnote{ \url{https://github.com/scikit-learn/scikit-learn/blob/baf0ea25d/sklearn/decomposition/_pca.py#L519}} to guarantee EDEN's equivariance.

\subsection{Complexity analysis}
\label{sec:complexity}
For a graph with $n$ nodes, the computational complexity of EDEN mainly lies in obtaining the distance matrix $\boldsymbol{D}$ and PCA, both of which have the complexity of $O(n^3)$.
The space complexity required to store and transform $\boldsymbol{D}$ is $O(n^2)$. 
Although the total complexity reaches $O(n^3)$, we show that it is still efficient to apply EDEN in practice:
(1) Recent advanced expressive GNNs (e.g., F-GNN \cite{FGNN} and GNNML3 \cite{breaking}) modify the message-passing scheme, and their time complexity and space complexity are $O(n^3)$ and $O(n^2)$ respectively through training and inference. But EDEN does not change the computational primitives (i.e., message-passing) when applied to deep graph learning. It works as a simple plug-in, so the total complexity of EDEN is a ``one-shot deal'', which means the additional time complexity incurred during inference is only $O(k \times n)$. 
(2) The time complexity of most existing eigenvalue-based decomposition methods for position encoding is $O(n^3)$, e.g., the state-of-the-art Laplacian PE \cite{benchmarking}, and the space complexity of Laplacian PE is at least $O(n^2)$. 
(3) EDEN only takes $O(k \times n)$ space after PCA, and $k \ll n$ is fully applicable for large real-world graphs, but recent advanced high-order GNNs require larger memory resources (See Tab. \ref{tab:experiment}).

\section{Experiments}
We provide the pseudo code and core python implementation of EDEN in Appendix \ref{app:code}.
\label{sec:Experiments}
\subsection{EDEN for Graph Isomorphic Test}
\label{exp:iso}

  

\begin{table}

  \centering
  \small
  \begin{tabular}{llll}
  \toprule
    Model     & GRAPH8C     & SR25 & EXP \\
    \midrule
    MLP & 293K  & 105  & 600\  \\
    GCN     & 4775 & 105   &  600 \\
    GAT      & 1828       & 105 & 600 \\
    GIN      & 386       & 105 & 600\\
    CHEBNET      & 44       & 105 & 71\\
    F-GNN     & \textbf{0}       & 105 & \textbf{0}\\
    GNNML1     & 333       & 105 & 600\\
    GNNML3     & \textbf{0}       & 105 & \textbf{0} \\
    \textbf{EDEN+GIN}     & \textbf{0}       &  \textbf{101} & \textbf{0}\\
    \bottomrule
    \end{tabular}
    \caption{
    The results of graph isomorphic test.
    The total number of pairs is 61M, 105, and 600 for the GRAPH8C, SR25, and EXP datasets, respectively. The data in the table represents the number of MISJUDGED graph pairs.}
     \label{tab:isomorphic}
\end{table}

  We test the expressive power of EDEN on graph isomorphism test datasets of GRAPH8C, SR25  \footnote{\url{http://users.cecs.anu.edu.au/∼bdm/data/graphs.html}}, and EXP \cite{randinit}. 
  GRAPH8C has 11,177 non-isomorphic graphs with eight nodes, which contains $C^{2}_{11117}=62,401,191$ possible different pairs. In GRAPH8C, 312 of 62M pairs are 1-WL equivalent, which means only GNNs with at least 2-WL expressiveness can distinguish them. 
  The SR25 dataset contains 15 strong 12-regular graphs with 25 nodes, which contains $C^{2}_{15}=105$ possible non-isomorphic pairs. The EXP dataset has 600 pairs of 1-WL equivalent non-isomorphic graphs. 
  
Following the previous settings \cite{breaking}, we compare EDEN (with a GIN readout) with the following baselines: MLP, GCN \cite{GCN}, GAT \cite{GAT}, GIN \cite{GIN}, CHEBNET \cite{cheb}, F-GNN \cite{FGNN},  GNNML1 and GNNML3 \cite{breaking}. We iteratively initialize GNN models 100 times with different random seeds to readout graphs. A pair of graphs are considered non-isomorphic if they can be separated at least once. 

The results are shown in Tab. \ref{tab:isomorphic}, which describes the number of MISJUDGED graph pairs. Note that F-GNN and GNNML3 are designed \textit{w.r.t.} the 3-WL test. It shows that EDEN can help traditional MPNN (GIN) surpass recent advanced GNNs and improve the expressive power beyond the 1-WL test. 
Fig. \ref{Fig.eden} illustrates a real example of using EDEN on three 4-regular graphs, where nodes' colors are normalized 3-dimensional EDEN and are presented as the RBG brightness to facilitate visualization. 
It can be observed that nodes' colors on the two non-isomorphic graphs are entirely different, and the isomorphic pair of graphs share the same colors. 
Intuitively, we can identify graph pairs by judging the colors of nodes on them. We also use EDEN on other graphs and find that up to 2-WL equivalent regular graphs can be correctly judged by nodes' colors. See Appendix \ref{app:regular} for more examples (1-WL, 2-WL, and 3-WL tests), and Appendix \ref{app:GIT} for detailed reproducible settings.

\begin{table*}[t]
\centering
\small
\begin{tabular}{ccccccc}
\toprule
\multirow{2}{*}{\textbf{}} & \multicolumn{2}{c}{\textbf{Node classification}} & \multicolumn{2}{c}{\textbf{Link prediction}} & \multicolumn{2}{c}{\textbf{Graph classification}} \\ 
\cline{2-7}      & Cora                        & CiteSeer                   & ENZYMES                   & PROTEINS                 & MUTAG                   & PROTEINS                    \\ \midrule
GCN                        & 0.864$\pm$0.019             & 0.713$\pm$0.012            & 0.657$\pm$0.004           & 0.613$\pm$0.023          & 0.837$\pm$0.027         & 0.704$\pm$0.008                 \\
SAGE                       & 0.858$\pm$0.016             & 0.728$\pm$0.031            & 0.643$\pm$0.003           & 0.658$\pm$0.002          & 0.842$\pm$0.050         & 0.714$\pm$0.042                 \\
GAT                        & 0.859$\pm$0.028             & 0.721$\pm$0.052            & 0.613$\pm$0.008           & 0.610$\pm$0.002          & 0.811$\pm$0.031         & 0.726$\pm$0.009        \\
GIN                        & 0.854$\pm$0.019             & 0.716$\pm$0.018            & 0.663$\pm$0.006 & 0.614$\pm$0.015          & 0.847$\pm$0.087                   & 0.718$\pm$0.012                 \\ \midrule
F-GNN                       & OOM                           & OOM                          & 0.790$\pm$0.053                         & 0.795$\pm$0.042                        & 0.906$\pm$0.087        & 0.772$\pm$0.047                 \\ 
ID-GNN                     & \textit{0.878$\pm$0.010}     &\textit{0.742$\pm$0.010}   & \textit{0.846$\pm$0.010}   & \textit{0.886$\pm$0.020} & \textit{\textbf{0.965$\pm$0.032}} & 0.780$\pm$0.035                 \\ 
GraphSNN                   & 0.838$\pm$0.012             & 0.735$\pm$0.016            & -                         & -                        & 0.947$\pm$0.019         & \textit{0.784$\pm$0.027}                \\
Laplacian PE                   & 0.869$\pm$0.027             & 0.733$\pm$0.036            & 0.713$\pm$0.052                         & 0.722$\pm$0.069                        & 0.855$\pm$0.097         & 0.735$\pm$0.044                \\
\midrule
EDEN+GCN                   & 0.873$\pm$0.023             & 0.748$\pm$0.013            & 0.869$\pm$0.001           & 0.879$\pm$0.001          & 0.926$\pm$0.031                   & 0.781$\pm$0.014        \\
EDEN+SAGE                  & \textbf{0.885$\pm$0.021}   & \textbf{0.750$\pm$0.020}  & \textbf{0.887$\pm$0.001} & \textbf{0.917$\pm$0.001}& 0.932$\pm$0.054                  & 0.755$\pm$0.019                 \\
EDEN+GAT                   & 0.870$\pm$0.019             & 0.734$\pm$0.020            & 0.876$\pm$0.002           & 0.881$\pm$0.001          & 0.911$\pm$0.027                   & \textbf{0.788$\pm$0.013}       \\
EDEN+GIN                   & 0.879$\pm$0.020             & 0.722$\pm$0.023            & 0.878$\pm$0.001           & 0.874$\pm$0.003          & 0.937$\pm$0.086                   & 0.744$\pm$0.076                 \\ 
\bottomrule
\end{tabular}

\caption{Performance comparison.
\textbf{Bold} denotes the highest score, \textit{italic} denotes the best performing baseline. Since GraphSNN is designed for node- and edge-level tasks, we only report the best performance on them.}
\label{tab:experiment}
\end{table*}

\subsection{EDEN for Downstream Tasks}
\label{exp:tasks}
We conduct experiments on five real-world datasets, including two citation networks of Cora and CiteSeer \cite{cora}, two protein datasets of ENZYMES \cite{ENZYMES} with 600 graphs and PROTEINS \cite{PROTEINS} with 1113 graphs, and a chemical dataset of MUTAG \cite{TU} with 188 graphs.

 In addition to the baseline methods GCN, GAT, GIN and F-GNN used in section~\ref{exp:iso}, we add conventional GNN of GraphSage~\cite{Sage}, recent advanced GNNs of ID-GNN~\cite{IDGNN}, GraphSNN~\cite{GraphSNN}, and the state-of-the-art Laplacian PE~\cite{benchmarking} as new baselines.
 We apply a random 80/20\%  train/validation split,  train all models by the Adam optimizer \cite{Adam} with the learning rate of 0.01, repeat experiments with 10 random seeds and report the average results. Please refer to Appendix \ref{app:details} for more detailed settings.

Tab. \ref{tab:experiment} describes the results. We can observe that EDEN helps MPNNs to gain universal improvements in all three tasks, and MPNNs with EDEN achieve state-of-the-art performance compared with beyond-1-WL GNN counterparts on all datasets except the MUTAG graph classification dataset. This is because the graphs contained in MUTAG are very small, and there are only 17.93 nodes per graph on average.
EDEN's global information becomes less useful for very small graphs.
Combining EDEN with the traditional MPNNs of GCN, GraphSAGE, GAT, and GIN can bring promising improvements, especially on the link prediction task.
Note that F-GNN requires higher-order computations during inference and raises the Out-Of-Memory (OOM) error. The performance of Laplacian PE \cite{benchmarking} has an obvious gap between EDEN in all tasks.

\begin{figure*}
    \centering  
    \subfigure[EDEN indicates position]{
        \label{Fig.sub.1}
        \includegraphics[width=0.23\textwidth]{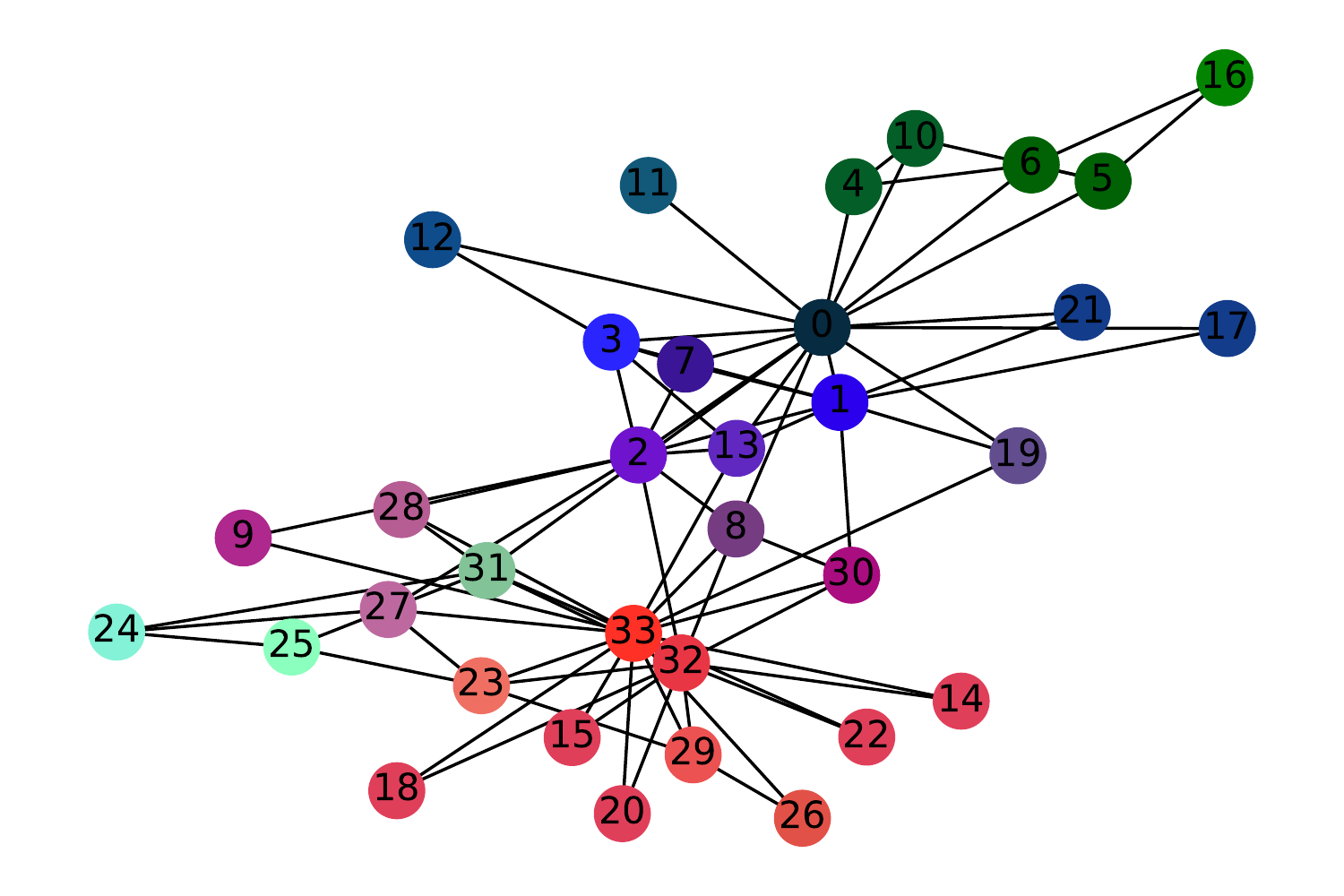}
        }
    \subfigure[3D Laplacian]{
        \label{Fig.ablation.1}
        \includegraphics[width=0.23\textwidth]{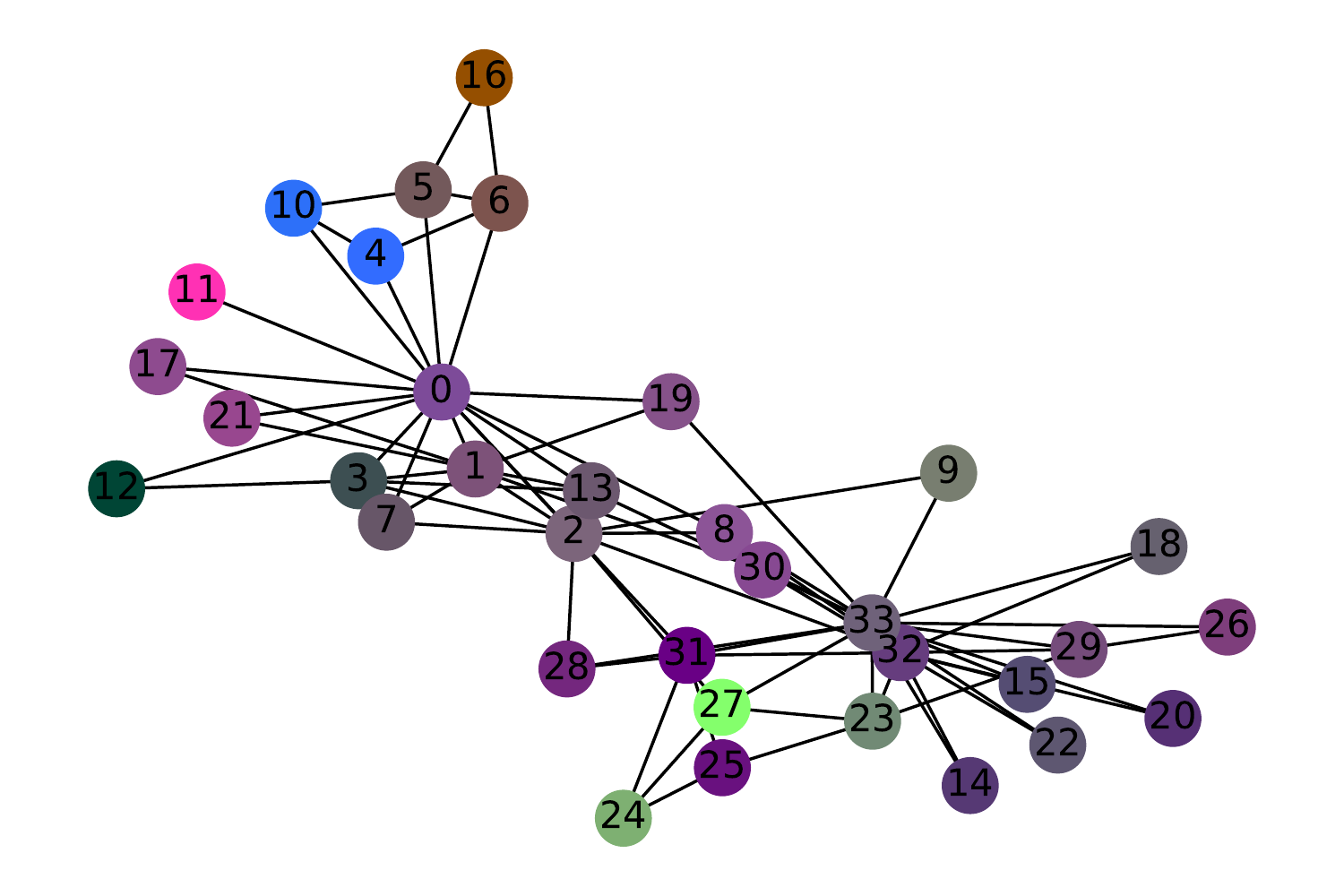}
        }            
    \subfigure[2D EDEN \textbf{WITHOUT USING LABELS}.]{
        \label{Fig.sub.2}
        \includegraphics[width=0.22\textwidth]{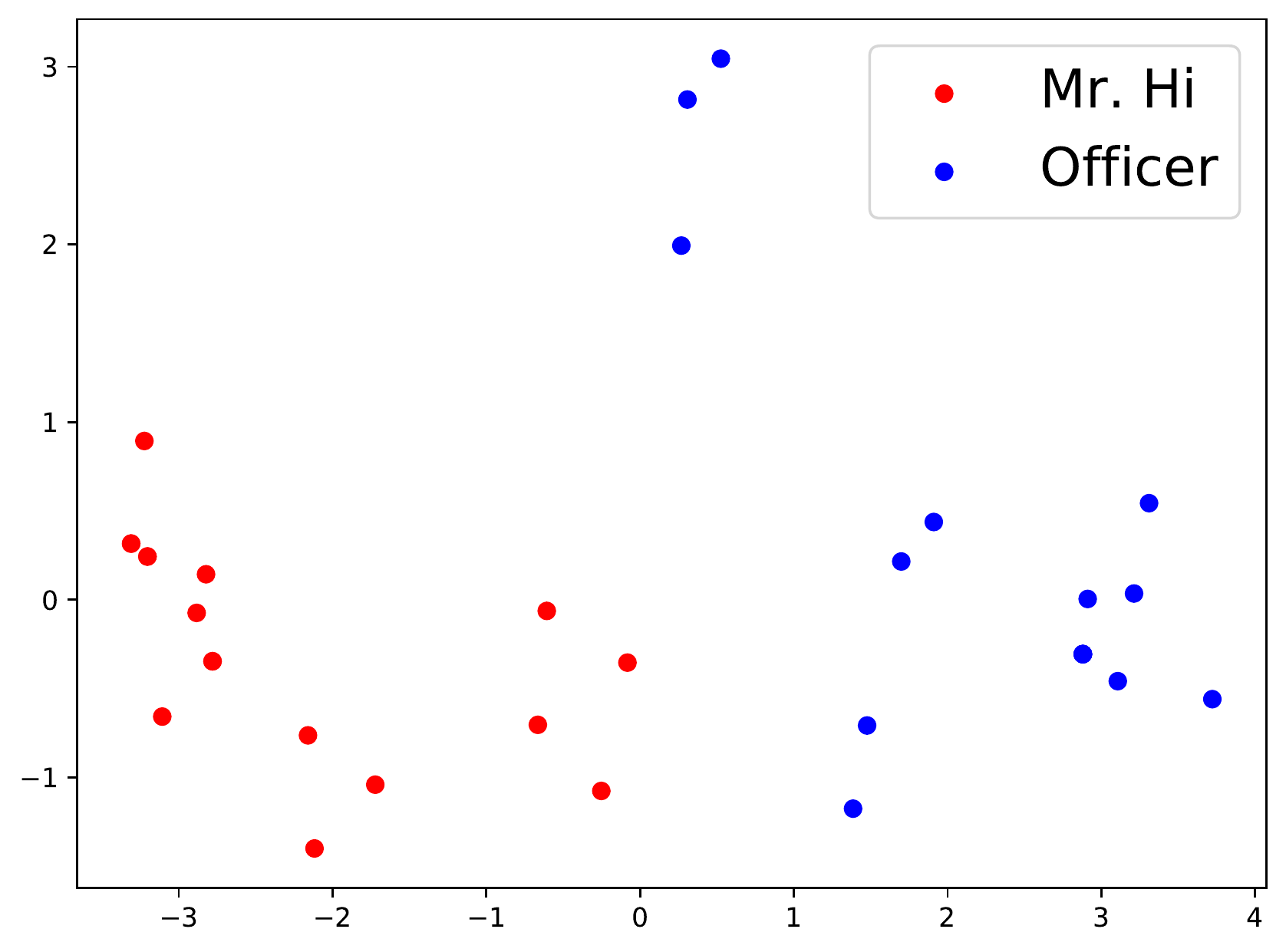}
        }
    \subfigure[2D Laplacian]{
        \label{Fig.ablation.2}
        \includegraphics[width=0.247\textwidth]{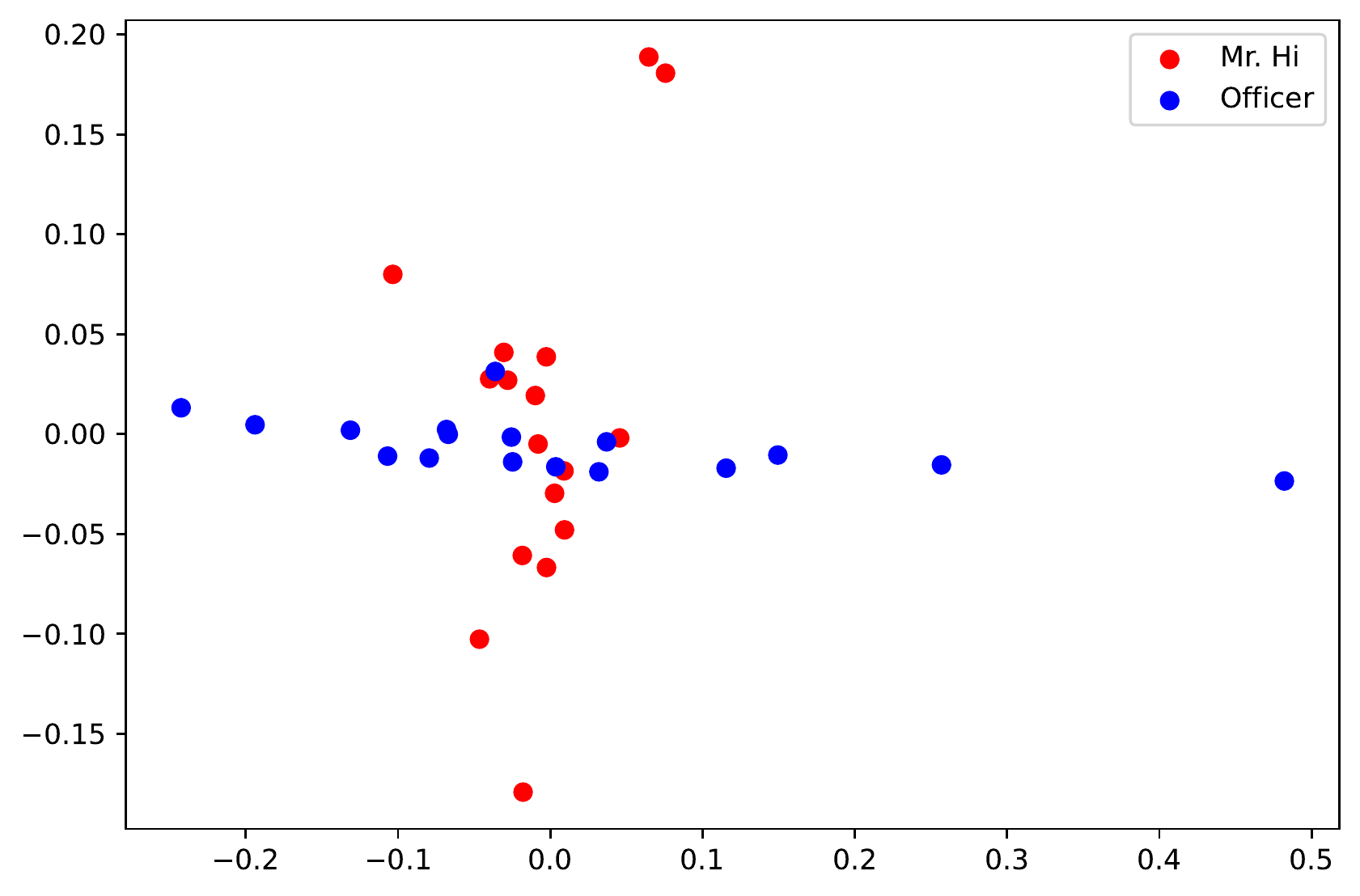}
        }        
    \caption{The expressive power of EDEN $\boldsymbol{F_{\hat{\boldsymbol{D}}}}$ and Laplacian PE on Zachary's karate club network \cite{karate}. (a) EDEN as a Red-Green-Blue value to color each node. 
    (b) Laplacian PE makes most nodes have similar colors. (c) The 2D scatter of EDEN. Nodes are linearly separable WITHOUT labels (red or blue).
    (d) The result of 2D Laplacian PE.}
    \label{Fig.ablation}
\end{figure*}

%


\subsection{Ablation Study}

\begin{figure}
    \centering  
    \subfigure[Laplacian $G$]{
        \label{Fig.abllap.1}
        \includegraphics[width=0.305\linewidth]{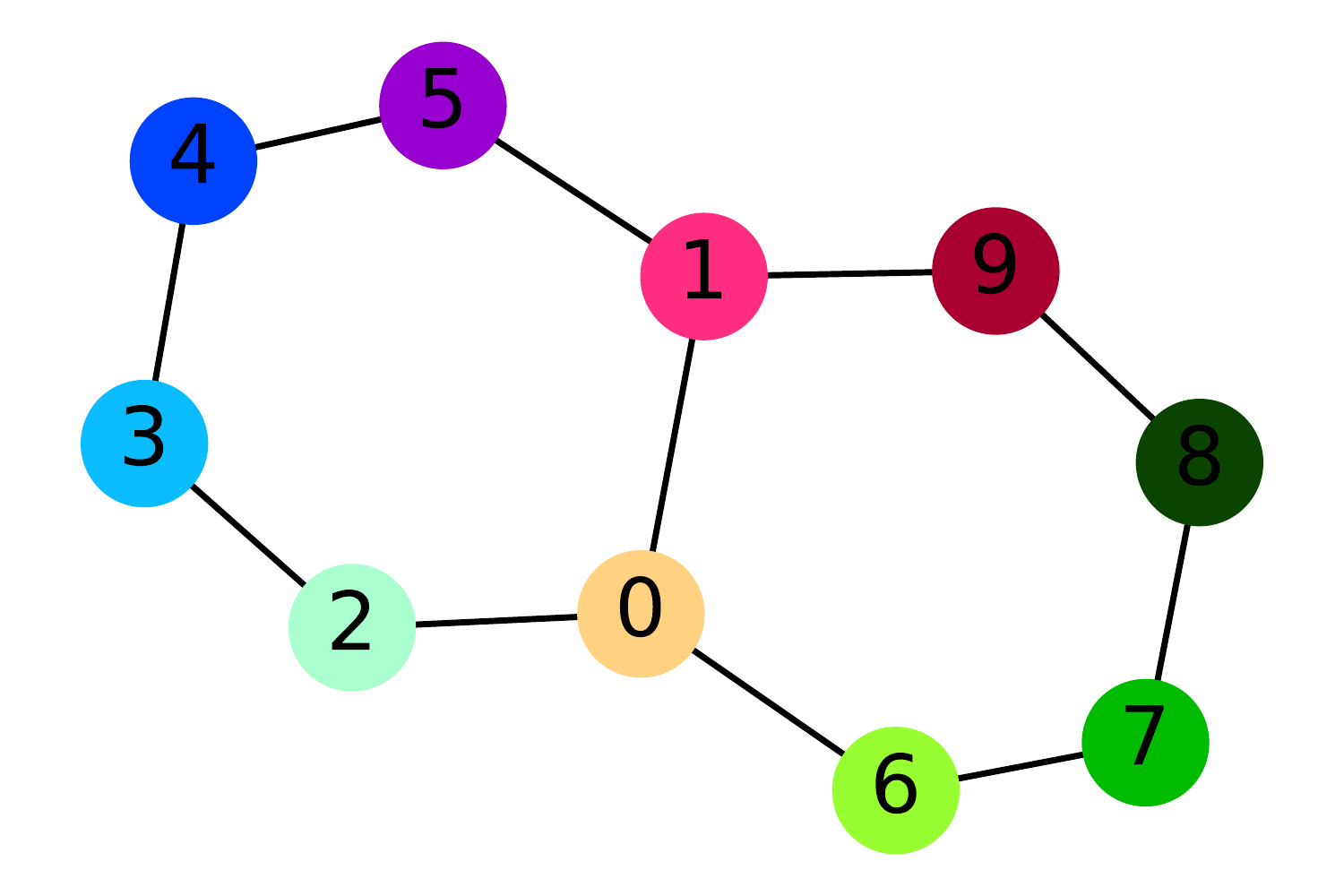}
        }
    \subfigure[Laplacian $H$]{
        \label{Fig.abllap.2}
        \includegraphics[width=0.305\linewidth]{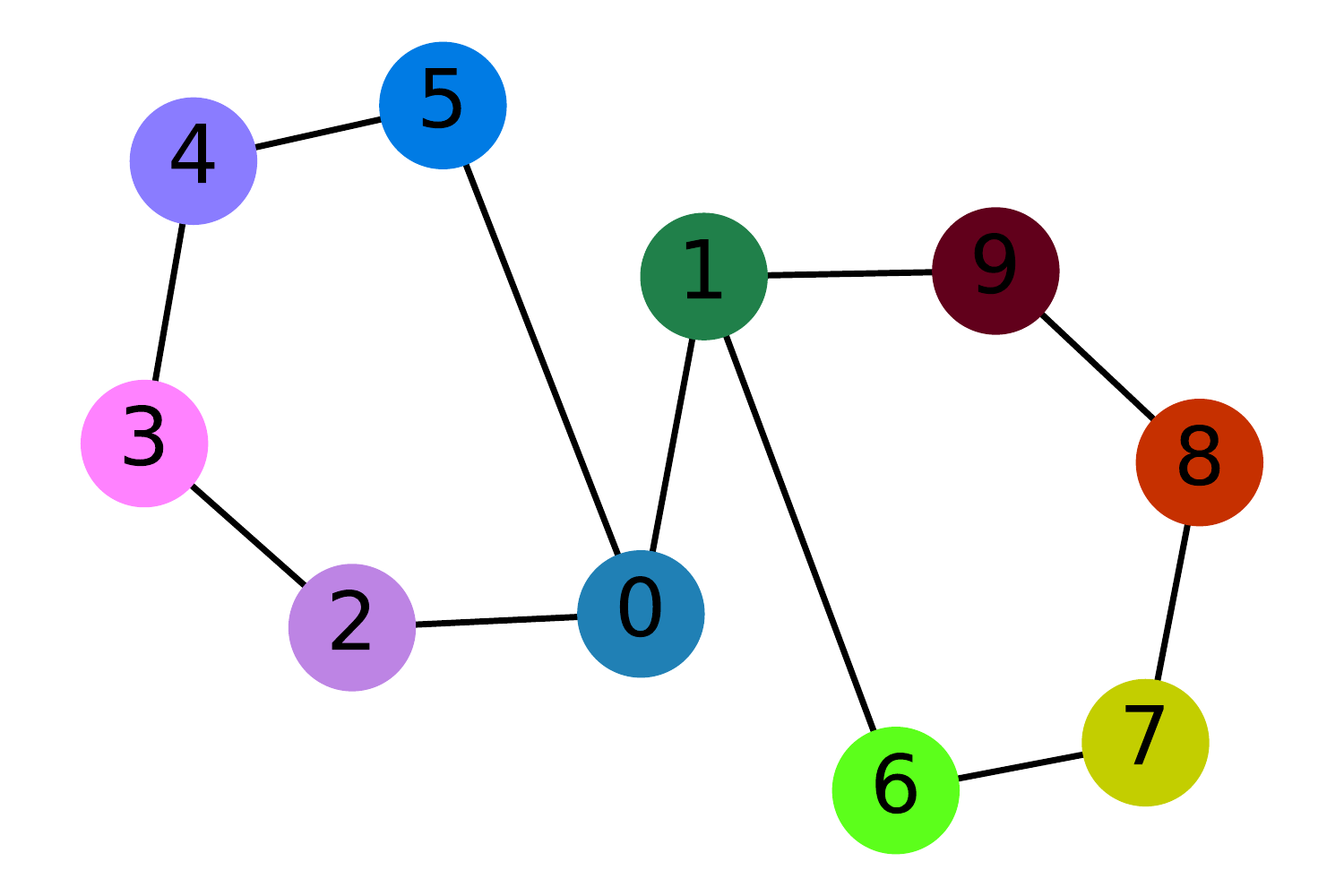}
        }
    \subfigure[Laplacian $G'$]{
        \label{Fig.abllap.3}
        \includegraphics[width=0.305\linewidth]{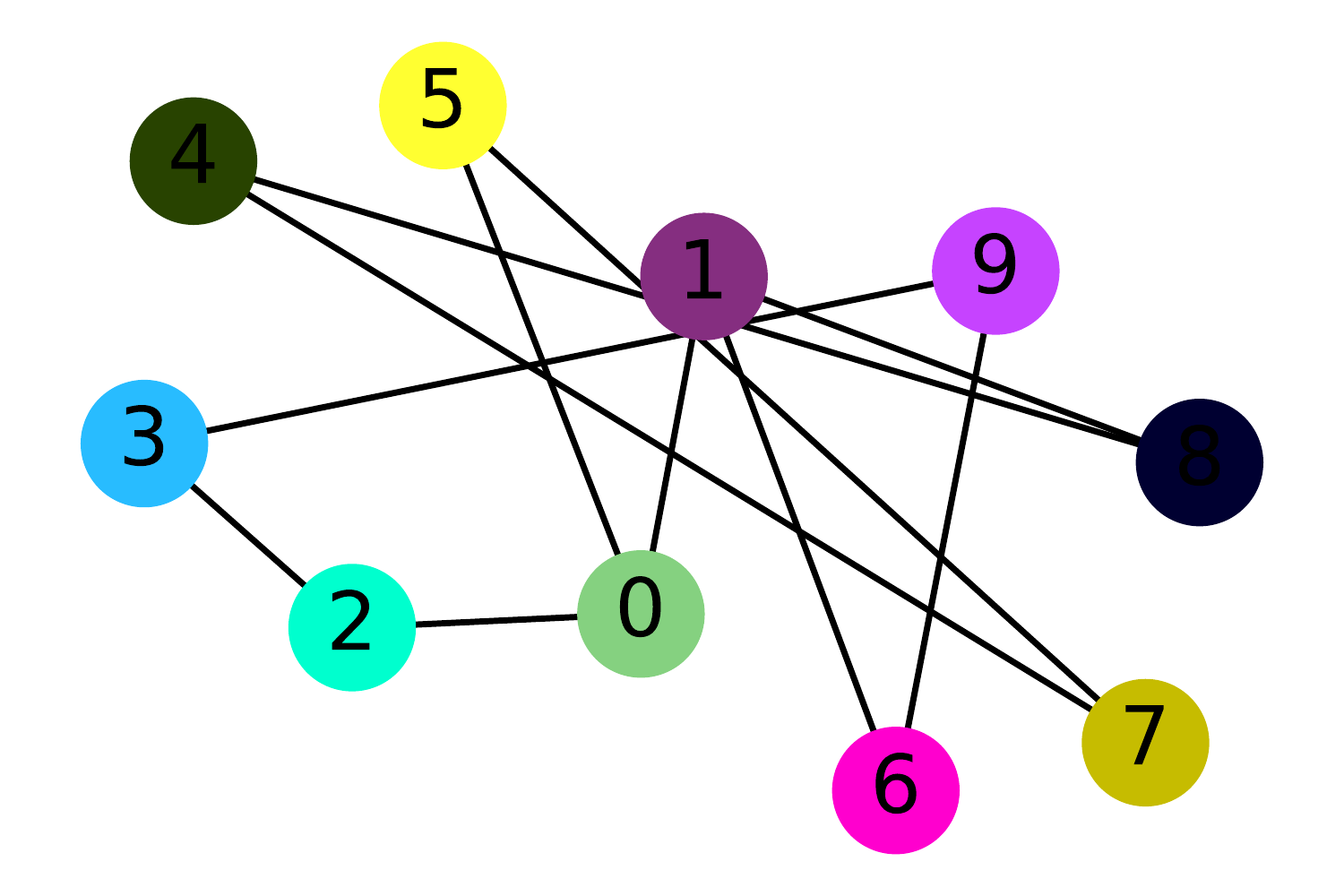}
        }
    \subfigure[EDEN $G$]{
        \label{Fig.ableden.1}
        \includegraphics[width=0.305\linewidth]{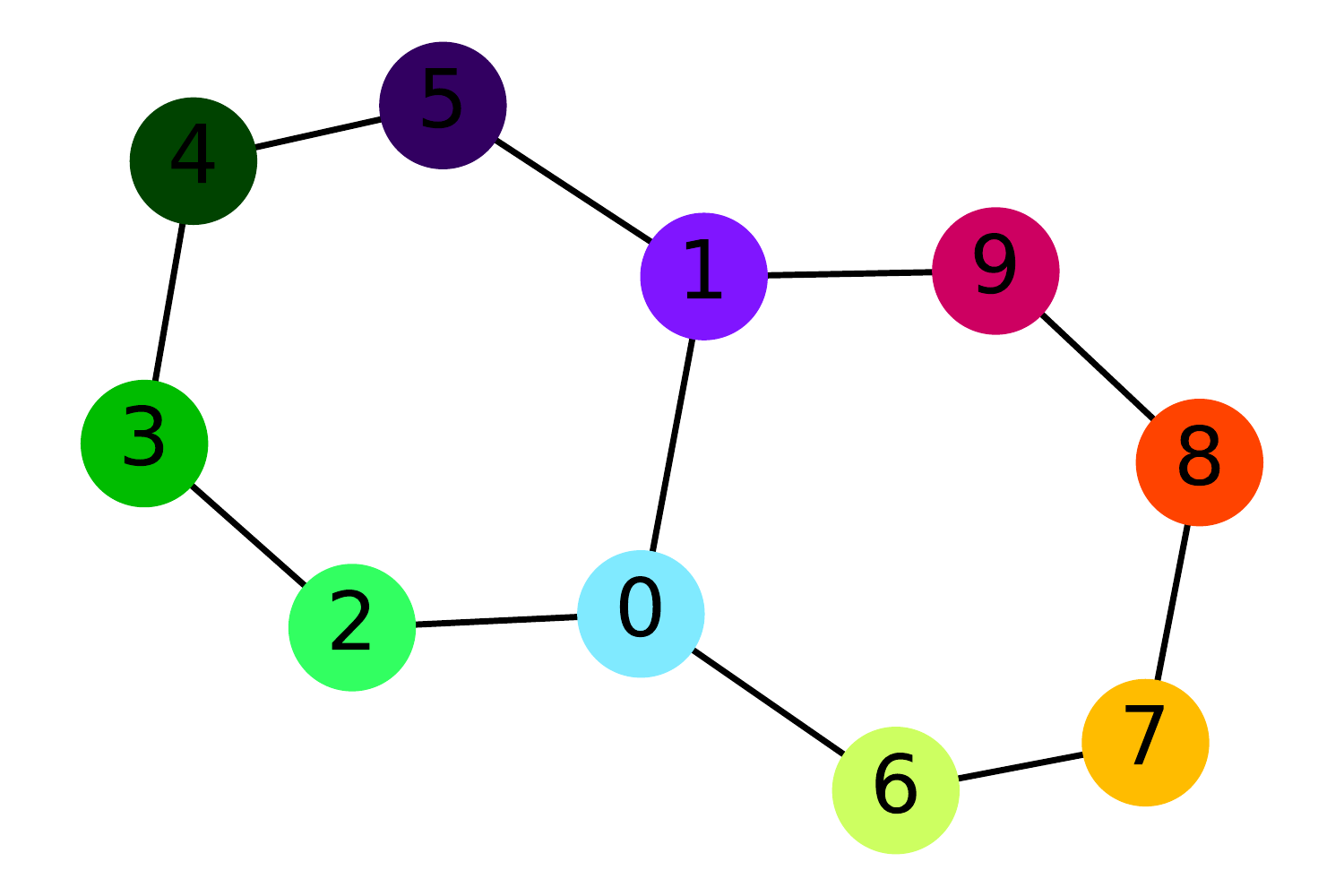}
        }
    \subfigure[EDEN $H$]{
        \label{Fig.ableden.2}
        \includegraphics[width=0.305\linewidth]{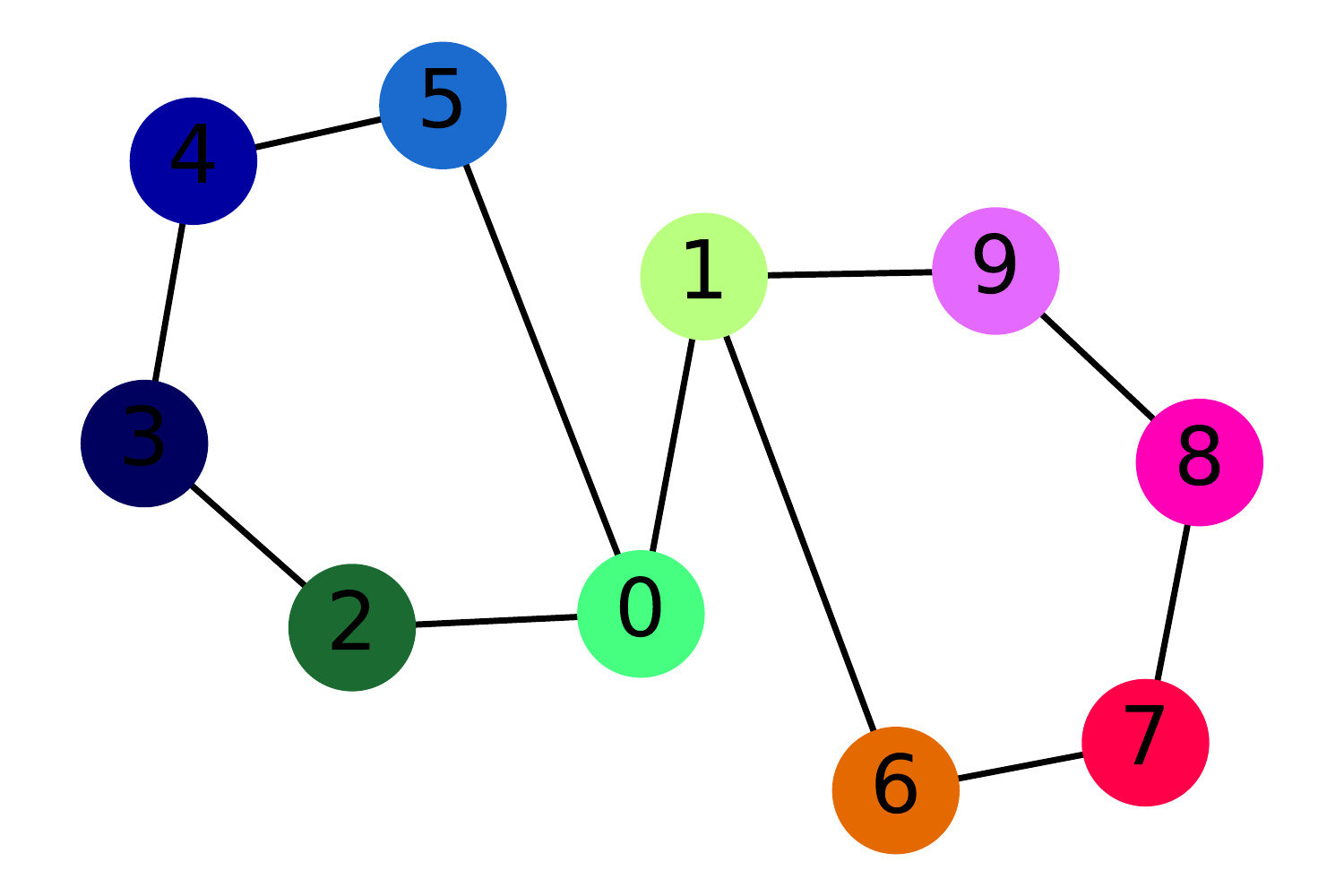}
        }
    \subfigure[EDEN $G'$]{
        \label{Fig.ableden.3}
        \includegraphics[width=0.305\linewidth]{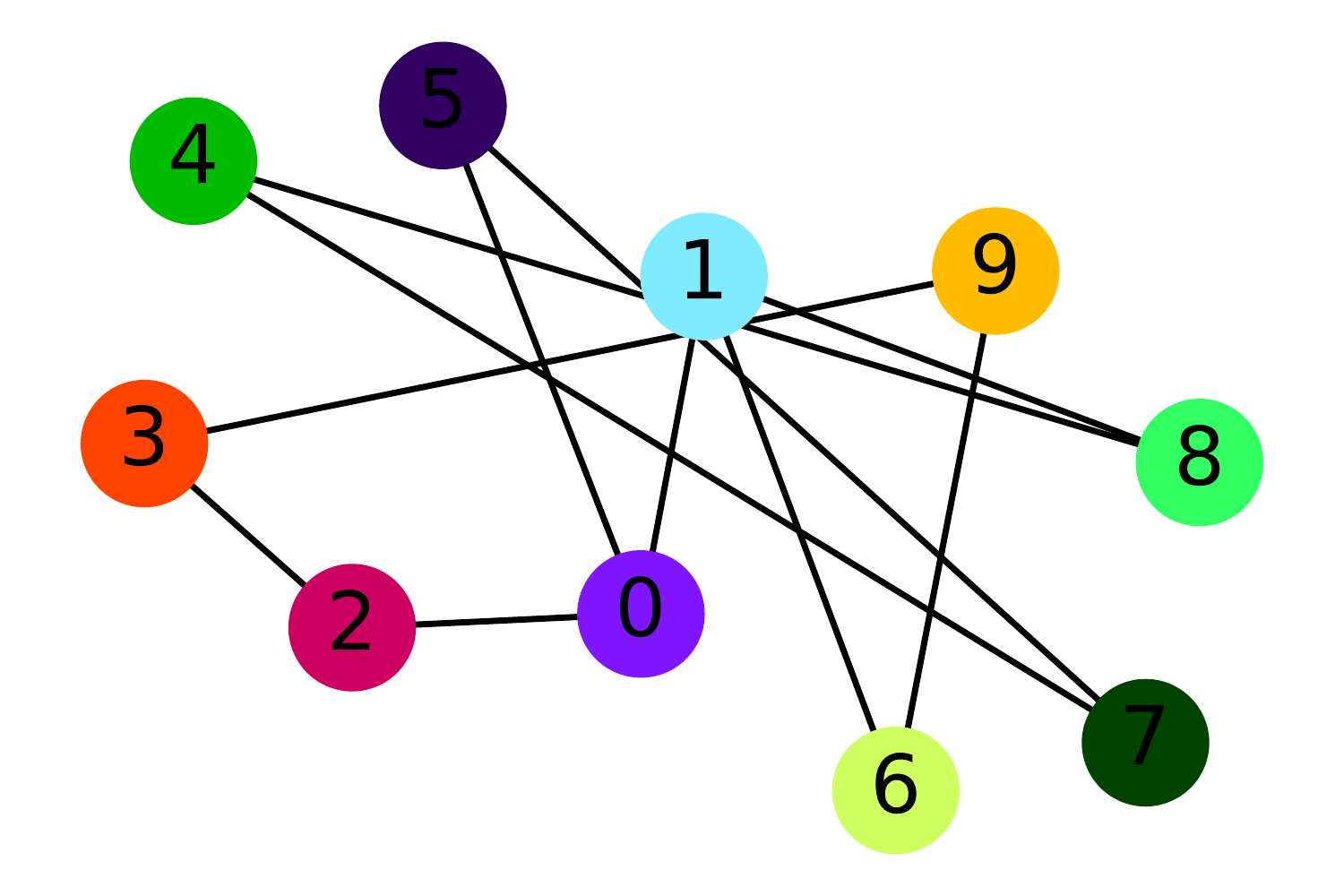}
        }
    \caption{Ablation study on 1-WL equivalent graph pair.}
    \label{Fig.ablation2}
\end{figure}



\subsubsection{Distance-based vs. Degree-based eigen-decomposition.}
Since we introduce the length of the shortest-path between nodes as the global information to enhance expressiveness and perform PCA to guarantee the equivariance of EDEN, we call it distance-based eigen-decomposition.
In this section, we analyze and compare EDEN in depth with another close method, i.e., Laplacian PE, which relies on the eigen-decomposition on the Graph Laplacian matrix $\boldsymbol{L}$.
We call it degree-based eigen-decomposition since the $\boldsymbol{L}$ only provides the degree information of nodes, which is believed as a part of 1-WL's expressiveness \cite{breaking}.
As described in Table~\ref{tab:experiment}, EDEN outperforms Laplacian PE by a large margin. We conduct an in-depth analysis of Laplacian PE and find that it selects the smallest eigenvalues for feature encoding after eigen-decomposition, which leads to the problem of the over-smooth encoding of adjacent nodes \cite{benchmarking}. While the alternative way of selecting the largest eigenvalues leads to the problem that few nodes with the highest degrees will be mapped far away from the others (See Appendix \ref{app:lap}), we believe this situation makes Laplacian PE less meaningful.


To support our insights, we further provide a visual comparison on real-world datasets (Fig. \ref{Fig.ablation}) and 1-WL equivalent graph pairs (Fig. \ref{Fig.ablation2}). 
In the real-world case, we first compute the 3-dimensional EDEN and Laplacian PE, then use them as the Red-Green-Blue values to color each node for an intuitive visualization.
Compared with EDEN's coloring effect in Fig. \ref{Fig.sub.1}, the Laplacian PE in Fig. \ref{Fig.ablation.1} colors most of the nodes with similar colors (purple or gray), which indicates the over-smoothing problem as mentioned before.
Next, we draw 2-dimensional EDEN and Laplacian PE scatters for all nodes. Nodes with the same labels are marked with the same color. 
Although the positional encodings are computed without supervision (labels), EDEN in Fig. \ref{Fig.sub.2} maps nodes uniformly into the 2D Euclidean space and makes nodes linearly separable according to their labels. The Laplacian PE in Fig. \ref{Fig.ablation.2} cannot achieve either.

Lastly, we choose a 1-WL equivalent graph pair, \textit{i.e.,} Decalin ($G$) and Bicyclopentyl ($H$) graph to test graph isomorphism. 
Due to Lemma. \ref{ED no unique}, the failure of graph isomorphism test lies in the inability to map isomorphic graphs to the same results, rather than mapping non-isomorphic graphs to different results.
So we permute nodes in graph $G$ randomly to construct an isomorphic graph $G'$ and then color these three graphs with 3D Laplacian PE and EDEN. As shown in Fig. \ref{Fig.ablation2}, the Laplacian PE fails to maintain deterministic outputs between $G$ and $G'$ (\emph{e.g.,} there is a new color yellow in $G'$'s node of No.5). At the same time, the outputs of EDEN are still deterministic. There is a one-to-one correspondence between the nodes of EDEN in an isomorphic graph pair.
More test examples are provided in Appendix \ref{app:lap}. We find out that the Laplacian PE can neither go beyond 1-WL expressiveness nor be as meaningful as EDEN.


\begin{table}
 \centering
 \renewcommand\arraystretch{1.1}
 \resizebox{0.95\linewidth}{!}{
\begin{tabular}{cccccc}
\toprule
Datasets   & Baseline        &  \begin{tabular}[c]{@{}c@{}}Direct \\ Distance\end{tabular}  & \begin{tabular}[c]{@{}c@{}}MinMax\\ Distance\end{tabular} & \begin{tabular}[c]{@{}c@{}}R-MinMax\\ Distance\end{tabular} & EDEN                     \\ \hline
Cora       & 0.859 & 0.862  & 0.868      & 0.873                                           & \textbf{0.877}  \\
CiteSeer & 0.720 & 0.720 & 0.724 & \textbf{0.740} & 0.739 \\
ENZYMES  & 0.644 & 0.856 & 0.856 & 0.862 & \textbf{0.871}\\
LPROTEINS & 0.624 & 0.795 & 0.871 & 0.880 & \textbf{0.888} \\ 
MUTAG       & 0.834 & 0.851 & 0.871 & 0.916 & \textbf{0.927} \\ 
GPROTEINS  & 0.716 & 0.721 & 0.752 & 0.752& \textbf{0.767}\\ \bottomrule
\end{tabular}
}
\caption{Ablation study on real-world datasets. }
\label{tab:ablation}
\end{table}

\subsubsection{Cosine phase propagation}
To investigate the effectiveness of cosine phase propagation, we use three distance encoding schemes, namely, \textbf{direct distance ($S_1$)},  \textbf{Min-Max normalized distance ($S_2$)}, and \textbf{reversed Min-Max normalized distance ($S_3$)}, to replace the cosine phase propagation and compare their performance.

 \textbf{$S_1$: Direct distance.}
 First, we perform PCA directly on the distance matrix $\boldsymbol{D}$ as an extra feature.
 \begin{equation}
	\hat{\boldsymbol{D}}_{i,j}=\left\{
	\begin{aligned}
	    &  \boldsymbol{D}_{i,j}  ,&\boldsymbol{D}_{i,j}\neq \infty\\
	    & -1  ,&\boldsymbol{D}_{i,j} = \infty.
	\end{aligned}
	\right.
	\label{eq:directdis}
\end{equation}
Here we set the distance between a pair of unreachable nodes $v_i$ and $v_j$ to -1.
\begin{equation}
     \boldsymbol{F} = \text{PCA}(\hat{\boldsymbol{D}}).
 \end{equation}
 In this case, the norm of $\boldsymbol{F}$ could be very large. 
 
\textbf{$S_2$: Min-Max normalized distance.}
Next, we perform the min-max normalization on $D$ before we employ PCA,
  \begin{equation}
	\hat{\boldsymbol{D}}_{i,j}=\left\{
	\begin{aligned}
	    & \frac{\boldsymbol{D}_{i,j}- \min(\boldsymbol{D}) }{\max(\boldsymbol{D})-\min(\boldsymbol{D})}   ,&\boldsymbol{D}_{i,j}\neq \infty\\
	    & -1  ,&\boldsymbol{D}_{i,j} = \infty.
	\end{aligned}
	\right.
	\label{eq:minmax}
\end{equation}
In this case, the values of all elements in $D$ range in [0,1], and larger values indicate longer distances.
 
\textbf{$S_3$: Reversed Min-Max normalized distance.}
Last, we transform $\hat{\boldsymbol{D}}$ in the following way,
   \begin{equation}
	\hat{\boldsymbol{D}}_{i,j}=\left\{
	\begin{aligned}
	    & 1- \frac{\boldsymbol{D}_{i,j}- \min(\boldsymbol{D}) }{\max(\boldsymbol{D})-\min(\boldsymbol{D})}   ,&\boldsymbol{D}_{i,j}\neq \infty\\
	    & -1 ,&\boldsymbol{D}_{i,j} = \infty.
	\end{aligned}
	\right.
	\label{eq:rminmax}
	\end{equation}
 This is similar to the second case, but the difference is that a larger value of the element in $\hat{D}$ represents a shorter distance between two nodes.
 
We choose four MPNNs (GCN, SAGE, GAT, and GIN) and employ the above three distance encoding schemes on real-world datasets. For a fair comparison, the hyper-parameters are the same as EDEN. Results are shown in the Tab. \ref{tab:ablation} (``LPROTEINS'' and ``GPROTEINS'' denote link-prediction and graph classification tasks on the PROTEINS dataset, respectively). Note that the ``Baseline'' column in Tab. \ref{tab:ablation} describes the average performance of MPNNs, and detailed experimental results are provided in Appendix \ref{app:phase}.
 
It can be observed from Tab. \ref{tab:ablation} that EDEN achieves the best performance except on the CiteSeer dataset. We analyze the possible reasons: (1) In $S_1$, the values of some elements in unnormalized matrix $\boldsymbol{D}$ are large, resulting in large norms of learned positional encoding vectors, which leads to suboptimal results. (2) In $S_1$ and $S_2$, the value of $\boldsymbol{\hat{D}}_{ij}$ is proportional to the distance (hops) between node $v_i$ and $v_j$. But it cannot express the distance between two unreachable nodes (\emph{i.e.,} the distance is defined as infinite), so we need an opposite way of representing distance, \emph{i.e.,} use a smaller value to represent a longer distance, and use -1 to represent the distance between unreachable nodes. This means that $S_1$ and $S_2$ are less interpretable for distance encoding. (3) $S_3$ is close to our proposed cosine phase propagation, and it also has the property of normalization and interpretability that makes it superior to $S_1$ and $S_2$. But $S_3$ is still a linear transformation. Distance encoding using trigonometric functions is considered to have better properties compared to linear mapping \cite{transformer}.

\subsection{Case study}
We choose the Cospectral graph ($G$) and a 4-regular graph ($H$) as a 2-WL equivalent pair for graph isomorphism test. As shown in Fig. \ref{Fig.2wltesteden}, EDEN distinguishes the non-isomorphism pair and keeps the isomorphism pair $G$ and $G'$ mapped to the same output. We provide a detailed analysis and more permutation examples of $G'$ in Appendix \ref{app:regular}.
Fig. \ref{Fig.2deden} illustrates the result of employing EDEN on the real-world dataset of Cora \cite{cora}. We find that the 2-dimensional EDEN is capable of providing semantic information. The contours of different clusters according to node labels can be observed via EDEN.
\begin{figure}
   \centering  
    \subfigure[$G$]{
        \label{Fig.2wltesteden.1}
        \includegraphics[width=0.305\linewidth]{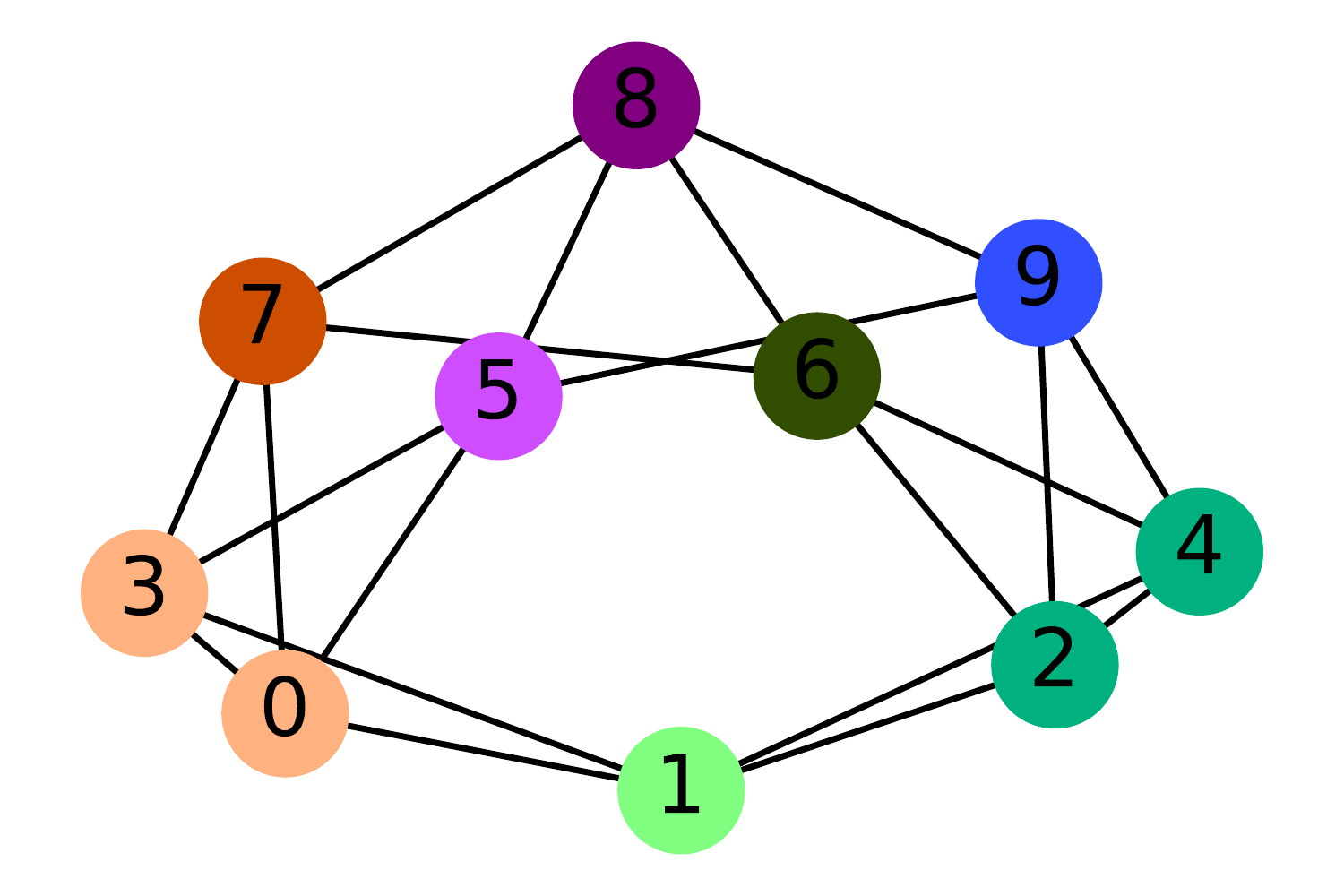}
        }
    \subfigure[$H$]{
        \label{Fig.2wltesteden.2}
        \includegraphics[width=0.305\linewidth]{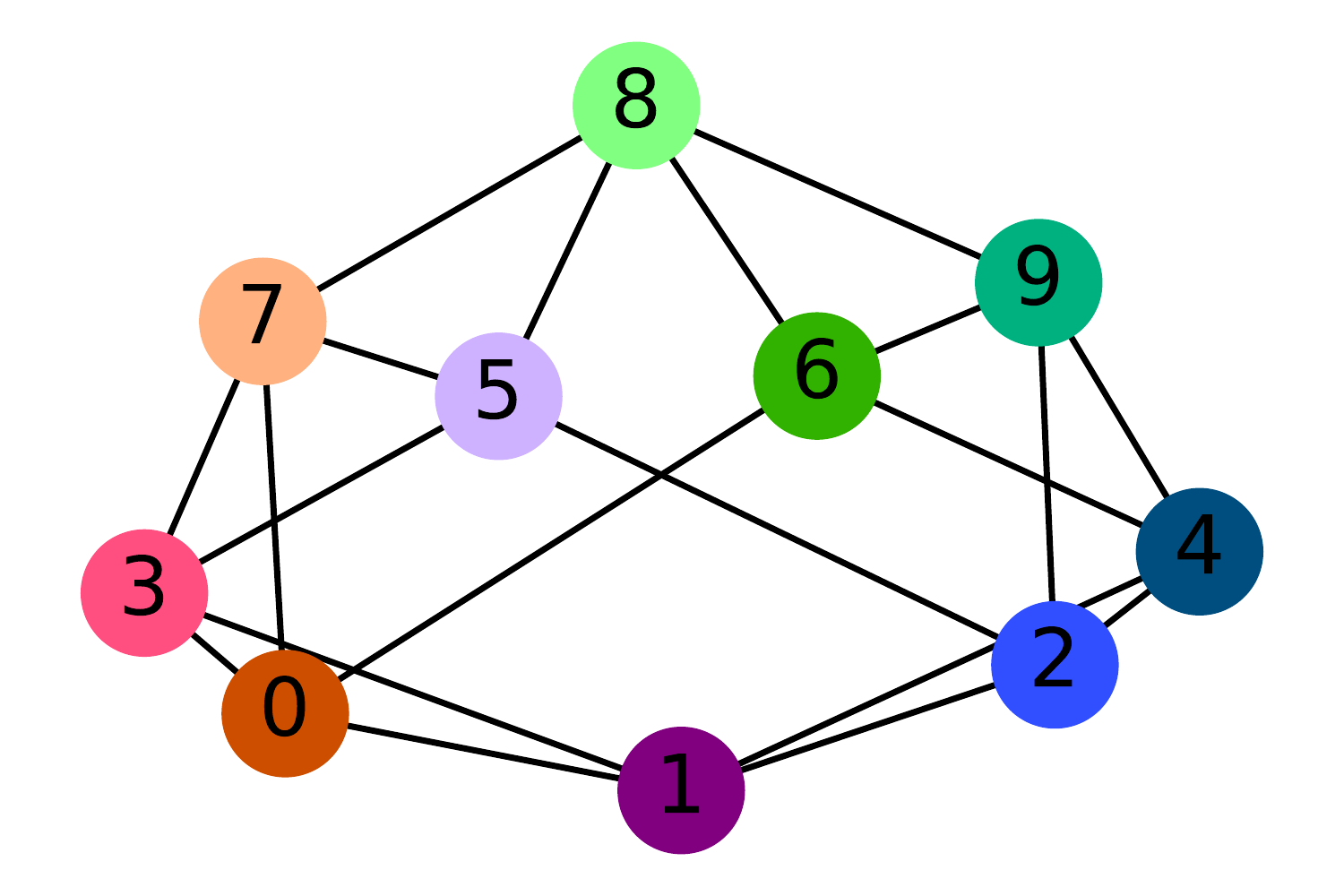}
        }
    \subfigure[$G'$]{
        \label{Fig.2wltesteden.3}
        \includegraphics[width=0.305\linewidth]{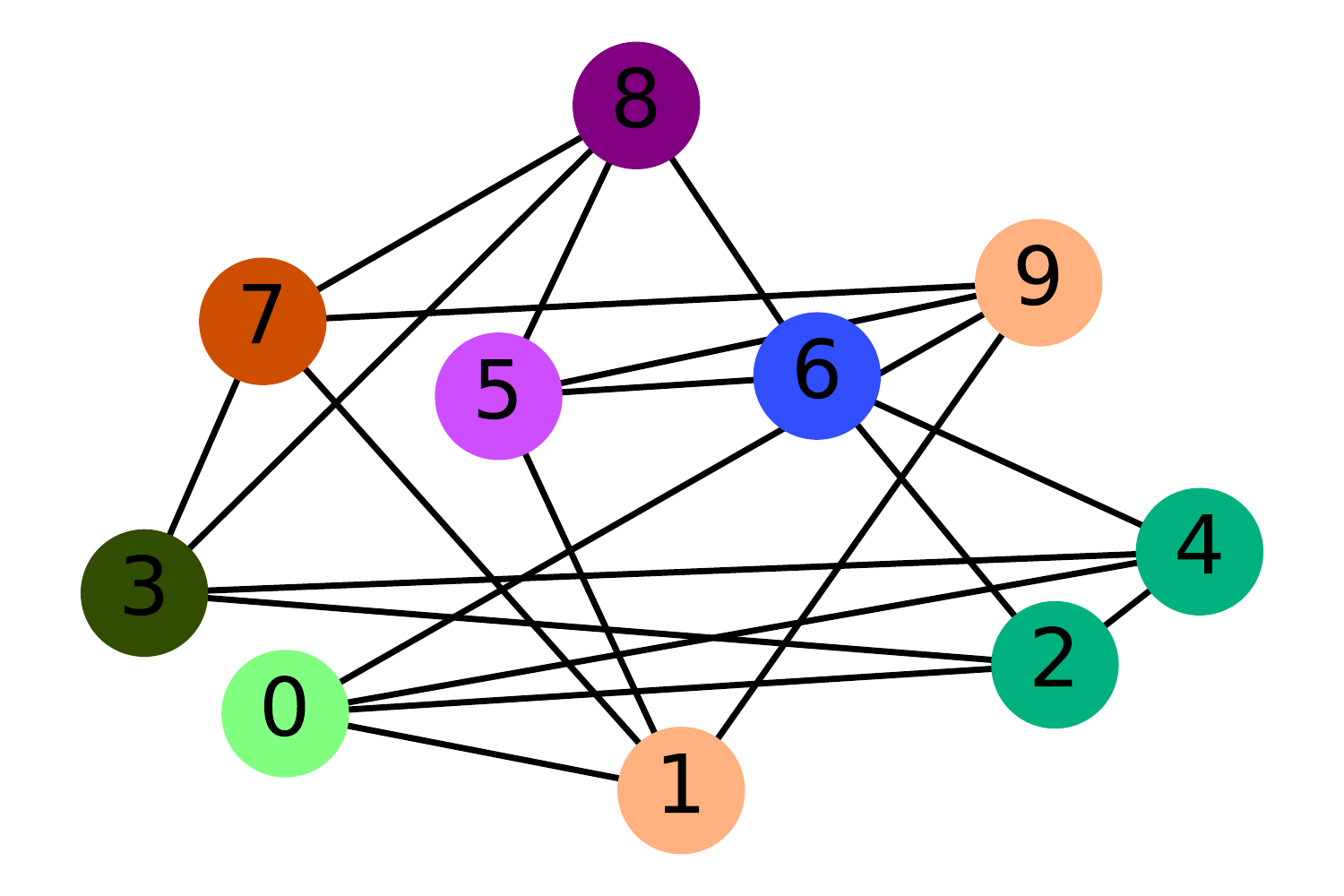}
        }
    \caption{EDEN on 2-WL equivalent graph pair.}
    \label{Fig.2wltesteden}
    \centering  
\end{figure}
\begin{figure}
   \centering  
    \includegraphics[width=0.75\linewidth]{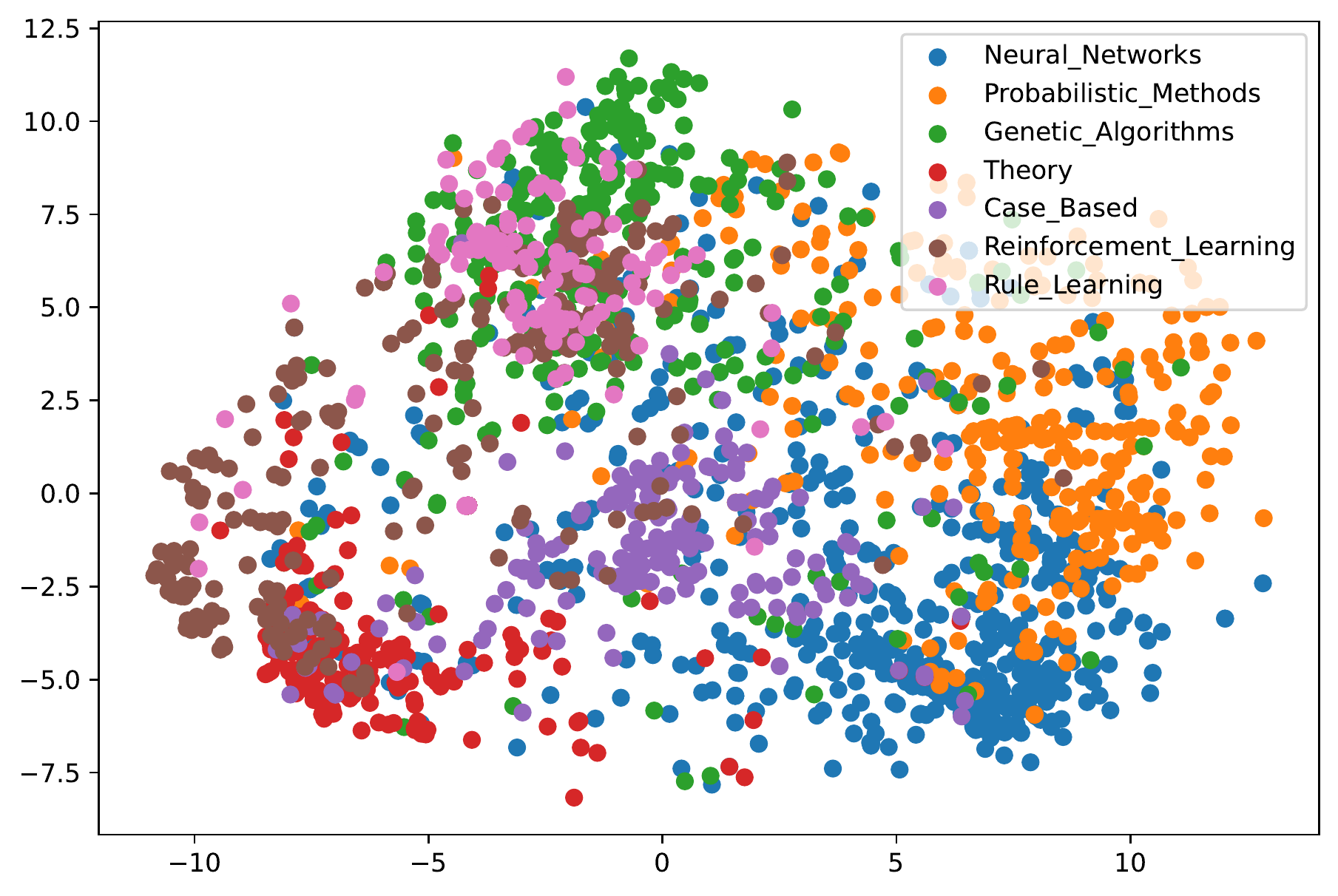}
    \caption{EDEN on Cora dataset \cite{cora}.}
    \label{Fig.2deden}
    \centering  
\end{figure}

\section{Related Works}
\label{sec:related}
\textbf{Non-equivariant GNN}
Representing non-trivial position information for Non-Euclidean structured data is still challenging. 
PGNN \cite{PGNN} tackles this problem via randomly selecting nodes as anchors and aggregating anchors' information to the target node,
but a graph with $n$ nodes requires at least $O(\log^2 n)$ anchors to guarantee the expressiveness \cite{bourgain1985lipschitz}. 
DE-GNN \cite{DEGNN} proves its expressive power in the circumstance where parts of nodes' representations in a graph are given, and distance-based aggregation is applied to learn the remaining nodes' representations. 
Nevertheless, these methods rely on selecting ``anchors'', which means they cannot be transferred to downstream tasks and do not have the property of permutation-equivariance.


\textbf{Higher-order GNN}
It is natural to design advanced GNNs by enhancing expressiveness. F-GNN \cite{FGNN} introduces the k-Folklore graph isomorphism test to improve the expressive power. Recent work \cite{azizian2020expressive} proves that F-GNN has the same expressiveness as KGNN \cite{KGNN}.
GNNML3 \cite{breaking} ensures its expressiveness with the Matrix Language theory. 
K-MPNNs \cite{DBLP:journals/corr/abs-2204-04661} analyses variants of Higher-order GNNs \cite{123gnn} with Tensor Language.
The above methods have high computational complexity and are time-consuming during training.

\textbf{Non-MPNNs}
Identity-aware GNNs (ID-GNNs) \cite{IDGNN} are trained with ego graph-based aggregation functions.
One of its two variants, ID-GNN-full, can work as a plug-in to the aggregation functions of existing MPNNs. 
The other of its two variants, ID-GNN-fast, can be 
viewed as a kind of distance encoding.
But the disadvantages are that training ID-GNN-full is expensive because it requires ego graphs on all nodes, while ID-GNN-fast tends to have low prediction accuracy. 
Similar to EDEN, GraphSNN \cite{GraphSNN} builds a new perspective on the graph isomorphism test. It reconstructs the aggregation function with overlapping subgraphs, resulting in a lower computational complexity than other higher-order methods. It works well on graph-level tasks but is not good at local-view tasks such as node classification. Local Relation Pooling (LPR) \cite{Subcount} is a universal approximator for permutation-invariant functions based on local subgraph counts. Natural GNN \cite{Natural} passes messages with kernels that depend on the local graph structure. 

\textbf{Plug-in encodings} 
It's practical to employ matrix decomposition to obtain Positional Encodings (PE), such as the Graph Transformer
\cite{GraphTransformers} .
The Laplacian PE~\cite{benchmarking} is a representative eigen-decomposition-based method. But Laplacian PE cannot help the model achieve expressive power beyond the 1-WL and has less semantic information in real-world datasets. 
More details can be found in Appendix \ref{app:lap}.
It is also worth mentioning that all eigenvalue decomposition-based methods have a time complexity of at least $O(n^3)$. 


\section{Conclusion}
This paper presents a new distance encoding method called EDEN.
We theoretically prove that EDEN has the property of permutation-equivariance, which is unavailable for existing plug-in encoding methods for GNN.
We demonstrate EDEN's powerful expressiveness beyond the 1-WL test and its superiority in dealing with real-world graphs.
EDEN can be employed as a plug-in extension to existing MPNNs. The experimental results show that EDEN can improve the performance of conventional MPNNs to the state-of-the-art level in node-, edge-, and graph-level tasks.
Meanwhile, the time complexity of obtaining the distance matrix is still high, and employing PCA is less efficient. 
In future work, we will focus on how to reduce the computational complexity of EDEN. 

\bibliography{aaai23}
\clearpage
\appendix

\section{The Algorithm}
\label{app:code}

The pseudo-code of EDEN is summarized as  Algorithm \ref{algorithm:eden}. 
We implement our code in Python.  
The ``\emph{zeros}'' and ``\emph{nanmax}'' are functions in NumPy\footnote{https://github.com/numpy/numpy}, the former assigns elements to zero, and the latter obtains the maximum number while ignoring NaN (Not a Number) in Python.
The \emph{Dijkstra algorithm} is implemented by NetworkX\footnote{https://github.com/networkx/networkx}, and ``PCA'' is called from the Scikit-learn\footnote{https://github.com/scikit-learn/scikit-learn} package. We provide core python codes.

\begin{python}[t]
from sklearn.decomposition import PCA
import networkx as nx
import numpy as np

def graph_to_eden(G, k=3):
    Distance = dict(
    nx.all_pairs_shortest_path_length(G)
    )
    num_node = len(G)
    D = np.zeros([num_node, num_node])
    for i in range(num_node):
        for j in range(num_node):
            D[i,j] = Distance[i][j]
    PCA1 = PCA(n_components=k)
    COS_D = np.cos(np.pi * D /...
                   np.nanmax(D, axis=1,
                   keepdims=True))
    COS_D[np.isnan(cos_dis)]=-1.5
    EDEN = PCA1.fit_transform(COS_D)
    return EDEN
\end{python}
\begin{algorithm}
	\caption{Equivariant distance encoding
	}
	\label{algorithm:eden}
    \KwIn{The adjacency matrix $\boldsymbol{A}$, dimension $k$, the number of nodes $N$}
    \KwOut{Eden's matrix $\boldsymbol{F_{\hat{\boldsymbol{D}}}}$}
    $\boldsymbol{D}$ = \emph{Dijkstra}($\boldsymbol{A}$); \tcp{Employ \emph{Dijkstra algorithm} to obtain the distance matrix}
    $\boldsymbol{d}$ = \emph{zeros}($[N, 1]$);\tcp{Initialize the diameter vector $\boldsymbol{d}$}
    $\hat{\boldsymbol{D}}$ = \emph{zeros}($[N, N]$);\tcp{Initialize the phase propagation matrix $\hat{\boldsymbol{D}}$}
    \For{$i=1,2,\cdots,N$\tcp{Perform phase propagation in Eq.\eqref{eq:phase}}}
    {$d_i = \text{\emph{nanmax}}(\boldsymbol{D_{i,:})} $\; 
    
    \For{$j=1,2,\cdots,N$ }{
    \eIf{$\boldsymbol{D}_{i,j}\neq \text{NaN}$}{$\hat{\boldsymbol{D}}_{i,j}= \cos{(\frac{\pi \times \boldsymbol{D}_{i,j}}{d_i})}$\; }{$\hat{\boldsymbol{D}}_{i,j}=-1.5$}
	}
	}
	$\boldsymbol{F_{\hat{\boldsymbol{D}}}}=$PCA($\hat{\boldsymbol{D}}$, $k$); \tcp{Perform $k$ dimensional PCA in Eq.\eqref{eq:pca}}
	\Return $\boldsymbol{F_{\hat{\boldsymbol{D}}}}$
\end{algorithm}

\section{EDEN on Different Graph Pairs}
\label{app:regular}
 We plot the result of a 4-regular graph isomorphism test in Figure \ref{Fig.eden}. In this part, we describe the computational details and show more examples with different equivalent levels, where the $k$-WL equivalent means non-less than the $k+1$-WL test can distinguish them \cite{breaking}. 
\subsection{Computational details}
Considering two graphs $G$ and $H$ with adjacency matrices $A_G$ and $A_H$, respectively. We apply Algorithm \ref{algorithm:eden} to compute the 3-dimensional EDENs for these graphs:
\begin{equation}
\label{eq:appeden1}
    \begin{aligned}
    \boldsymbol{F_{\hat{\boldsymbol{D}}}}_G = \text{EDEN}(\boldsymbol{A}_G,3)&, \\
    \boldsymbol{F_{\hat{\boldsymbol{D}}}}_H = \text{EDEN}(\boldsymbol{A}_H,3)&,
    \end{aligned}
\end{equation}
where $\text{EDEN}(\cdot,3)$ denotes getting the 3-dimensional EDENs in Algorithm \ref{algorithm:eden}.
Then the computed EDENs are normalized within $[0,1]$ :
\begin{equation}
\label{eq:appeden2}
    \begin{aligned}
    \boldsymbol{F_{\hat{\boldsymbol{D}}}}'_G = \frac{\boldsymbol{F_{\hat{\boldsymbol{D}}}}_G - \min(\boldsymbol{F_{\hat{\boldsymbol{D}}}}_G)}{\max(\boldsymbol{F_{\hat{\boldsymbol{D}}}}_G) - \min(\boldsymbol{F_{\hat{\boldsymbol{D}}}}_G)}, &\\
    \boldsymbol{F_{\hat{\boldsymbol{D}}}}'_H = \frac{\boldsymbol{F_{\hat{\boldsymbol{D}}}}_H - \min(\boldsymbol{F_{\hat{\boldsymbol{D}}}}_H)}{\max(\boldsymbol{F_{\hat{\boldsymbol{D}}}}_H) - \min(\boldsymbol{F_{\hat{\boldsymbol{D}}}}_H)}&,
    \end{aligned}
\end{equation}
 where min($\cdot$) and max($\cdot$) denote the minimum and maximum values in each row, respectively.
 We obtain the graph $G'$ by a random permutation $\sigma$ on graph $G$,
 \begin{equation}
     G' = \sigma \star G.
 \end{equation}
 Similarly, the normalized EDEN $\boldsymbol{F_{\hat{\boldsymbol{D}}}}'_{G'}$ can be computed by Eq. (\ref{eq:appeden1}) and Eq. (\ref{eq:appeden2}).
 We plot $G, H$, and $G'$ with the same layout, which means nodes with the same ID will have their specific positions in different graphs. 
 To facilitate visualization, we treat the 3-dimensional EDEN as the Red-Green-Blue (RGB) brightness to color each node.
 We expect nodes in isomorphic graph pairs (\textit{i.e.,} $G$ and $G'$) to have same colors but with different IDs, and the neighbor relationship of nodes with different colors remain unchanged.
 For example, if a green node in graph $G$ is connected with an orange node and an aqua blue node, the green node in graph $G'$ should also be connected with an orange node and an aqua blue node.
There is no such correspondence between the colors of nodes in non-isomorphic graph pairs, (\emph{e.g.,} $G$ and $H$).
 We provide more visual examples of graph pairs that can show the expressive power of EDEN.
 
\subsection{1-WL equivalent graphs}

There are three graphs in Fig. \ref{Fig.wl1}, Decalin ($G$), Bicyclopentyl  ($H$), and randomly permuted Decalin ($G'$). 
$G$ and $H$ are 1-WL equivalent graph pairs and the adjacency matrices $\boldsymbol{A}_G$ and $\boldsymbol{A}_H$ are written as: 

\scriptsize{$$
A_{G}=\left(\begin{array}{llllllllll}
0 & 1 & 1 & 0 & 0 & 0 & 1 & 0 & 0 & 0 \\
1 & 0 & 0 & 0 & 0 & 1 & 0 & 0 & 0 & 1 \\
1 & 0 & 0 & 1 & 0 & 0 & 0 & 0 & 0 & 0 \\
0 & 0 & 1 & 0 & 1 & 0 & 0 & 0 & 0 & 0 \\
0 & 0 & 0 & 1 & 0 & 1 & 0 & 0 & 0 & 0 \\
0 & 1 & 0 & 0 & 1 & 0 & 0 & 0 & 0 & 0 \\
1 & 0 & 0 & 0 & 0 & 0 & 0 & 1 & 0 & 0 \\
0 & 0 & 0 & 0 & 0 & 0 & 1 & 0 & 1 & 0 \\
0 & 0 & 0 & 0 & 0 & 0 & 0 & 1 & 0 & 1 \\
0 & 1 & 0 & 0 & 0 & 0 & 0 & 0 & 1 & 0
\end{array}\right), $$
$$
A_{H}=\left(\begin{array}{llllllllll}
0 & 1 & 1 & 0 & 0 & 1 & 0 & 0 & 0 & 0 \\
1 & 0 & 0 & 0 & 0 & 0 & 1 & 0 & 0 & 1 \\
1 & 0 & 0 & 1 & 0 & 0 & 0 & 0 & 0 & 0 \\
0 & 0 & 1 & 0 & 1 & 0 & 0 & 0 & 0 & 0 \\
0 & 0 & 0 & 1 & 0 & 1 & 0 & 0 & 0 & 0 \\
1 & 0 & 0 & 0 & 1 & 0 & 0 & 0 & 0 & 0 \\
0 & 1 & 0 & 0 & 0 & 0 & 0 & 1 & 0 & 0 \\
0 & 0 & 0 & 0 & 0 & 0 & 1 & 0 & 1 & 0 \\
0 & 0 & 0 & 0 & 0 & 0 & 0 & 1 & 0 & 1 \\
0 & 1 & 0 & 0 & 0 & 0 & 0 & 0 & 1 & 0
\end{array}\right).
$$}
\begin{figure*}
    \centering  
    \subfigure[$G$]{
        \label{Fig.wl1.1}
        \includegraphics[width=0.3\textwidth]{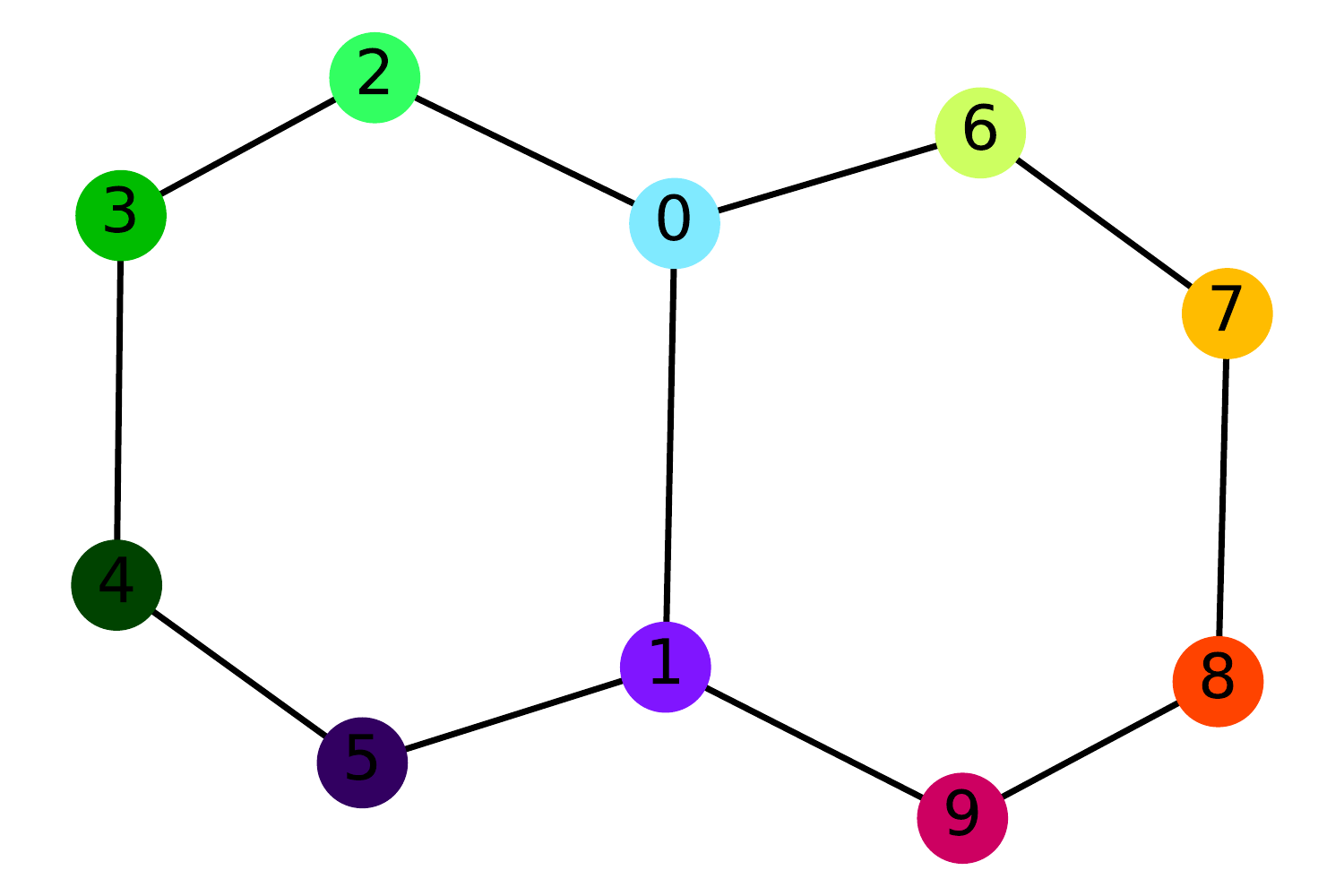}
        }
    \subfigure[$H$]{
        \label{Fig.wl1.2}
        \includegraphics[width=0.3\textwidth]{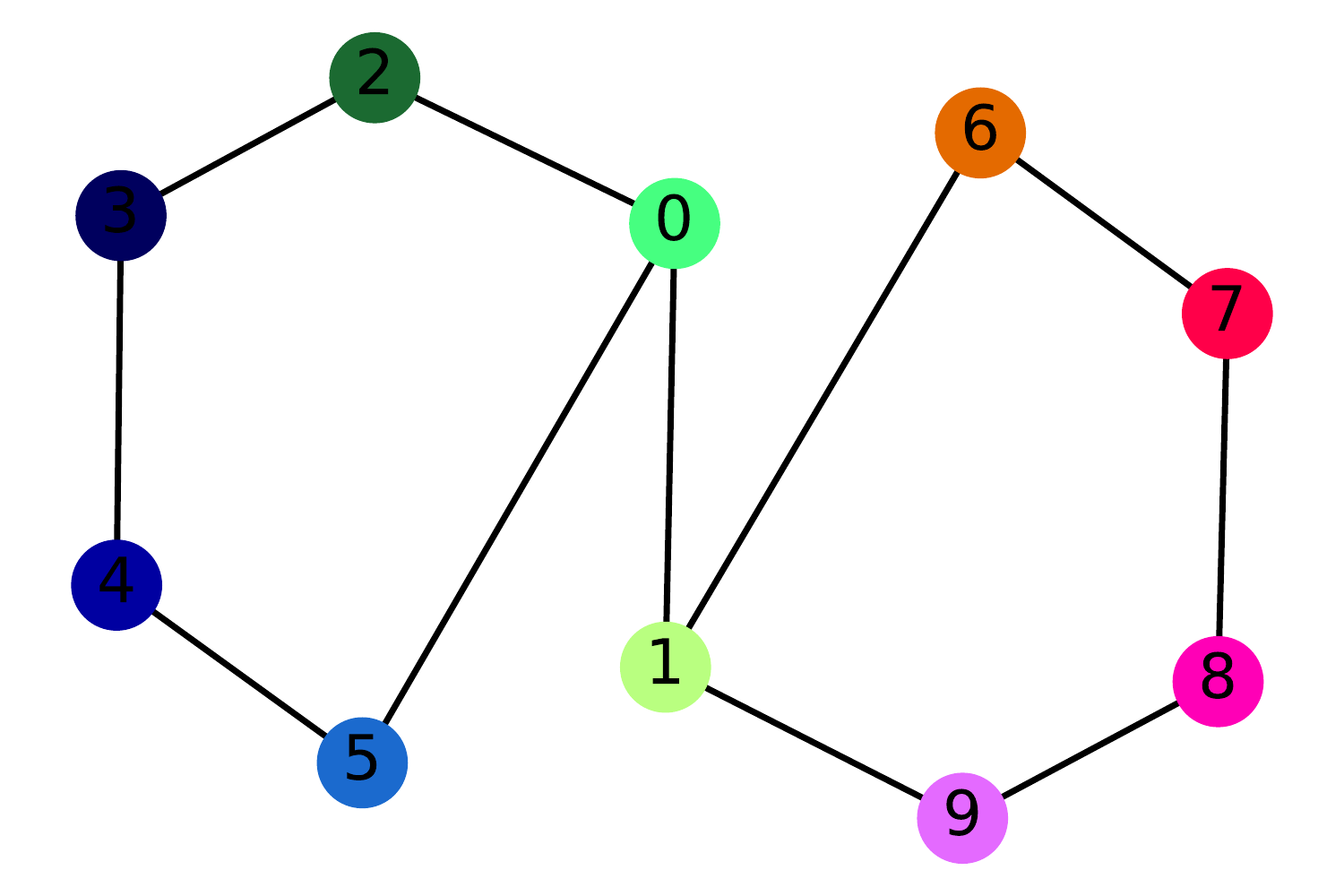}
        }
    \subfigure[$G'$]{
        \label{Fig.wl1.3}
        \includegraphics[width=0.3\textwidth]{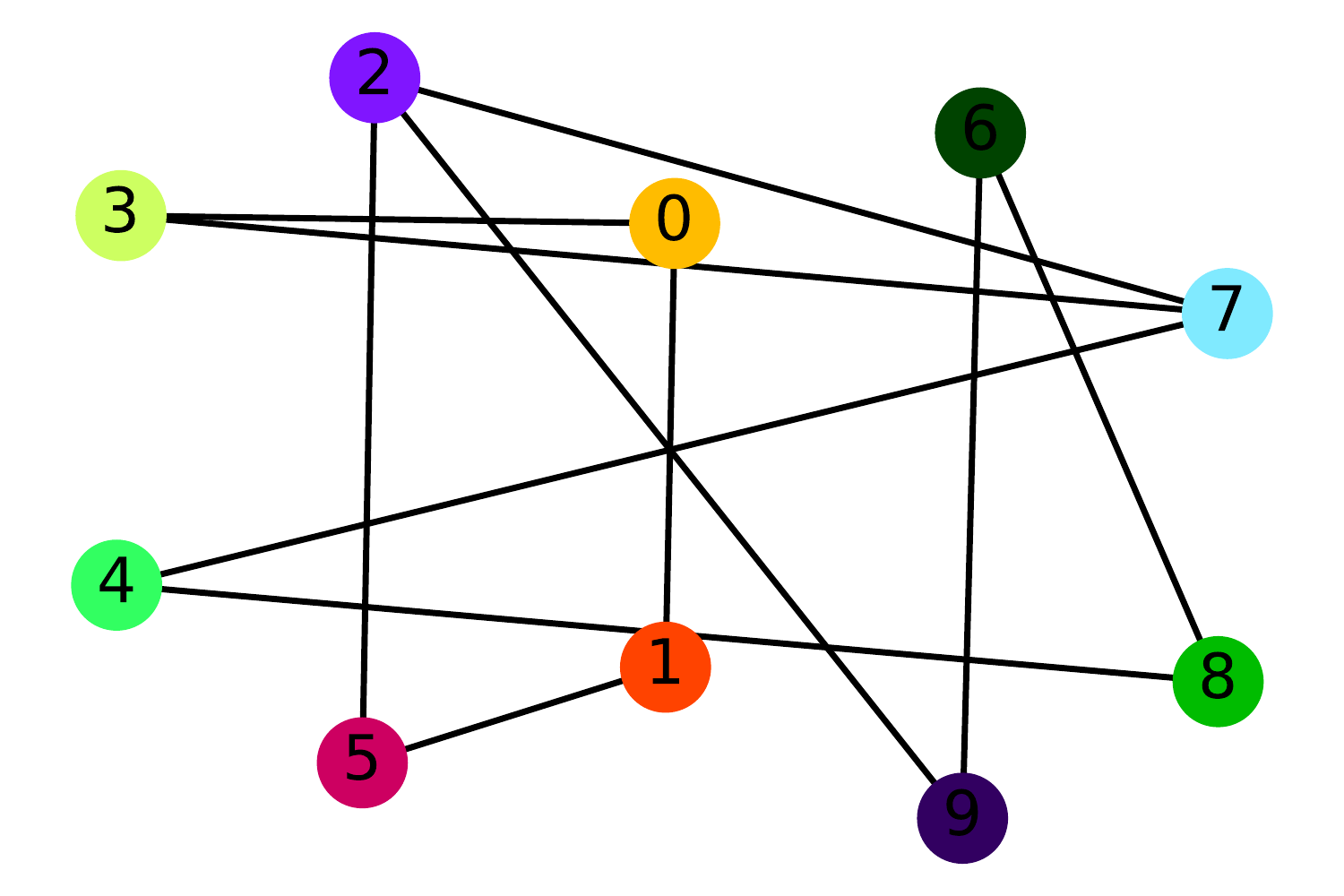}
        }
    \caption{Decalin and Bicyclopentyl graphs are 1-WL equivalent \cite{breaking}.}
    \label{Fig.wl1}
\end{figure*}
\subsection{2-WL equivalent graphs}
\begin{figure*}
    \centering  
    \subfigure[$G$]{
        \label{Fig.wl2.1}
        \includegraphics[width=0.3\textwidth]{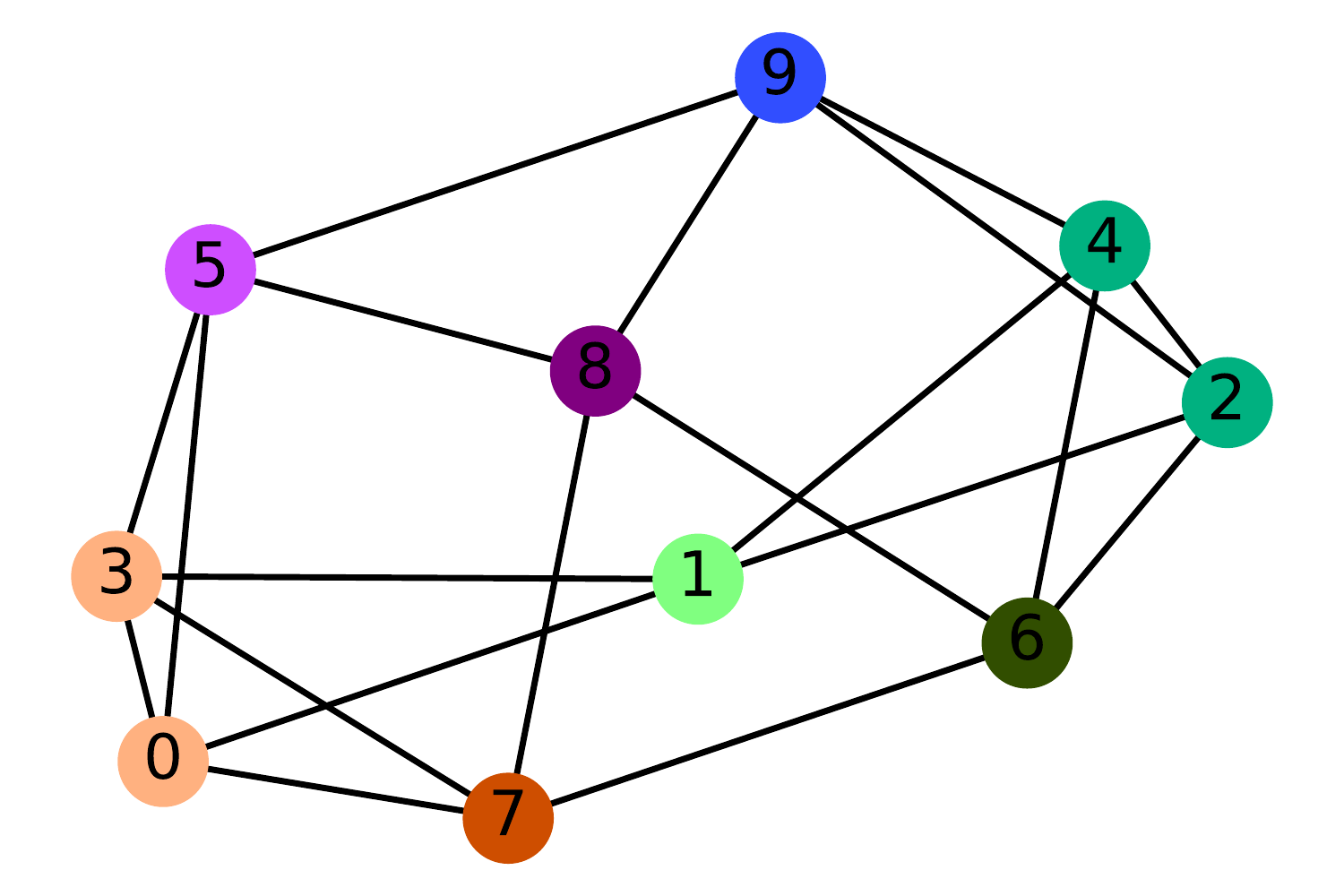}
        }
    \subfigure[$H$]{
        \label{Fig.wl2.2}
        \includegraphics[width=0.3\textwidth]{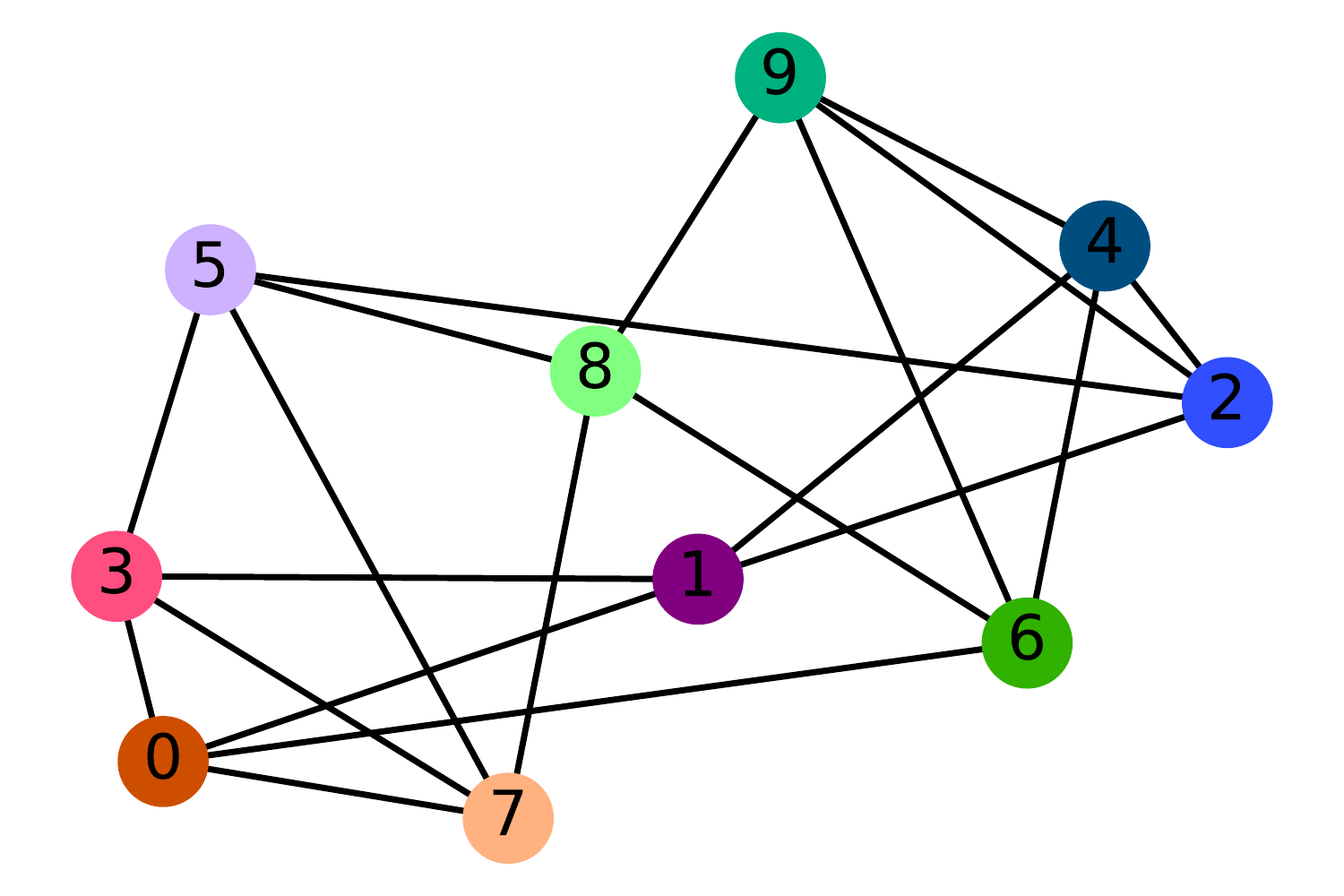}
        }
    \subfigure[$G'$]{
        \label{Fig.wl2.3}
        \includegraphics[width=0.3\textwidth]{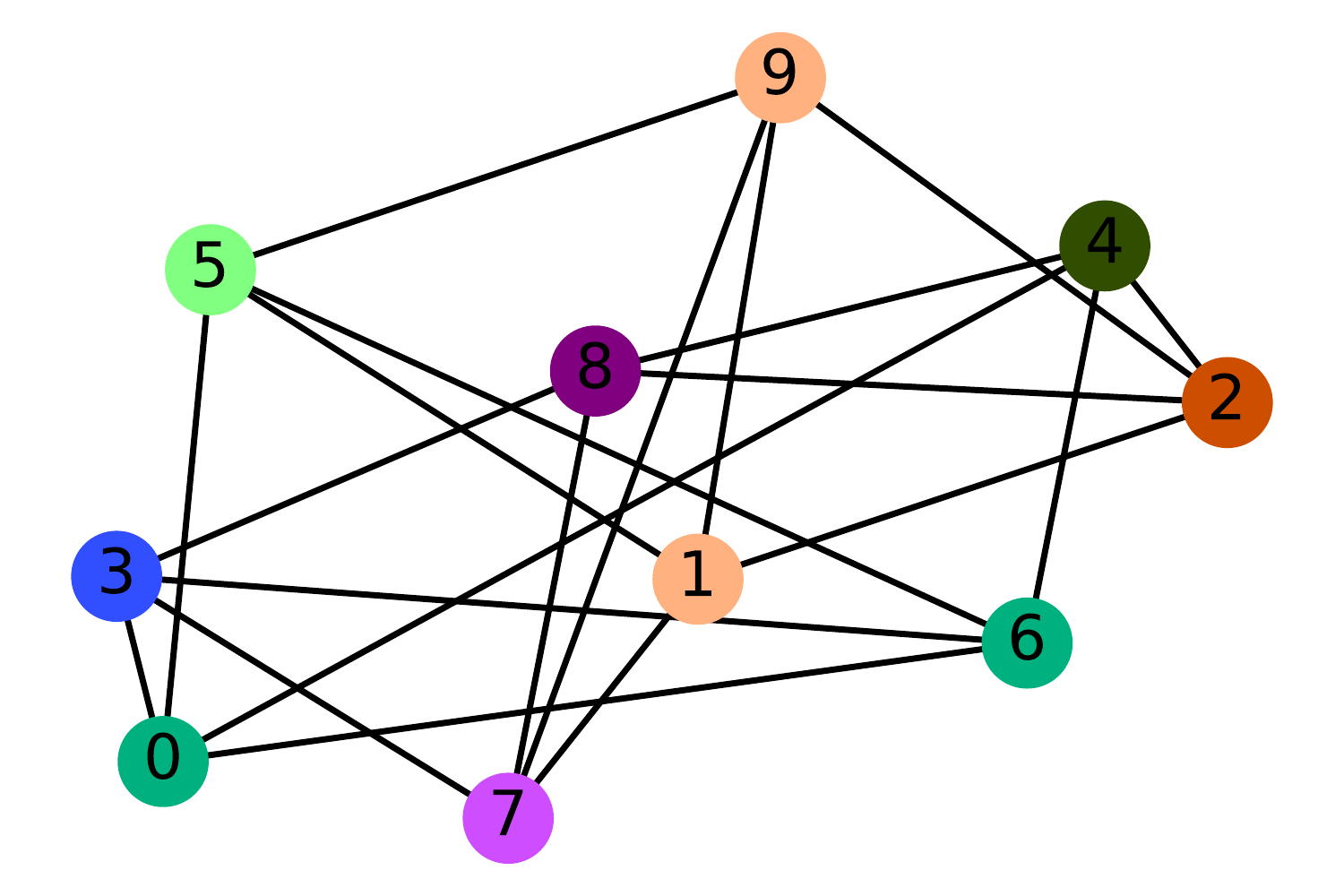}
        }
    \caption{Cospectral and 4-regular graphs are 2-WL equivalent \cite{Dam2003WhichGA}.}
    \label{Fig.wl2}
\end{figure*}
Fig. \ref{Fig.wl2} shows non-isomorphic 2-WL equivalent graphs Cospectral ($G$) and 4-regular graph ($H$) from \cite{Dam2003WhichGA}. The adjacency matrices $\boldsymbol{A}_G$ and $\boldsymbol{A}_H$ can be written as:
\scriptsize{
$$
A_{G}=\left(\begin{array}{llllllllll}
0 & 1 & 0 & 1 & 0 & 1 & 0 & 1 & 0 & 0 \\
1 & 0 & 1 & 1 & 1 & 0 & 0 & 0 & 0 & 0 \\
0 & 1 & 0 & 0 & 1 & 0 & 1 & 0 & 0 & 1 \\
1 & 1 & 0 & 0 & 0 & 1 & 0 & 1 & 0 & 0 \\
0 & 1 & 1 & 0 & 0 & 0 & 1 & 0 & 0 & 1 \\
1 & 0 & 0 & 1 & 0 & 0 & 0 & 0 & 1 & 1 \\
0 & 0 & 1 & 0 & 1 & 0 & 0 & 1 & 1 & 0 \\
1 & 0 & 0 & 1 & 0 & 0 & 1 & 0 & 1 & 0 \\
0 & 0 & 0 & 0 & 0 & 1 & 1 & 1 & 0 & 1 \\
0 & 0 & 1 & 0 & 1 & 1 & 0 & 0 & 1 & 0
\end{array}\right), $$
$$
A_{H}=\left(\begin{array}{llllllllll}
0 & 1 & 0 & 1 & 0 & 0 & 1 & 1 & 0 & 0 \\
1 & 0 & 1 & 1 & 1 & 0 & 0 & 0 & 0 & 0 \\
0 & 1 & 0 & 0 & 1 & 1 & 0 & 0 & 0 & 1 \\
1 & 1 & 0 & 0 & 0 & 1 & 0 & 1 & 0 & 0 \\
0 & 1 & 1 & 0 & 0 & 0 & 1 & 0 & 0 & 1 \\
0 & 0 & 1 & 1 & 0 & 0 & 0 & 1 & 1 & 0 \\
1 & 0 & 0 & 0 & 1 & 0 & 0 & 0 & 1 & 1 \\
1 & 0 & 0 & 1 & 0 & 1 & 0 & 0 & 1 & 0 \\
0 & 0 & 0 & 0 & 0 & 1 & 1 & 1 & 0 & 1 \\
0 & 0 & 1 & 0 & 1 & 0 & 1 & 0 & 1 & 0
\end{array}\right).
$$
}
We also plot EDENs for $G$ and $H$, and randomly permute nodes in $G$ to get $G'$.
EDEN can distinguish non-isomorphic graphs under the premise that isomorphic graphs have same color layout, Up to 2-WL Equivalent.

\subsection{3-WL equivalent graphs}
We propose the fail example of our EDEN on strongly regular graphs, which are 3-WL equivalent in Fig. \ref{Fig.wl3}.
\begin{figure*}
    \centering  
    \subfigure[$G$]{
        \label{Fig.wl3.1}
        \includegraphics[width=0.3\textwidth]{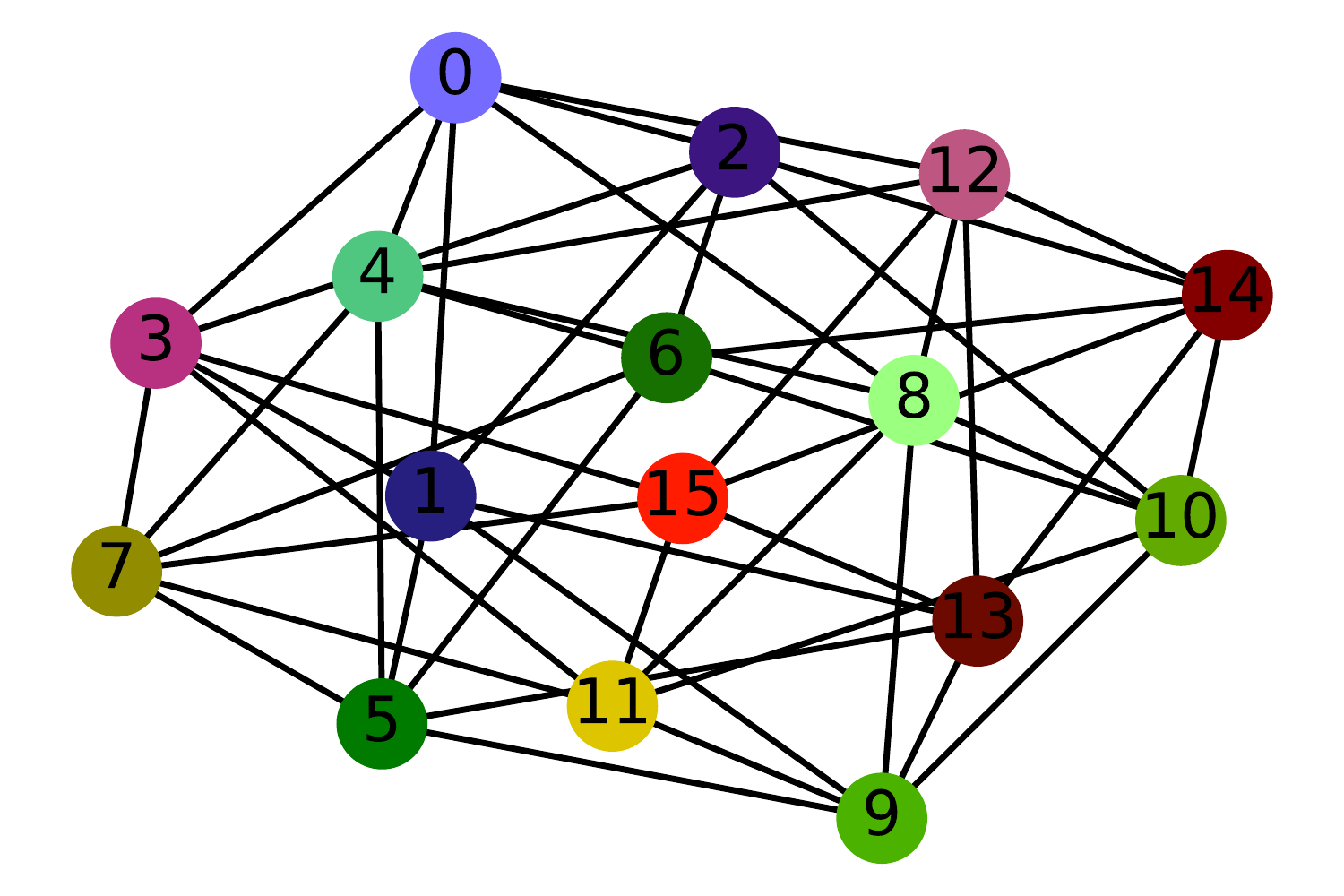}
        }
    \subfigure[$H$]{
        \label{Fig.wl3.2}
        \includegraphics[width=0.3\textwidth]{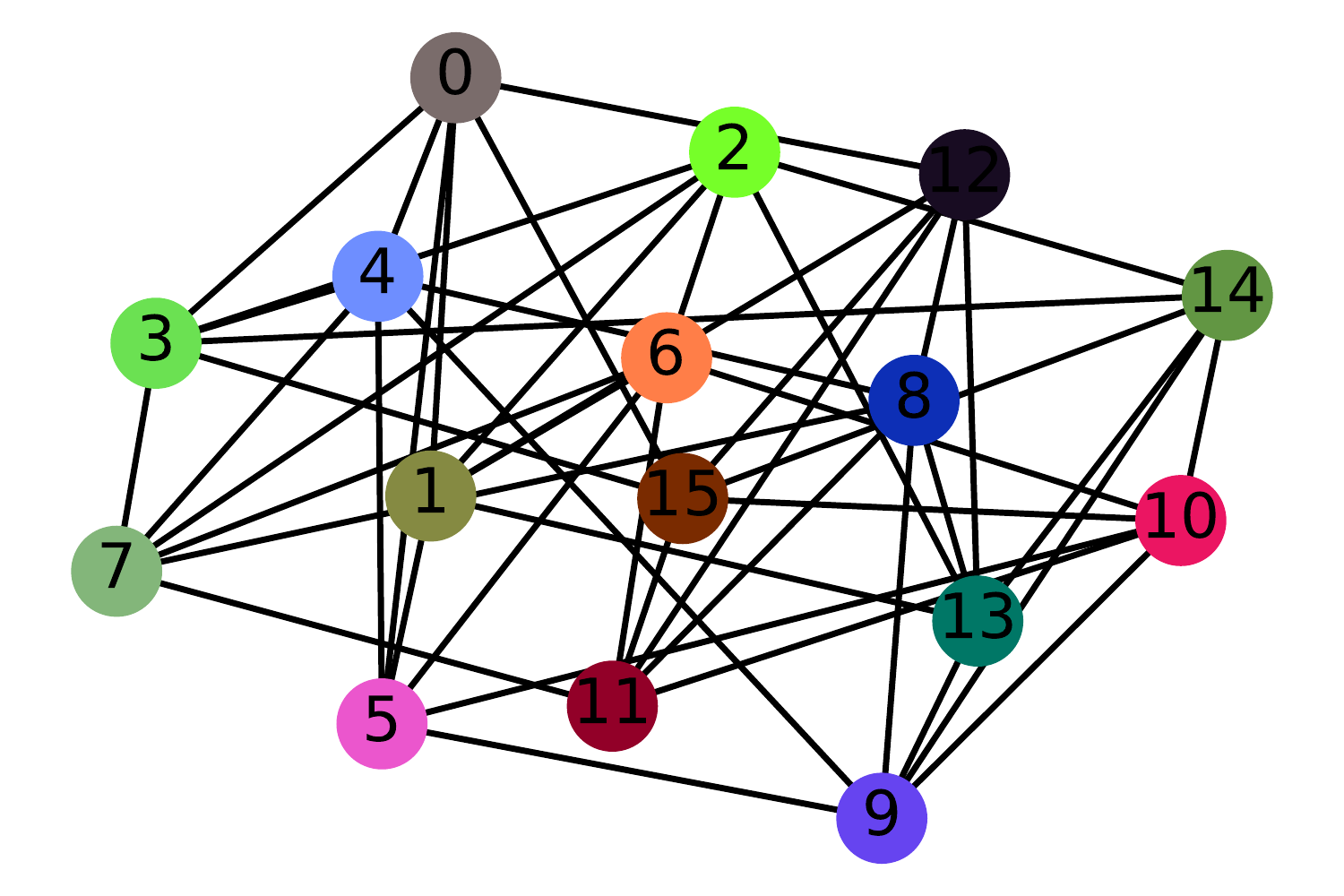}
        }
    \subfigure[$G'$]{
        \label{Fig.wl3.3}
        \includegraphics[width=0.3\textwidth]{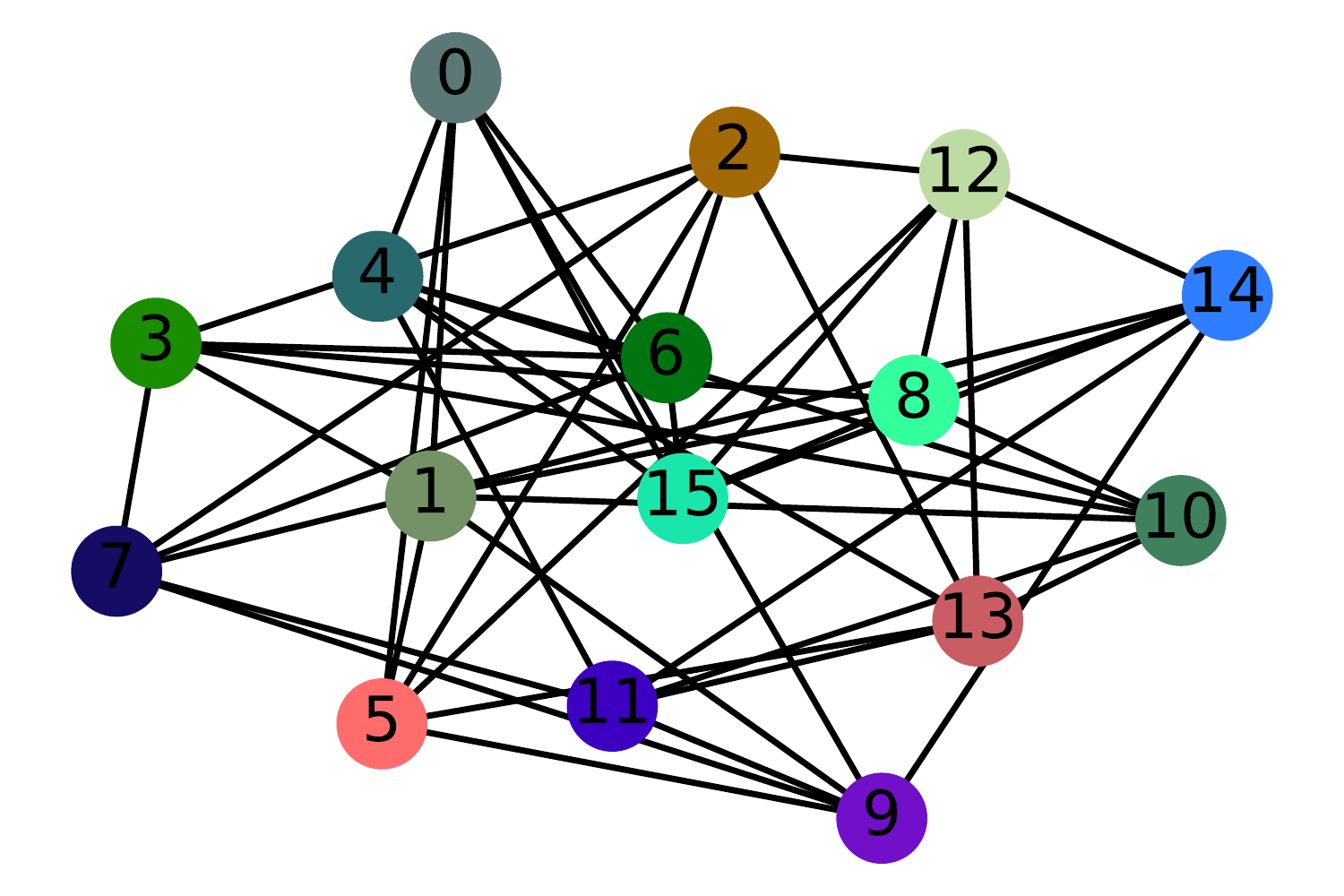}
        }
    \caption{The failure case on strongly regular graph pair. 4 $\times$ 4-rook's graph and the Shrikhande graph are 3-WL equivalent \cite{Arvind2020OnWI}.}
    \label{Fig.wl3}
\end{figure*}
 Although EDEN has different color layouts on the two non-isomorphic graphs ($G$ and $H$), the color layouts on the isomorphic graphs ($G$ and $G'$) are also different. This is because EDEN is very different from other algorithms for the graph isomorphism test, \emph{i.e.,} \textbf{EDEN may map isomorphic strongly regular graphs to different results}. In this case, the adjacency matrices $A_G$ and $A_H$ are as follows,
 
$A_G=$
$$
\tiny{
\left(\begin{array}{llllllllllllllll}
0 & 1 & 1 & 1 & 1 & 0 & 0 & 0 & 1 & 0 & 0 & 0 & 1 & 0 & 0 & 0 \\
1 & 0 & 1 & 1 & 0 & 1 & 0 & 0 & 0 & 1 & 0 & 0 & 0 & 1 & 0 & 0 \\
1 & 1 & 0 & 1 & 0 & 0 & 1 & 0 & 0 & 0 & 1 & 0 & 0 & 0 & 1 & 0 \\
1 & 1 & 1 & 0 & 0 & 0 & 0 & 1 & 0 & 0 & 0 & 1 & 0 & 0 & 0 & 1 \\
1 & 0 & 0 & 0 & 0 & 1 & 1 & 1 & 1 & 0 & 0 & 0 & 1 & 0 & 0 & 0 \\
0 & 1 & 0 & 0 & 1 & 0 & 1 & 1 & 0 & 1 & 0 & 0 & 0 & 1 & 0 & 0 \\
0 & 0 & 1 & 0 & 1 & 1 & 0 & 1 & 0 & 0 & 1 & 0 & 0 & 0 & 1 & 0 \\
0 & 0 & 0 & 1 & 1 & 1 & 1 & 0 & 0 & 0 & 0 & 1 & 0 & 0 & 0 & 1 \\
1 & 0 & 0 & 0 & 1 & 0 & 0 & 0 & 0 & 1 & 1 & 1 & 1 & 0 & 0 & 0 \\
0 & 1 & 0 & 0 & 0 & 1 & 0 & 0 & 1 & 0 & 1 & 1 & 0 & 1 & 0 & 0 \\
0 & 0 & 1 & 0 & 0 & 0 & 1 & 0 & 1 & 1 & 0 & 1 & 0 & 0 & 1 & 0 \\
0 & 0 & 0 & 1 & 0 & 0 & 0 & 1 & 1 & 1 & 1 & 0 & 0 & 0 & 0 & 1 \\
1 & 0 & 0 & 0 & 1 & 0 & 0 & 0 & 1 & 0 & 0 & 0 & 0 & 1 & 1 & 1 \\
0 & 1 & 0 & 0 & 0 & 1 & 0 & 0 & 0 & 1 & 0 & 0 & 1 & 0 & 1 & 1 \\
0 & 0 & 1 & 0 & 0 & 0 & 1 & 0 & 0 & 0 & 1 & 0 & 1 & 1 & 0 & 1 \\
0 & 0 & 0 & 1 & 0 & 0 & 0 & 1 & 0 & 0 & 0 & 1 & 1 & 1 & 1 & 0
\end{array}\right),
}
$$
$A_H=$
$$
\tiny{
\left(\begin{array}{llllllllllllllll}
0 & 1 & 0 & 1 & 1 & 1 & 0 & 0 & 0 & 0 & 0 & 0 & 1 & 0 & 0 & 1 \\
1 & 0 & 1 & 0 & 0 & 1 & 1 & 0 & 0 & 0 & 0 & 0 & 1 & 1 & 0 & 0 \\
0 & 1 & 0 & 1 & 0 & 0 & 1 & 1 & 0 & 0 & 0 & 0 & 0 & 1 & 1 & 0 \\
1 & 0 & 1 & 0 & 1 & 0 & 0 & 1 & 0 & 0 & 0 & 0 & 0 & 0 & 1 & 1 \\
1 & 0 & 0 & 1 & 0 & 1 & 0 & 1 & 1 & 1 & 0 & 0 & 0 & 0 & 0 & 0 \\
1 & 1 & 0 & 0 & 1 & 0 & 1 & 0 & 0 & 1 & 1 & 0 & 0 & 0 & 0 & 0 \\
0 & 1 & 1 & 0 & 0 & 1 & 0 & 1 & 0 & 0 & 1 & 1 & 0 & 0 & 0 & 0 \\
0 & 0 & 1 & 1 & 1 & 0 & 1 & 0 & 1 & 0 & 0 & 1 & 0 & 0 & 0 & 0 \\
0 & 0 & 0 & 0 & 1 & 0 & 0 & 1 & 0 & 1 & 0 & 1 & 1 & 1 & 0 & 0 \\
0 & 0 & 0 & 0 & 1 & 1 & 0 & 0 & 1 & 0 & 1 & 0 & 0 & 1 & 1 & 0 \\
0 & 0 & 0 & 0 & 0 & 1 & 1 & 0 & 0 & 1 & 0 & 1 & 0 & 0 & 1 & 1 \\
0 & 0 & 0 & 0 & 0 & 0 & 1 & 1 & 1 & 0 & 1 & 0 & 1 & 0 & 0 & 1 \\
1 & 1 & 0 & 0 & 0 & 0 & 0 & 0 & 1 & 0 & 0 & 1 & 0 & 1 & 0 & 1 \\
0 & 1 & 1 & 0 & 0 & 0 & 0 & 0 & 1 & 1 & 0 & 0 & 1 & 0 & 1 & 0 \\
0 & 0 & 1 & 1 & 0 & 0 & 0 & 0 & 0 & 1 & 1 & 0 & 0 & 1 & 0 & 1 \\
1 & 0 & 0 & 1 & 0 & 0 & 0 & 0 & 0 & 0 & 1 & 1 & 1 & 0 & 1 & 0
\end{array}\right).
}
$$
\subsection{Analysis on the above three cases}

The multiple roots of singular values may cause EDEN to fail on strongly regular graphs. We compute the three largest singular values of  $\boldsymbol{F_{\hat{\boldsymbol{D}}}}_G$ and $\boldsymbol{F_{\hat{\boldsymbol{D}}}}_H$ for each case:
\begin{itemize}
    \item 1-WL equivalent: [4.9790, 3.5061, 2.1254], [6.2486, 2.0653, 1.3309],
    \item 2-WL equivalent: [4.2360, 3.5615, 3.        ], [4.2360, 3.5615, 3.        ],
    \item 3-WL equivalent: [4., 4., 4.], [4., 4., 4.].  
\end{itemize}
Based on the above results, we have the following analysis. In the first case, EDENs of non-isomorphic graphs have different singular values. It is easy to judge that $G$ and $H$ are non-isomorphic. In the second case, although 4-regular graphs $G$ and $H$ share the same singular values, we can also distinguish them from EDEN. It indicates that EDEN is more expressive than using singular values as a graph readout. 
While for the last case, we find that the largest three singular values of both strongly regular graphs are equal to 4. 
As described in Eq. \eqref{eq:eigen decomposition}, PCA performs eigendecomposition of the original matrix,
the columns of the eigenvector matrix $\boldsymbol{V}$ can theoretically be sorted according to eigenvalues from largest to smallest. 
The two strong regular graphs in the last case have multiple identical eigenvalues at the same time. Therefore, the order of eigenvectors in $\boldsymbol{V}$ has randomness in the process of numerical calculation, which eventually leads to inconsistent results of the model.


\section{Graph Isomorphic Test}
\label{app:GIT}
Baseline methods utilize nodes' degrees as input features, which is considered the output of executing a 1-WL Test iteration \cite{breaking}.
But ENDEN obtains $k$-dimensional singular vectors after PCA, and isomorphic graphs may be projected to different embeddings because of the inevitable numerical precision caused by PCA.
Due to EDEN's nature, it is necessary to adjust the judgment threshold for EDEN.
We propose strict judgment conditions for graph isomorphism. To be specific, we randomly generate isomorphic graphs for each dataset, and apply EDEN to judge whether the graph pairs are non-isomorphic \textbf{under the premise that the known isomorphic pairs graphs are judged correctly}. 

\section{Experiments on Real-World Datasets}
\label{app:details}

\subsection{Experimental environments}
We implement codes with Python 3.7, PyTorch 1.8.0, and Cuda 11.3. Experiments are run on an Ubuntu 16.04 LTS server with the Intel i7-6900K CPU and four NVIDIA 1080 Ti GPUs.

\subsection{Hyper-parameters}
\paragraph{Determine hyper-parameters} 
We use a consistent data split strategy (\emph{i.e.,} 80/20\%  train/validation split), and set the batch size to 128 and the learning rate to 0.01 for all methods. We use Adam as the optimizer, ReLU as the activation function, and MLP as the readout function. The number of layers is set to 5 for the link prediction task and 3 for other tasks.
We use a grid search strategy to tune the embedding size of EDEN in $[2, 3, \cdots, 10]$.

The optimal embedding sizes of baseline methods on test datasets are as follows:
\begin{itemize}
    \item \textbf{Cora}: 32 for all MPNNs (\textit{i.e.}, GCN, GraphSAGE, GAT and GIN).
    \item \textbf{CiteSeer}: 32 for all MPNNs.
    \item \textbf{ENZYMES}: 128 for GraphSAGE, 256 for the other MPNNs.
    \item \textbf{PROTEINS (link prediction)}: 128 for all MPNNs.
    \item \textbf{MUTAG}: 256 for all MPNNs.
    \item \textbf{PROTEINS (graph classification)}: 128 for all MPNNs.
\end{itemize}

\subsection{Fair comparison}
We implement EDEN with Pytorch-Geometric \cite{pyg} and GraphGym \cite{gym}.
For a fair comparison, we use the official code of ID-GNN \cite{IDGNN} and manually fine-tune the hyper-parameter of feature dimension to achieve close accuracy reported by ID-GNN, because ID-GNN does not specify it in the original article. We re-implement the baselines in IDGNN's, and the maximum accuracy difference is less than $\pm0.01$. In addition, we choose the best results reported in related works. When implementing Laplacian PE \cite{benchmarking}, we keep the same settings as EDEN. Moreover, we choose the best results among the four GNN backbones.

\textbf{Task-specific settings.}
We adjust settings due to the different nature of tasks.
The MPNNs have three massage-passing layers for node and graph classification tasks, and they report the validation accuracy after 100 epochs.
The graph classification task needs an additional MLP for the readout operation.
We train models with five message-passing layers for the link prediction task, and report the ROC-AUC. 
The dimensions of EDEN and Laplacian PE are set to 8 for Cora and CiteSeer datasets, and 3 for other datasets.

\section{Limitations of Laplacian PE}
\label{app:lap}

\subsection{Laplacian PE can't go beyond 1-WL}
We perform similar visual results of other cases with Laplacian PE. As illustrated in Fig \ref{Fig.lapex1} to \ref{Fig.lap3}, \textbf{Laplacian PE fails in ALL cases, no matter how powerful the WL equivalence is.} Especially for Fig. \ref{Fig.lap1.1} and Fig. \ref{Fig.lap1.3}, just two nodes swapped (\#6 and \#7) in a 1-WL equivalent graph pair will result in great changes.
These failures behave the same as EDEN fails on the 3-WL equivalent example, because the eigenvalues of the Laplacian matrix have multiple roots. In essence, this means that the information contained in the Laplace matrix cannot exceed the 1-WL test.

\begin{figure*}
    \centering  
    \subfigure[$G$]{
        \label{Fig.lapex1.1}
        \includegraphics[width=0.3\textwidth]{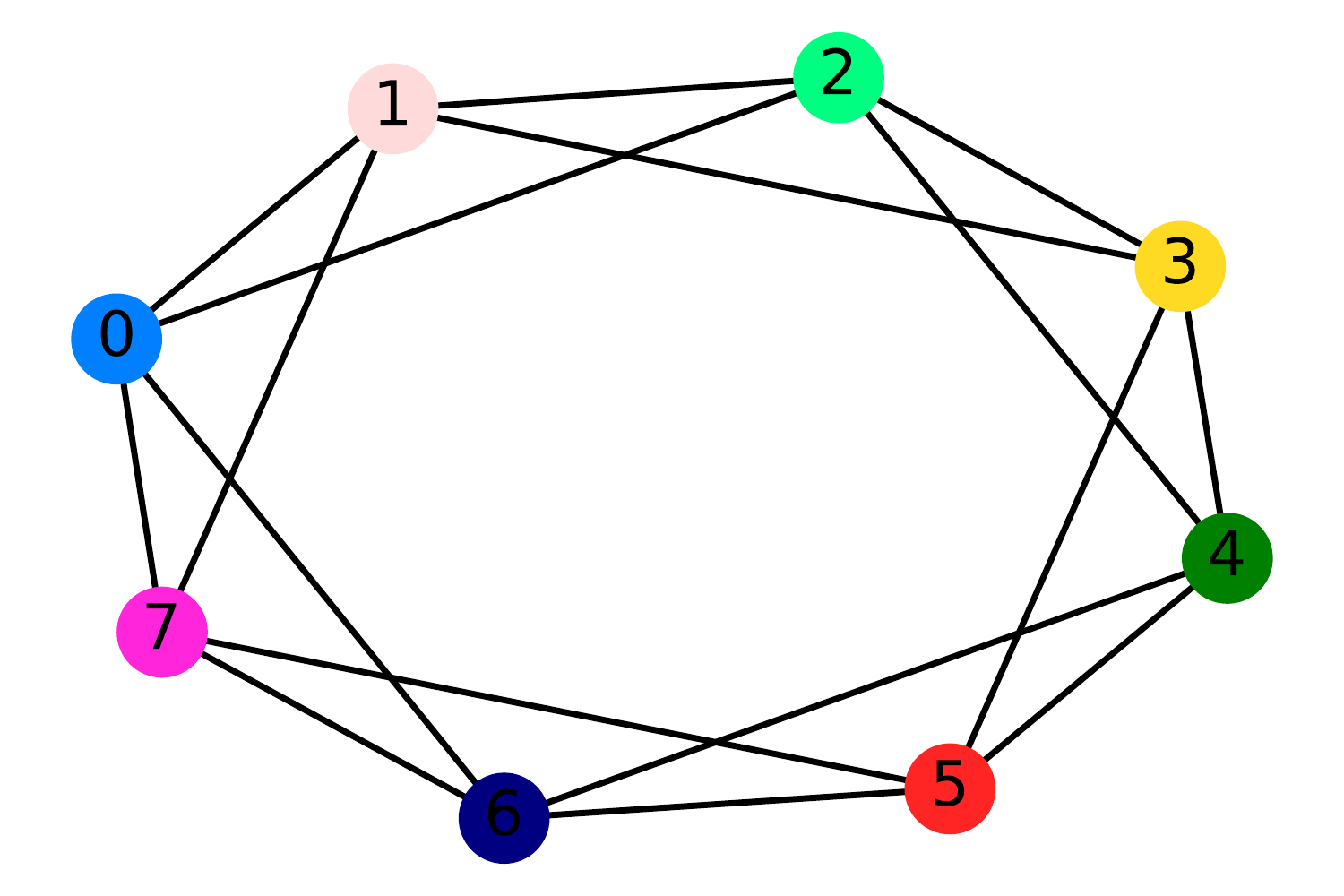}
        }
    \subfigure[$H$]{
        \label{Fig.lapex1.2}
        \includegraphics[width=0.3\textwidth]{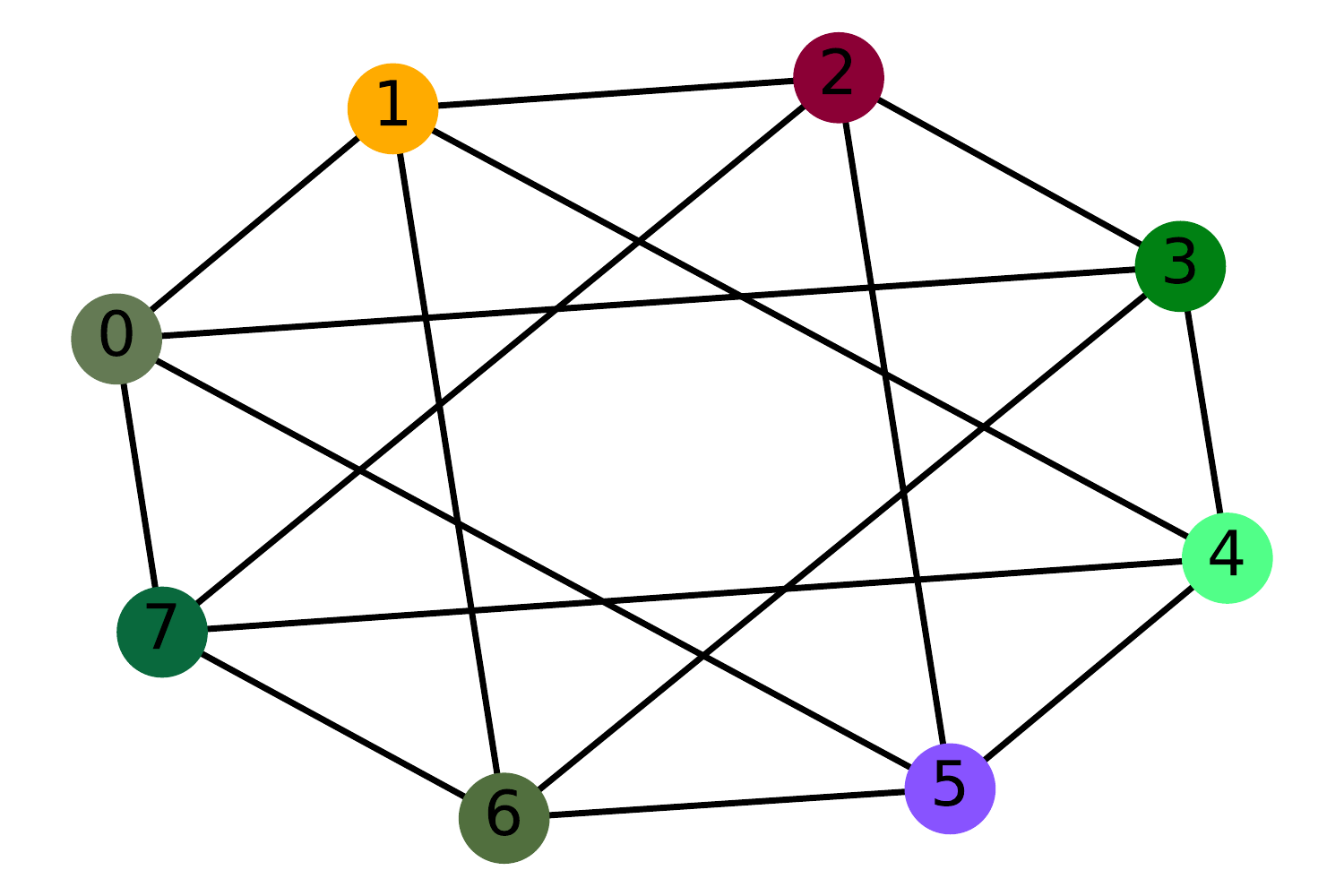}
        }
    \subfigure[$G'$]{
        \label{Fig.lapex1.3}
        \includegraphics[width=0.3\textwidth]{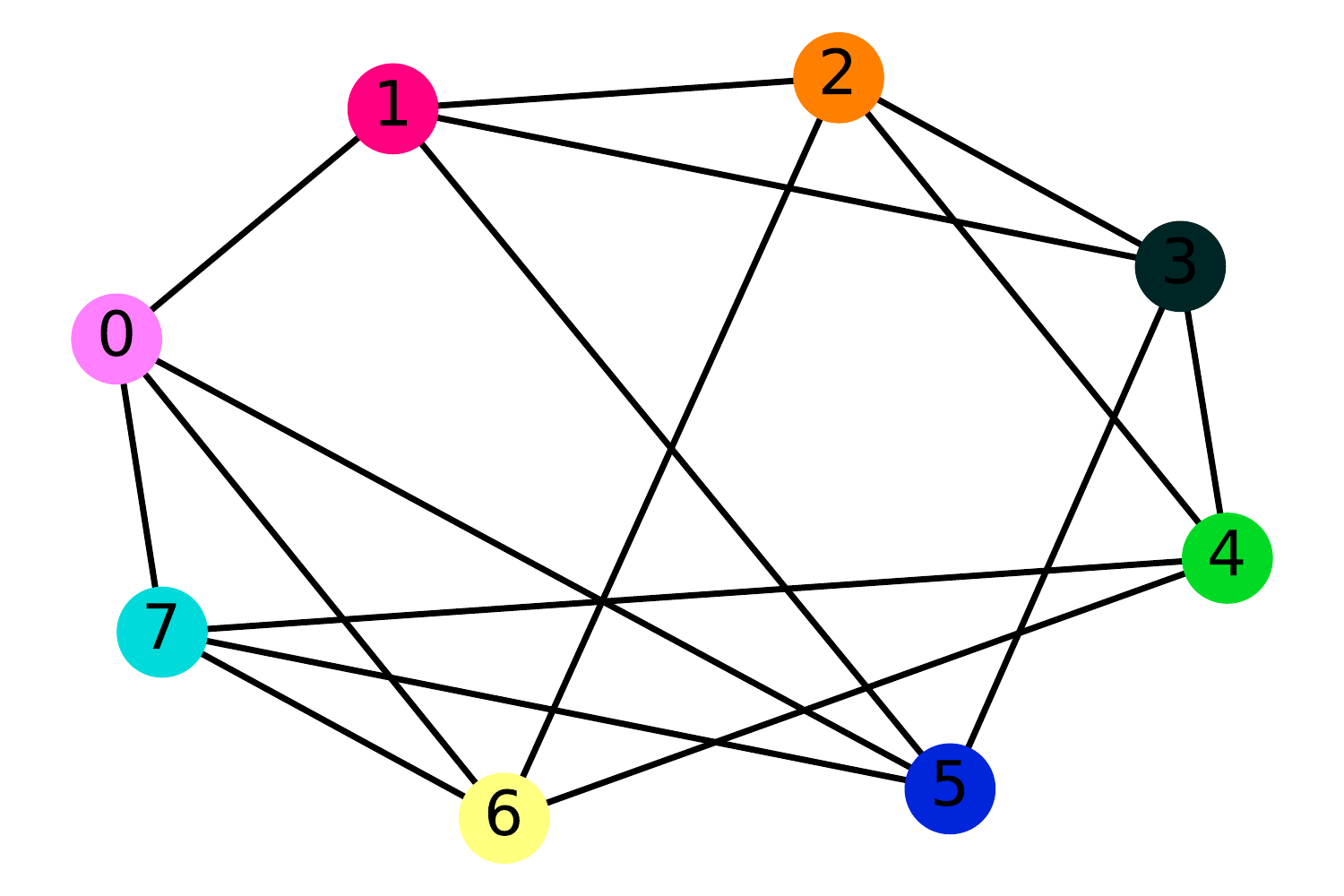}
        }
    \caption{Graph Laplacian failed to be deterministic in the case of regular graphs pair in Fig \ref{Fig.eden}.}
    \label{Fig.lapex1}
\end{figure*}

\begin{figure*}
    \centering  
    \subfigure[$G$]{
        \label{Fig.lap1.1}
        \includegraphics[width=0.3\textwidth]{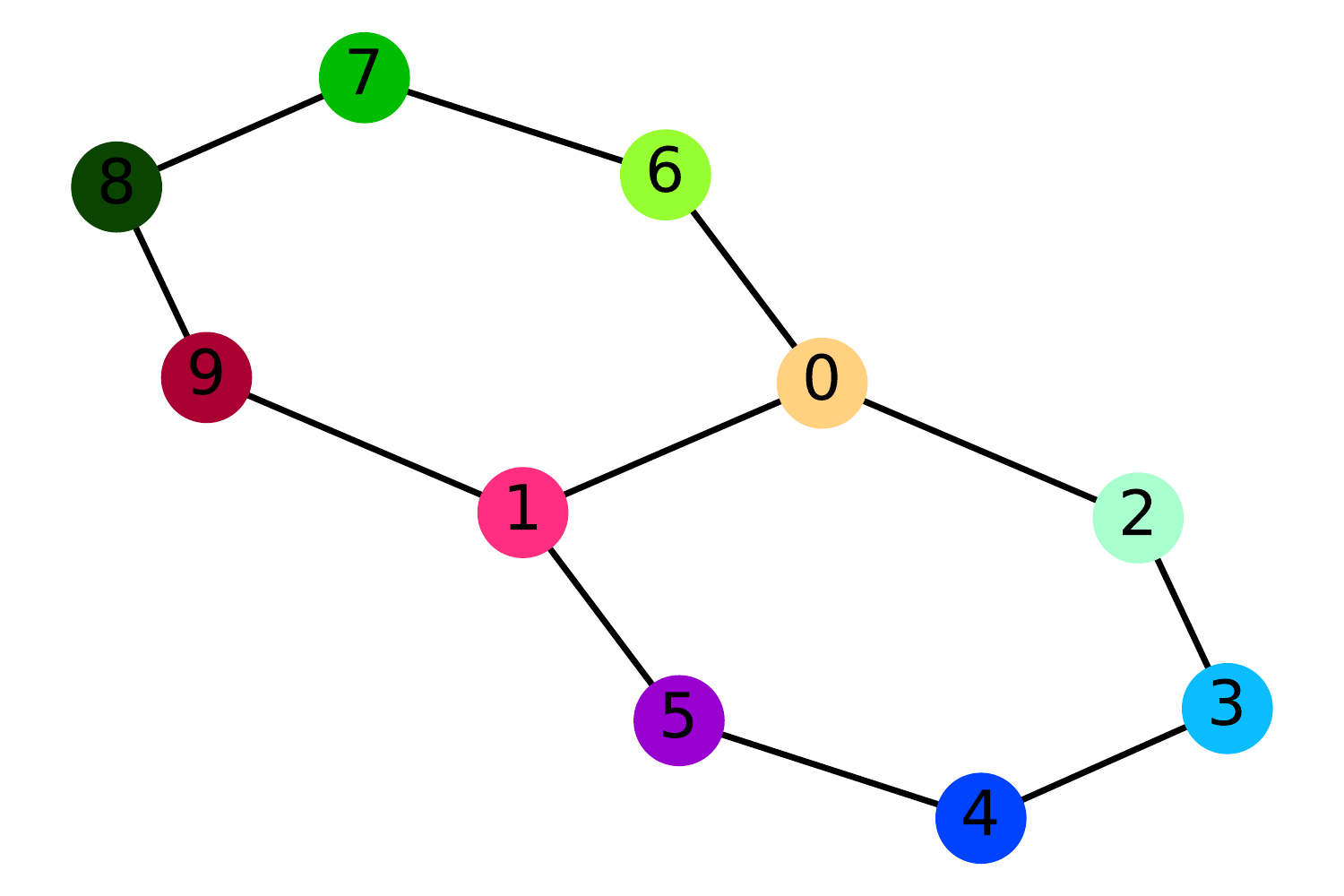}
        }
    \subfigure[$H$]{
        \label{Fig.lap1.2}
        \includegraphics[width=0.3\textwidth]{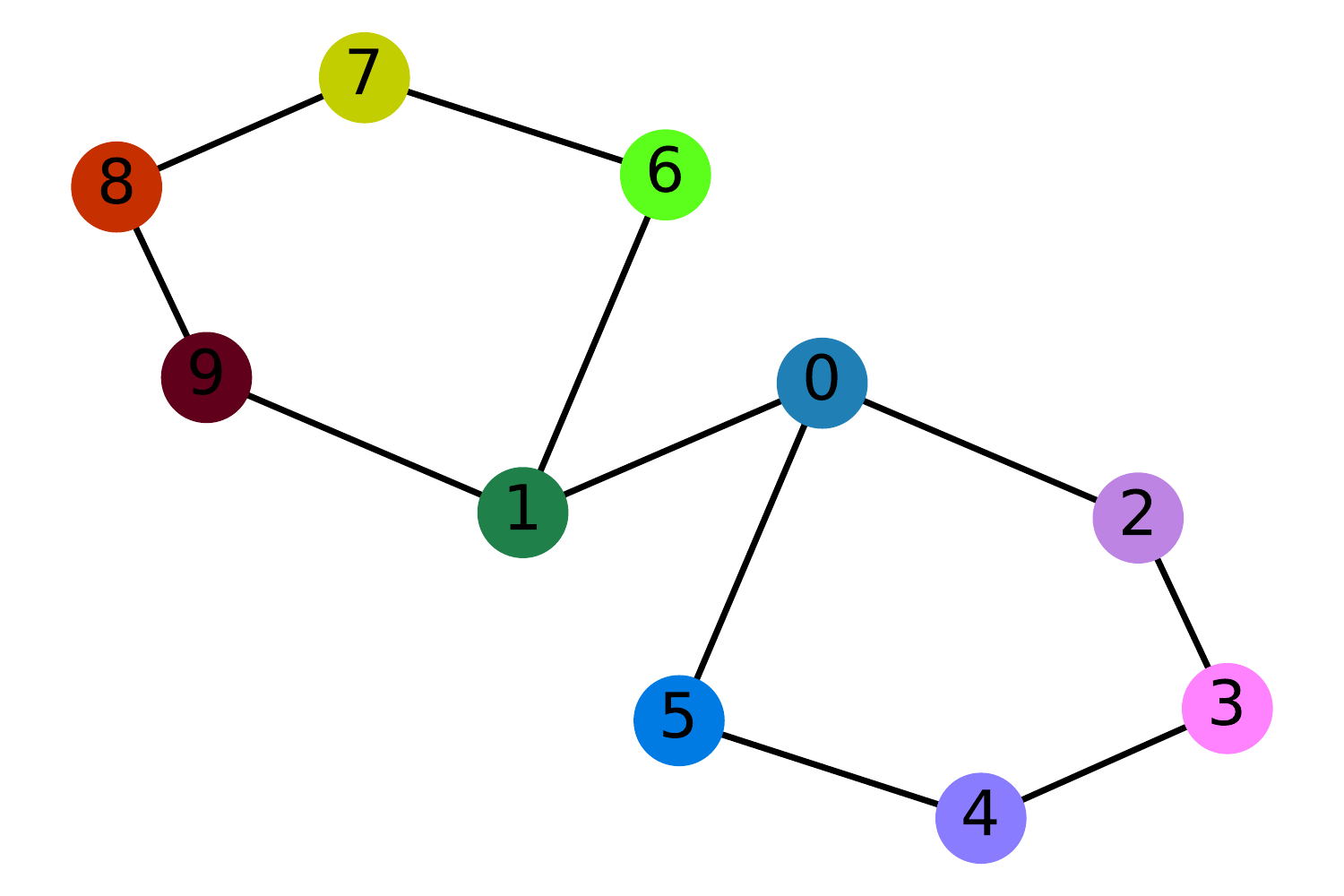}
        }
    \subfigure[$G'$]{
        \label{Fig.lap1.3}
        \includegraphics[width=0.3\textwidth]{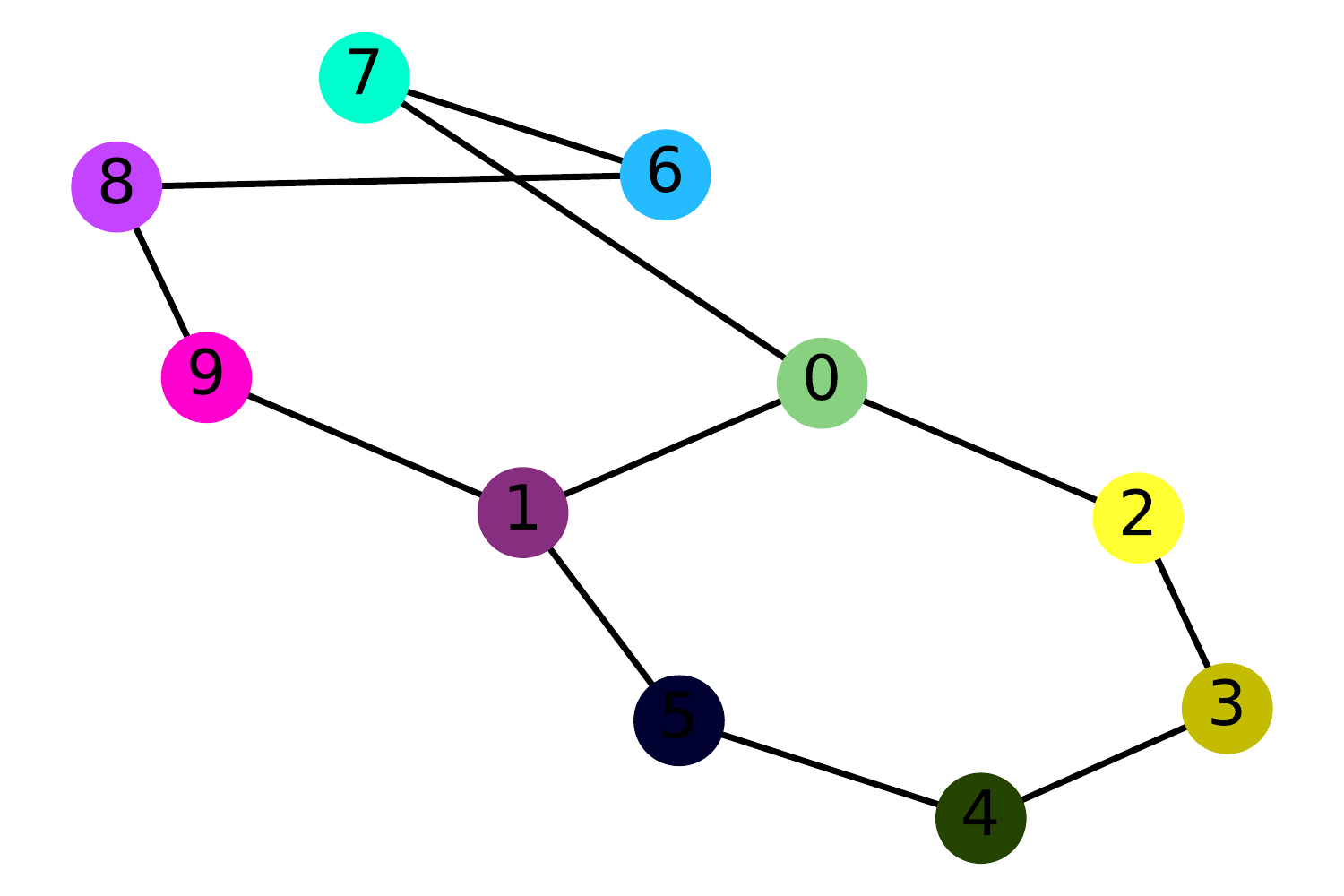}
        }
    \caption{Graph Laplacian failed to be deterministic in the case of 1-WL equivalent Decalin and Bicyclopentyl graphs. }
    \label{Fig.lap1}
\end{figure*}

\begin{figure*}
    \centering  
    \subfigure[$G$]{
        \label{Fig.lap2.1}
        \includegraphics[width=0.3\textwidth]{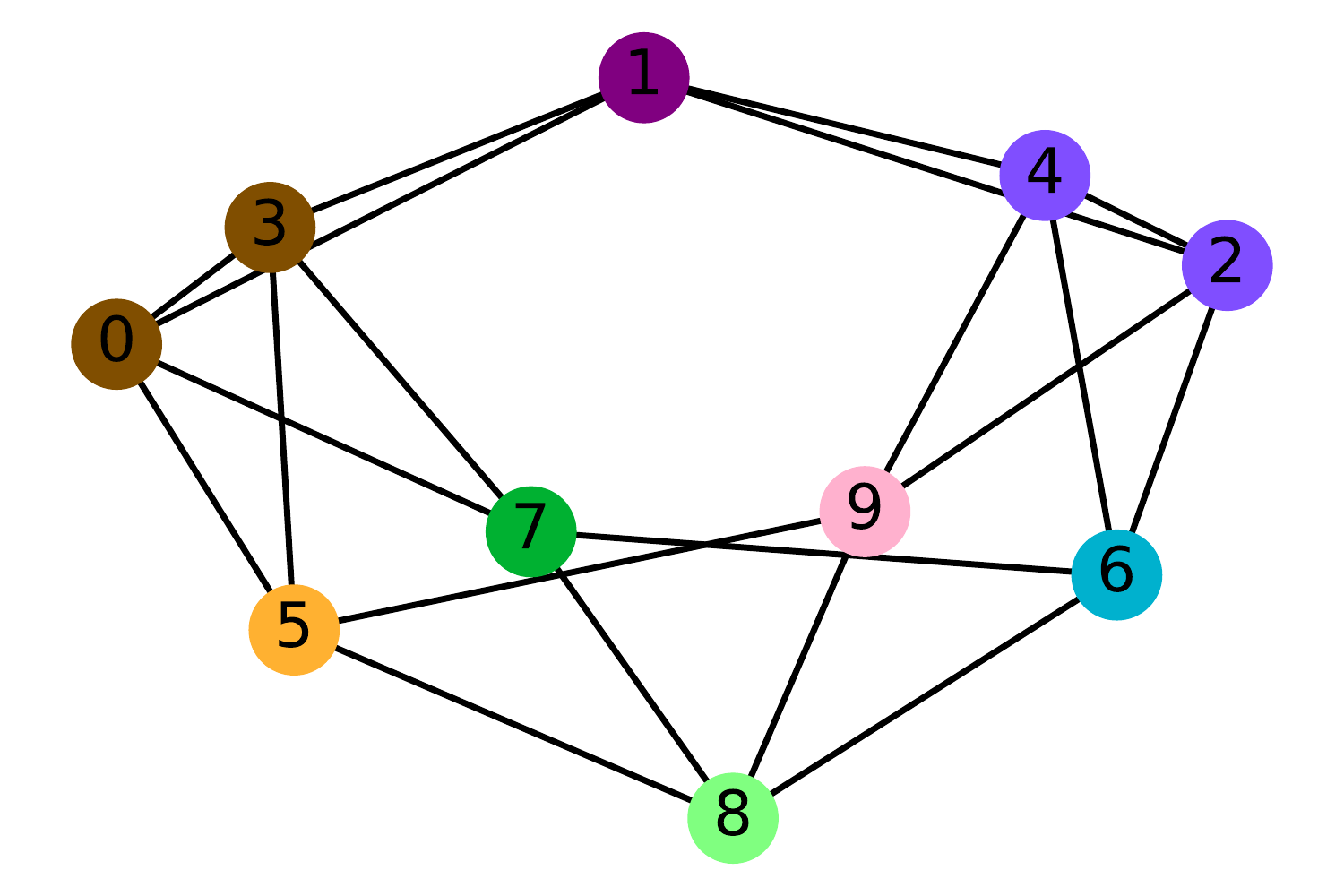}
        }
    \subfigure[$H$]{
        \label{Fig.lap2.2}
        \includegraphics[width=0.3\textwidth]{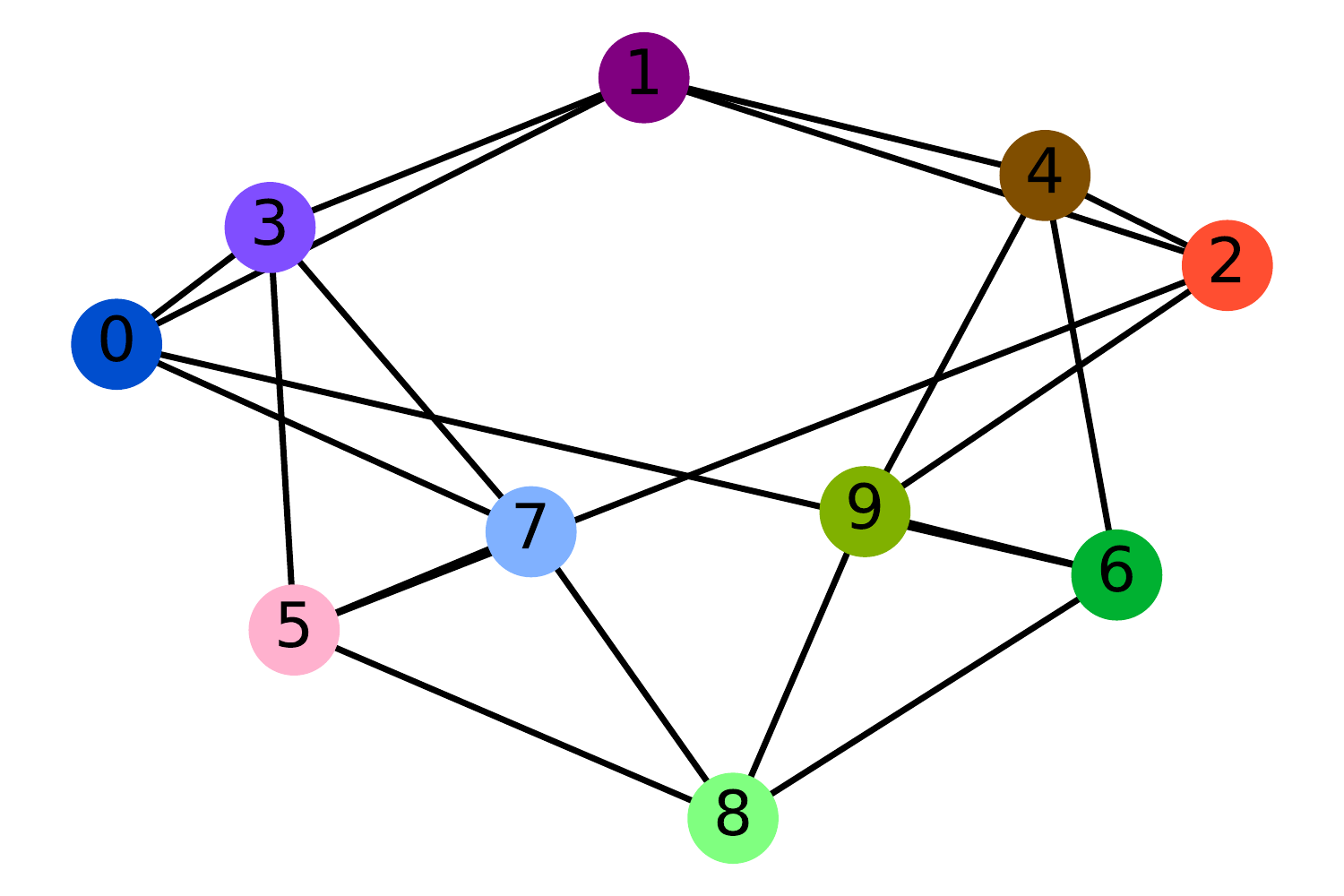}
        }
    \subfigure[$G'$]{
        \label{Fig.lap2.3}
        \includegraphics[width=0.3\textwidth]{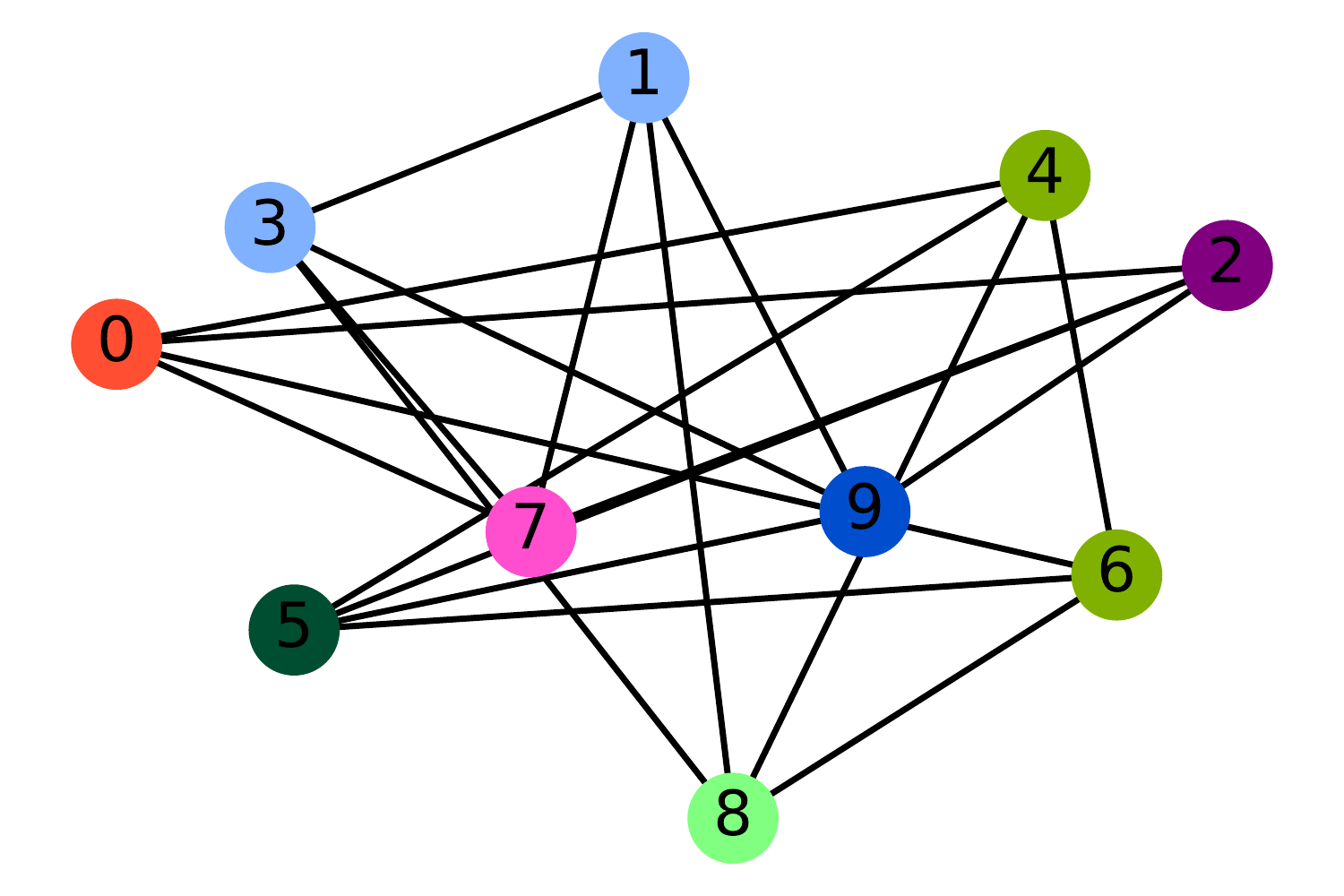}
        }
    \caption{Graph Laplacian failed to be deterministic in the case of 2-WL equivalent Cospectral and 4-regular graphs.}
    \label{Fig.lap2}
\end{figure*}

\begin{figure*}
    \centering  
    \subfigure[$G$]{
        \label{Fig.lap3.1}
        \includegraphics[width=0.3\textwidth]{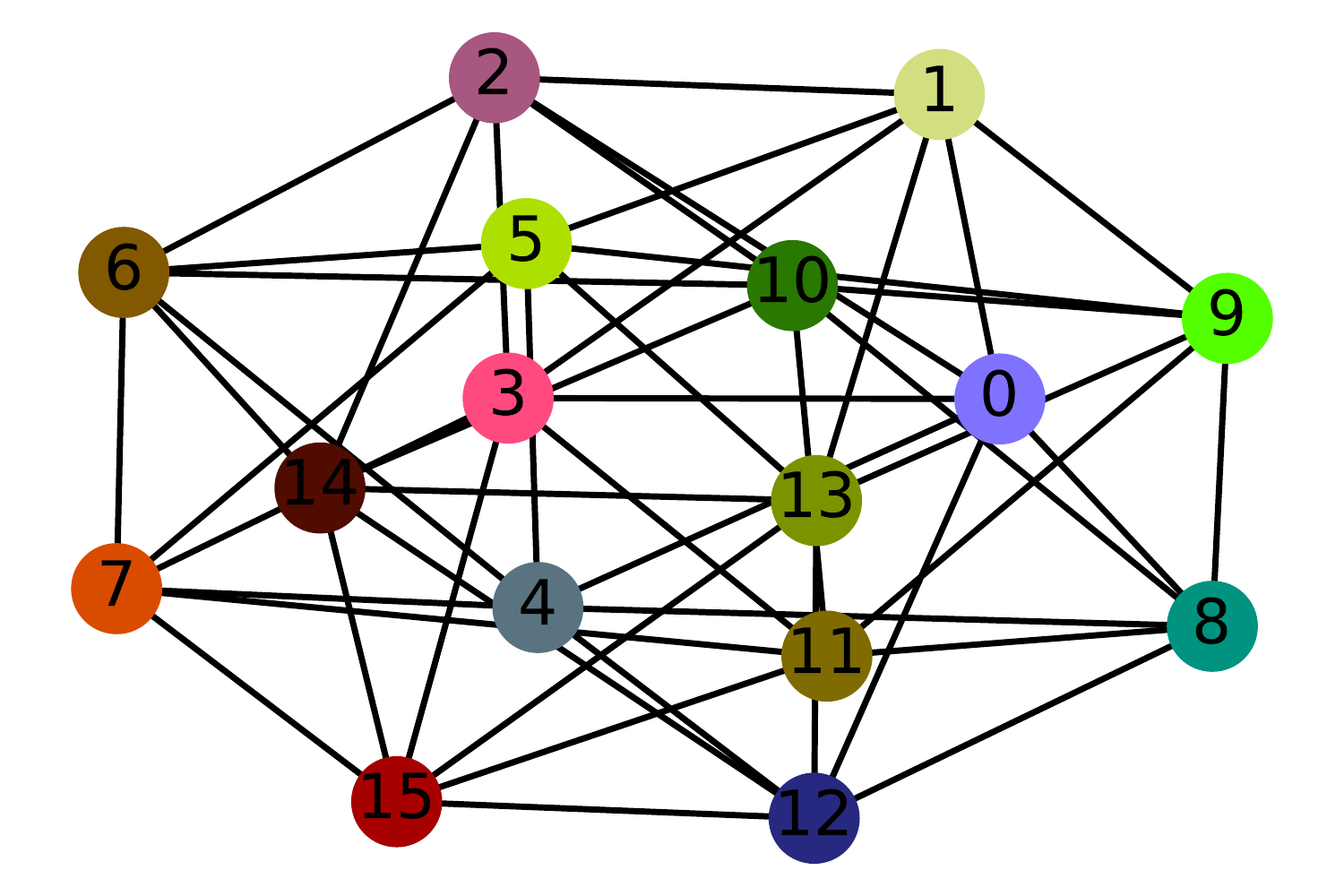}
        }
    \subfigure[$H$]{
        \label{Fig.lap3.2}
        \includegraphics[width=0.3\textwidth]{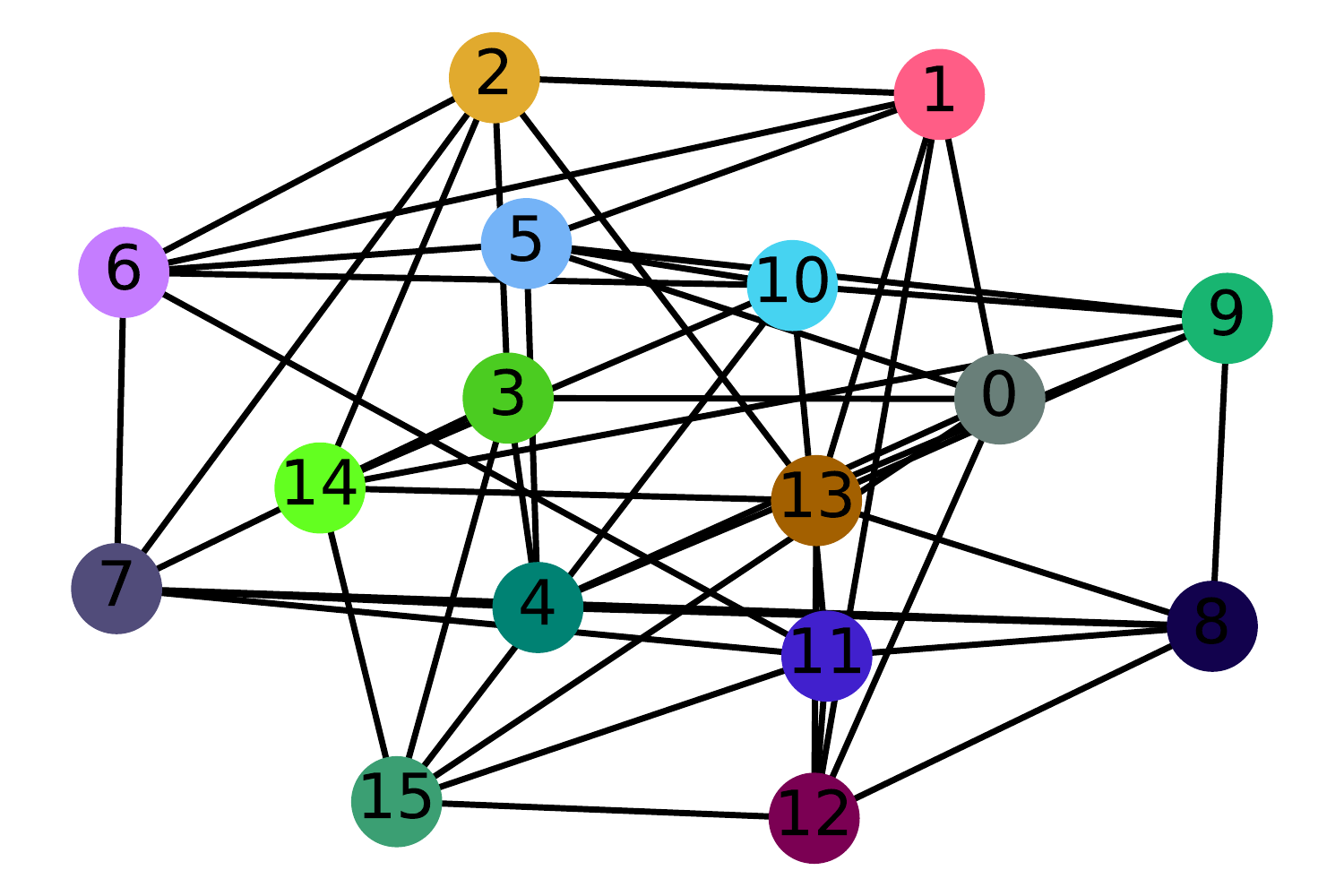}
        }
    \subfigure[$G'$]{
        \label{Fig.lap3.3}
        \includegraphics[width=0.3\textwidth]{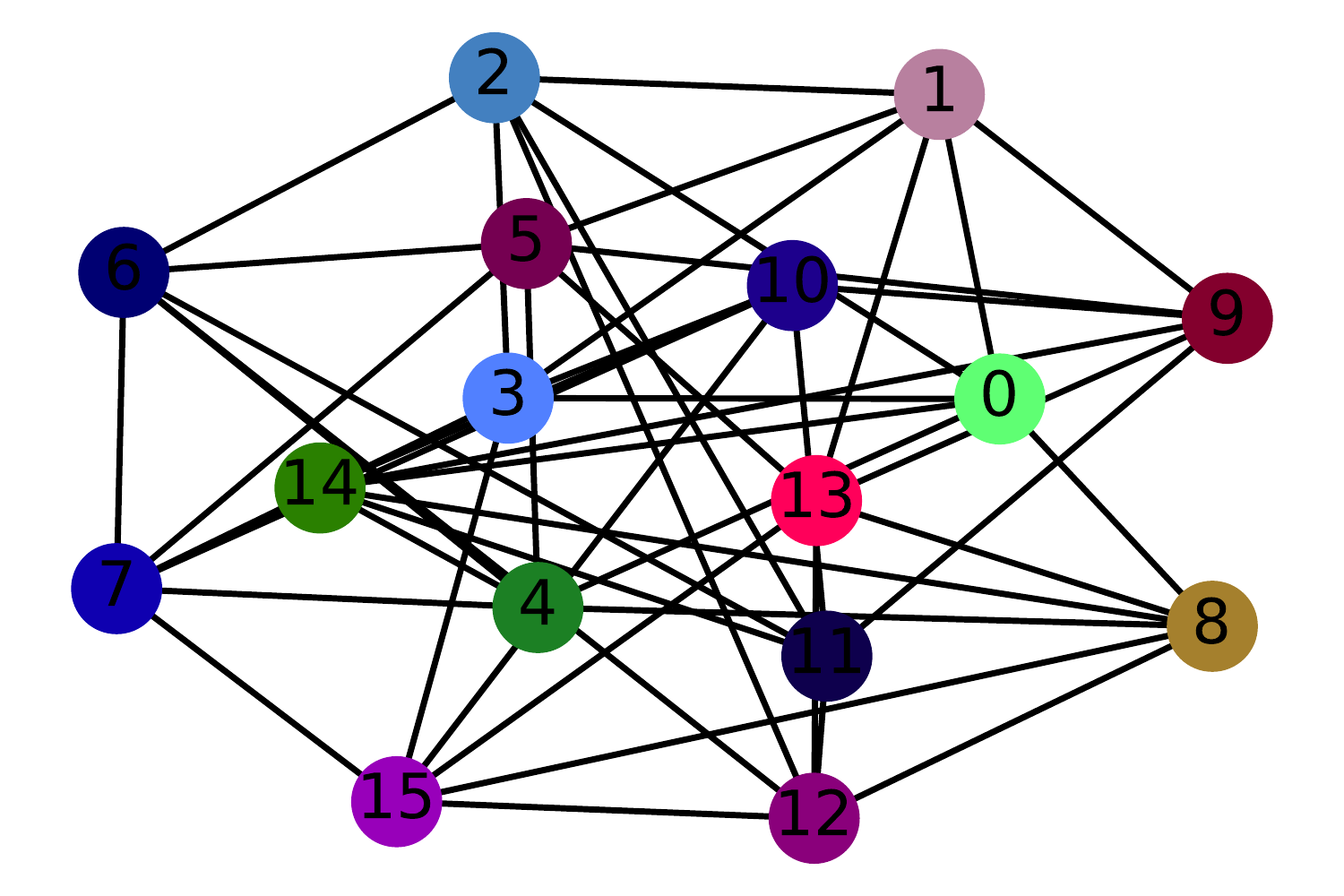}
        }
    \caption{Graph Laplacian failed to be deterministic in the case of 3-WL equivalent strong-regular graphs. }
    \label{Fig.lap3}
\end{figure*}

\subsection{Laplacian PEs are less meaningful in real-world graphs}
We plot Laplacian PE on real-word datasets in Fig. \ref{Fig.lap4} in the same way as Fig. \ref{Fig.ablation}. The layout of the nodes is somewhat different, but their IDs remain the same. We find that EDEN has more uniform and rich colors on the real-world dataset. It shows that EDEN maps nodes more evenly to a board Euclidean space. Nevertheless, the existing Laplace PEs, as shown in Fig. \ref{Fig.laprealmin}, are drabber in color (primarily gray or purple), which means that Laplacian PE maps most of the nodes into a small part of the Euclidean space. 
In addition, we take the decomposition results corresponding to the largest eigenvalues like EDEN. At this time, only the points with highest degrees have bright colors, and the colors of other nodes are almost indistinguishable (See Fig. \ref{Fig.laprealmax}). It is worth mentioning that all positional encodings are equally normalized to convert to RGB brightness.
We visualize the 2D EDEN and Laplacian PEs in Fig. \ref{Fig.lap5} without node labels. 
It can be observed that the results of EDEN are linearly separable according to actual node labels, but the Laplacian PE using minimal or maximal eigenvalues map all the nodes to a a small range in the Euclidean space.
In summary, the Laplace's PE is less meaningful than EDEN in real-world graphs.

\begin{figure*}
    \centering  
    \subfigure[EDEN]{
        \label{Fig.edenreal}
        \includegraphics[width=0.3\textwidth]{figures/katare_graph_random_color.pdf}
        }
    \subfigure[Minimal non-trivial eigenvalues]{
        \label{Fig.laprealmin}
        \includegraphics[width=0.3\textwidth]{figures/realword_PEmax.pdf}
        }
    \subfigure[Maximal non-trivial eigenvalues]{
        \label{Fig.laprealmax}
        \includegraphics[width=0.3\textwidth]{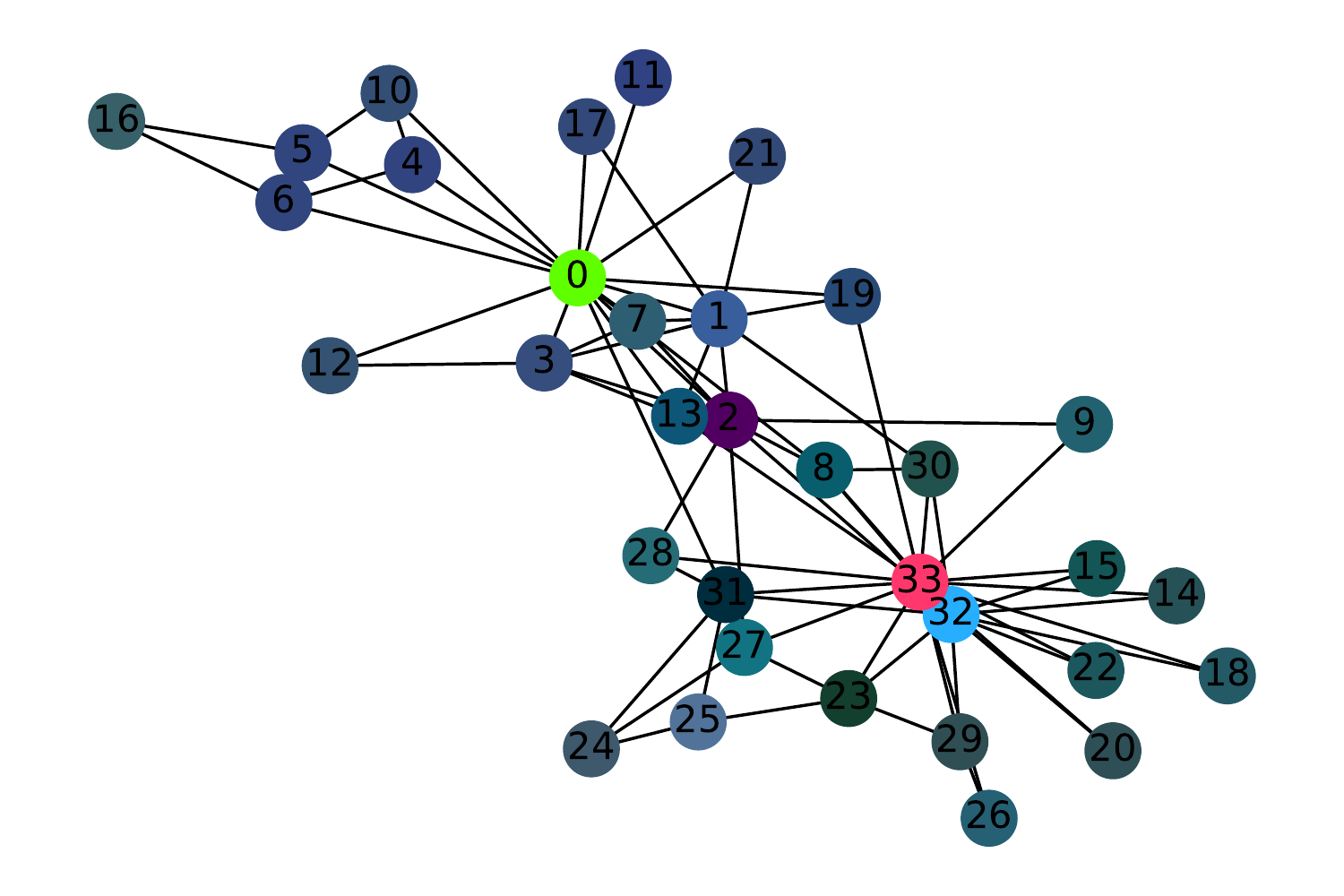}
        }
    \caption{Graph Laplacian based Position Encodings are too smoothing in real-world graphs. }
    \label{Fig.lap4}
\end{figure*}

\begin{figure*}
    \centering  
    \subfigure[2D EDEN \textbf{WITHOUT USING LABELS}.]{
        \label{Fig.edennb1}
        \includegraphics[width=0.265\textwidth]{figures/PE.pdf}
        }
    \subfigure[2D Laplacian PE with Minimal non-trivial eigenvalues]{
        \label{Fig.lapsb2}
        \includegraphics[width=0.3\textwidth]{figures/LAPEMIN.pdf}
        }
    \subfigure[2D Laplacian PE with Maximal eigenvalues]{
        \label{Fig.lapsb3}
        \includegraphics[width=0.29\textwidth]{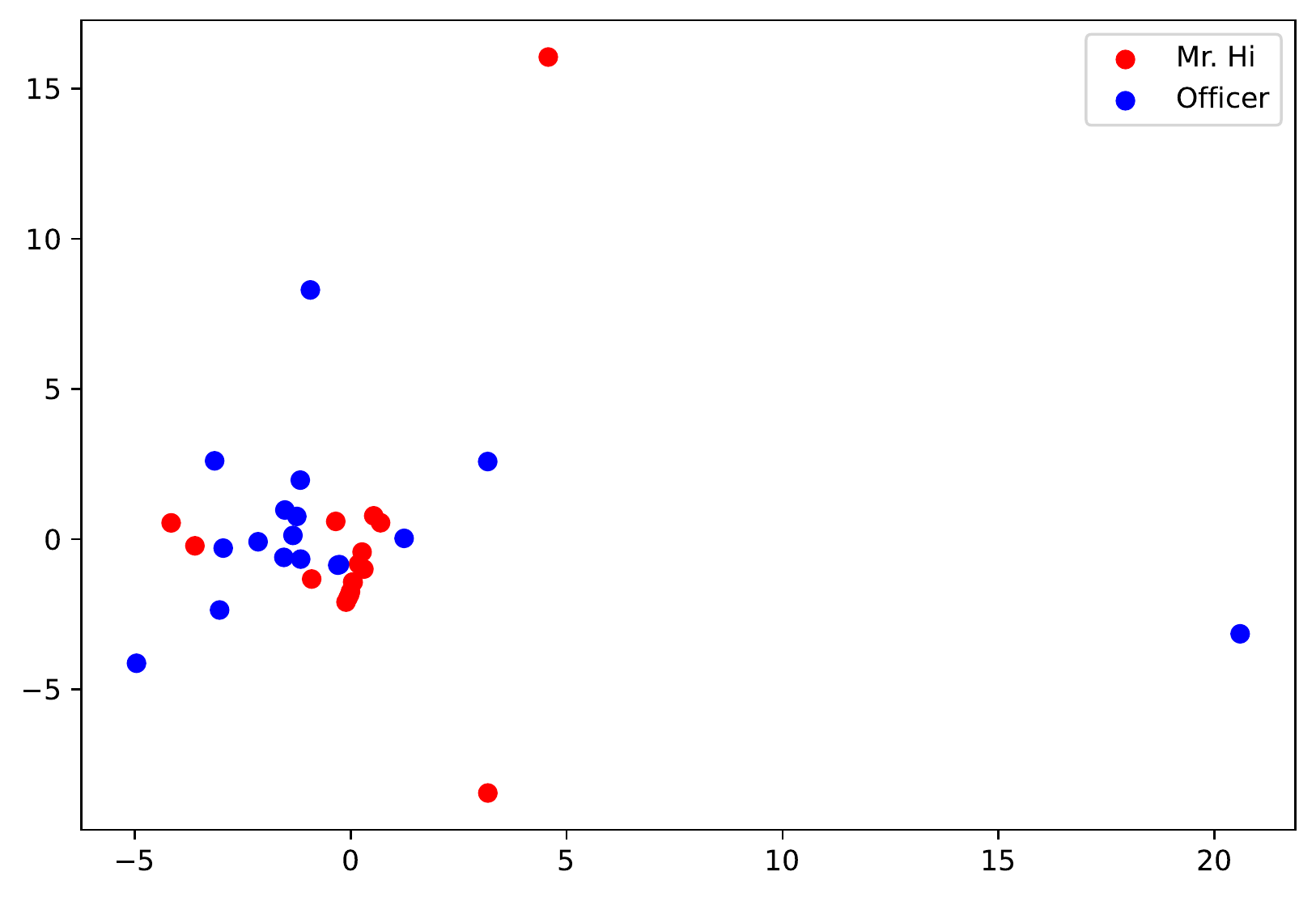}
        }
    \caption{Graph Laplacian based Position Encodings are less meaningful in real-world graphs. }
    \label{Fig.lap5}
\end{figure*}
\begin{table*}
 \centering
\begin{tabular}{cccccc}
\toprule
Cora      & Baseline        & Distance              & \begin{tabular}[c]{@{}c@{}}MinMax\\ Distance\end{tabular} & \begin{tabular}[c]{@{}c@{}}R-MinMax\\ Distance\end{tabular} & EDEN                     \\ \hline
GCN       & 0.864$\pm$0.019 & 0.869$\pm$0.021       & 0.866$\pm$0.020                                           & \textbf{0.879$\pm$0.019}                                    & 0.873$\pm$0.023          \\
GraphSAGE & 0.858$\pm$0.016 & 0.862$\pm$0.010       & 0.874$\pm$0.008                                           & 0.875$\pm$0.015                                             & \textbf{0.885$\pm$0.021} \\
GAT       & 0.859$\pm$0.028 & {\ul 0.855$\pm$0.013} & 0.863$\pm$0.009                                           & 0.863$\pm$0.009                                             & \textbf{0.870$\pm$0.019} \\
GIN       & 0.854$\pm$0.019 & 0.863$\pm$0.016       & 0.867$\pm$0.008                                           & 0.874$\pm$0.017                                             & \textbf{0.879$\pm$0.020} \\ \hline
Average   & 0.859           & 0.862                 & 0.868                                                     & 0.873                                                       & \textbf{0.877}                    \\ \bottomrule
\end{tabular}
\caption{Ablation study on Cora dataset, where the {\ul underline} denotes ``being even lower than the baseline'' and the \textbf{bold} means the best performance for a given MPNN (or the average of four MPNNs).}
\label{table:cora}
\end{table*}

\begin{table*}
 \centering
\begin{tabular}{cccccc}
\toprule
CiteSeer  & Baseline        & Distance              & \begin{tabular}[c]{@{}c@{}}MinMax\\ Distance\end{tabular} & \begin{tabular}[c]{@{}c@{}}R-MinMax\\ Distance\end{tabular} & EDEN                     \\ \hline
GCN       & 0.713$\pm$0.012 & 0.717$\pm$0.010       & 0.722$\pm$0.020                                           & 0.743$\pm$0.006                                             & \textbf{0.748$\pm$0.013} \\
GraphSAGE & 0.728$\pm$0.031 & 0.736$\pm$0.012       & 0.736$\pm$0.008                                           & 0.742$\pm$0.001                                             & \textbf{0.750$\pm$0.020} \\
GAT       & 0.721$\pm$0.052 & {\ul 0.719$\pm$0.011} & 0.721$\pm$0.009                                           & 0.733$\pm$0.008                                             & \textbf{0.734$\pm$0.020} \\
GIN       & 0.716$\pm$0.018 & {\ul 0.709$\pm$0.007} & 0.718$\pm$0.007                                           & \textbf{0.729$\pm$0.009}                                    & 0.722$\pm$0.023          \\ \hline
Average   & 0.720           & 0.720                 & 0.724                                                     & \textbf{0.740}                                       & 0.739                    \\ \bottomrule
\end{tabular}
\caption{Ablation study on CiteSeer dataset, where the {\ul underline} denotes ``being even lower than the baseline'' and the \textbf{bold} means the best performance for a given MPNN (or the average of four MPNNs).}
\end{table*}

\begin{table*}
 \centering
\begin{tabular}{cccccc}
\toprule
ENZYMES   & Baseline        & Distance        & \begin{tabular}[c]{@{}c@{}}MinMax\\ Distance\end{tabular} & \begin{tabular}[c]{@{}c@{}}R-MinMax\\ Distance\end{tabular} & EDEN                     \\ \hline
GCN       & 0.657$\pm$0.004 & 0.834$\pm$0.004 & 0.852$\pm$0.007                                           & 0.862$\pm$0.004                                             & \textbf{0.869$\pm$0.001} \\
GraphSAGE & 0.643$\pm$0.003 & 0.871$\pm$0.013 & 0.871$\pm$0.004                                           & 0.880$\pm$0.002                                             & \textbf{0.887$\pm$0.002} \\
GAT       & 0.613$\pm$0.008 & 0.862$\pm$0.009 & 0.868$\pm$0.004                                           & 0.868$\pm$0.004                                             & \textbf{0.876$\pm$0.002} \\
GIN       & 0.663$\pm$0.006 & 0.857$\pm$0.004 & 0.855$\pm$0.002                                           & 0.871$\pm$0.005                                             & \textbf{0.878$\pm$0.001} \\ \hline
Average   & 0.644           & 0.856           & 0.862                                                     & 0.871                                                       & \textbf{0.878}           \\ \bottomrule
\end{tabular}
\caption{Ablation study on ENZYMES dataset, where the {\ul underline} denotes ``being even lower than the baseline'' and the \textbf{bold} means the best performance for a given MPNN (or the average of four MPNNs).}
\end{table*}

\begin{table*}
 \centering
\begin{tabular}{cccccc}
\toprule
\begin{tabular}[c]{@{}c@{}}PROTEINS\\ Edge\end{tabular} & Baseline        & Distance                       & \begin{tabular}[c]{@{}c@{}}MinMax\\ Distance\end{tabular} & \begin{tabular}[c]{@{}c@{}}R-MinMax\\ Distance\end{tabular} & EDEN                     \\ \hline
GCN                                                     & 0.613$\pm$0.023 & 0.817$\pm$0.149                & 0.873$\pm$0.028                                           & 0.876$\pm$0.003                                             & \textbf{0.879$\pm$0.001} \\
GraphSAGE                                               & 0.658$\pm$0.002 & 0.790$\pm$0.172                & 0.882$\pm$0.003                                           & 0.884$\pm$0.003                                             & \textbf{0.917$\pm$0.001} \\
GAT                                                     & 0.610$\pm$0.002 & 0.783$\pm$0.179                & 0.876$\pm$0.003                                           & \textbf{0.886$\pm$0.004}                                    & 0.881$\pm$0.001          \\
GIN                                                     & 0.614$\pm$0.015 & 0.791$\pm$0.004 & 0.853$\pm$0.004                                           & 0.872$\pm$0.003                                             & \textbf{0.874$\pm$0.003} \\ \hline
Average                                                 & 0.624           & 0.795                          & 0.871                                                     & 0.880                                                       & \textbf{0.888}           \\ \bottomrule
\end{tabular}
\caption{Ablation study on PROTEINS dataset (link prediction task), where the {\ul underline} denotes ``being even lower than the baseline'' and the \textbf{bold} means the best performance for a given MPNN (or the average of four MPNNs).}
\end{table*}

\begin{table*}
 \centering
\begin{tabular}{cccccc}
\toprule
MUTAG     & Baseline        & Distance                       & \begin{tabular}[c]{@{}c@{}}MinMax\\ Distance\end{tabular} & \begin{tabular}[c]{@{}c@{}}R-MinMax\\ Distance\end{tabular} & EDEN                     \\ \hline
GCN       & 0.837$\pm$0.027 & 0.855$\pm$0.149                & 0.873$\pm$0.028                                           & 0.911$\pm$0.003                                             & \textbf{0.926$\pm$0.031} \\
GraphSAGE & 0.842$\pm$0.050 & 0.845$\pm$0.172                & 0.882$\pm$0.003                                           & 0.928$\pm$0.078                                             & \textbf{0.932$\pm$0.054} \\
GAT       & 0.811$\pm$0.031 & 0.837$\pm$0.179                & 0.876$\pm$0.003                                           & \textbf{0.911$\pm$0.051}                                    & \textbf{0.911$\pm$0.027} \\
GIN       & 0.847$\pm$0.087 & 0.866$\pm$0.028 & 0.853$\pm$0.004                                           & 0.915$\pm$0.055                                             & \textbf{0.937$\pm$0.086} \\ \hline
Average   & 0.834           & 0.851                          & 0.871                                                     & 0.916                                                       & \textbf{0.927}           \\ \bottomrule
\end{tabular}
\caption{Ablation study on MUTAG dataset, where the {\ul underline} denotes ``being even lower than the baseline'' and the \textbf{bold} means the best performance for given MPNN (or the average of four MPNNs).}
\end{table*}

\begin{table*}
 \centering
\begin{tabular}{cccccc}
\toprule
\begin{tabular}[c]{@{}c@{}}PROTEINS\\ Graph\end{tabular} & Baseline        & Distance                       & \begin{tabular}[c]{@{}c@{}}MinMax\\ Distance\end{tabular} & \begin{tabular}[c]{@{}c@{}}R-MinMax\\ Distance\end{tabular} & EDEN                     \\ \hline
GCN                                                      & 0.704$\pm$0.008 & 0.718$\pm$0.043                & 0.776$\pm$0.018                                           & \textbf{0.786$\pm$0.041}                                    & 0.781$\pm$0.014          \\
GraphSAGE                                                & 0.714$\pm$0.042 & 0.721$\pm$0.009                & 0.739$\pm$0.171                                           & 0.723$\pm$0.050                                             & \textbf{0.755$\pm$0.019} \\
GAT                                                      & 0.726$\pm$0.009 & {\ul 0.723$\pm$0.009}          & 0.757$\pm$0.022                                           & 0.775$\pm$0.011                                             & \textbf{0.788$\pm$0.013} \\
GIN                                                      & 0.718$\pm$0.012 & 0.720$\pm$0.018 & 0.735$\pm$0.012                                           & 0.723$\pm$0.024                                             & \textbf{0.744$\pm$0.076} \\ \hline
Average                                                  & 0.716           & 0.721                          & 0.752                                                     & 0.752                                                       & \textbf{0.767}           \\ \bottomrule
\end{tabular}
\caption{Ablation study on PROTEINS dataset (graph classification task), where the {\ul underline} denotes ``being even lower than the baseline'' and the \textbf{bold} means the best performance for a given MPNN (or the average of four MPNNs).}
\label{table:gproteins}
\end{table*}


\section{The Advantages of Cosine Phase Propagation}
\label{app:phase}
 EDEN is inspired by the sinusoidal positional encoding in Transformer \cite{transformer} and distance-based GNNs \cite{DEGNN, PGNN}. Transformer states that a set of trigonometric functions can directly characterize positions. The distance-based GNNs show that the distance can measure the relationship between two nodes, and the strength of the relationship should be inversely related to the distance. Therefore, the motivation of EDEN is straightforward. We want to use a set of trigonometric functions to represent the position of each node. This representation is based on the distance between each pair of nodes. The longer the distance, the weaker the relationship. The unreachable situation between two nodes in the graph should be considered, 
 so we manually set a fixed value for them.
 Taking all of the above into consideration, it ends up being EDEN.
 Furthermore, the input features are preferably normalized real vectors for GNNs. The unweighted graph distance matrix consists of integers, and its norm can be large, which is not conducive to model training. We propose three sets of distance-based node encodings 
 for the ablation study to test our viewpoint, namely, direct distance, Min-Max normalized distance, and Reversed Min-max normalized distance. 

We implement these three encodings in real-world datasets and compare them with EDEN. Results are shown in Tab. \ref{table:cora} to \ref{table:gproteins}.
The experimental results show that directly using the distance as an augmented feature has the least contribution to the MPNNs and may even have negative effects. Normalizing the distance matrix can improve the performance of the models, but it is often not as good as reversed normalization. On the other hand, reversed Min-Max normalized node encoding performs well on all tasks. In some cases, it can achieve better results than EDEN, but on average, EDEN with phase propagation achieves the best results.

\end{document}